\author{\name Danny Wood \email danny.wood@manchester.ac.uk$^\dagger$ \\
\name Tingting Mu \email tingting.mu@manchester.ac.uk$^\dagger$ \\
\name Andrew M. Webb \email andrew.webb@manchester.ac.uk$^\dagger$ \\
\name Henry W. J. Reeve \email henry.reeve@bristol.ac.uk$^\ast$ \\
\name Mikel Luj{\'a}n \email mikel.lujan@manchester.ac.uk$^\dagger$ \\
\name Gavin Brown \email gavin.brown@manchester.ac.uk$^\dagger$\\
\addr $^\dagger$ Department of Computer Science, University of Manchester, UK\\
\addr $\ast$ School of Mathematics, University of Bristol, UK\\
}
\pgfplotsset{compat=1.16} 
\DeclarePairedDelimiterX{\KLx}[2]{(}{)}{%
#1\,\delimsize\|\,#2%
}
\newcommand{\KL}[2]{K(#1 \mid\mid #2)}
\newcommand\centroid[1]{\accentset{\circ}{#1}}
\newcommand*\mystrut[1]{\vrule width0pt height0pt depth#1\relax} 
\newcommand{\Ex}{\mathbb{E}_{{\bf X}}}
\newcommand{\Exy}{\mathbb{E}_{{\bf X}Y}}
\newcommand{\EXY}{\mathbb{E}_{\vectorX\vectorY}}
\newcommand{\EXYb}[1]{\mathbb{E}_{\mathbf{X}\mathbf{Y}}\left[ #1 \right]}
\newcommand{\EYsb}[1]{\mathbb{E}_{Y}\left[ #1 \right]}
\newcommand{\ED}{\mathbb{E}_{\scriptstyle D}}
\newcommand{\EDb}[1]{\mathbb{E}_{\scriptstyle D}\left[#1 \right]}
\DeclareMathOperator*{\argmin}{\arg\min}
\newcommand{\bregmanDomain}{\mathcal{Y}}
\newcommand*\diff{\mathop{}\!\mathrm{d}}
\newcommand\defeq{\mathrel{\overset{\makebox[0pt]{\mbox{\normalfont\tiny\sffamily def}}}{=}}}
\newcommand{\qbar}{\overline{q}}
\newcommand{\vectorq}{\mathbf{q}}
\newcommand{\vectorqbar}{\mathbf{\overline{\vectorq}}}
\newcommand{\vectorqtilde}{\mathbf{\widetilde{\vectorq}}}
\newcommand{\vectorp}{\mathbf{p}}
\newcommand{\vectorptilde}{\mathbf{\widetilde{\vectorp}}}
\newcommand{\vectory}{\mathbf{y}}
\newcommand{\vectorx}{\mathbf{x}}
\newcommand{\vectorz}{\mathbf{z}}
\newcommand{\vectorY}{\mathbf{Y}}
\newcommand{\vectorX}{\mathbf{X}}
\newcommand{\vectoreta}{\mathbf{\eta}}
\newcommand{\ystar}{y^*}
\newcommand{\Ystar}{Y^*}
\newcommand{\vectorystar}{\vectory^*}
\newcommand{\vectorYstar}{\vectorY^*}
\newcommand\averagei{ \frac{1}{m}\sum_{i=1}^m }
\newcommand{\Lzeroone}{\ell_{0/1}}
\newcommand{\generator}[1]{\ensuremath{\phi\left(#1\right)}}
\newcommand{\bregman}[2]{\ensuremath{\generator{#1} - \generator{#2} - \langle\nabla\generator{#2},  ( #1-#2 ) \rangle}}
\newcommand{\gradinv}[1]{\left[\nabla \phi\right]^{-1}\left(#1\right)}
\newcommand{\boldeta}{\ensuremath{\boldsymbol{\eta}}}
\newcommand{\gen}[0]{\ensuremath{\phi}}
\newcommand{\Bregman}[2]{\ensuremath{B_{\gen}\left(#1, #2\right)}}
\newcommand{\BregmanGen}[3]{\ensuremath{B_{#1}\left(#2, #3\right)}}
\newcommand{\relativeinterior}{\textnormal{\textrm{ri}}}
\newcommand{\bias}{\ensuremath{\overline{\textnormal{bias}}}}
\newcommand{\variance}{\ensuremath{\overline{\textnormal{variance}}}}
\newcommand{\sign}{\textnormal{sign}}
\pgfplotsset{compat=1.16}
\begin{document}

\title{A Unified Theory of Diversity in Ensemble Learning}

\editor{Boaz Nadler}

\maketitle

\begin{abstract}%
We present a theory of ensemble diversity, explaining the nature of diversity for a wide range of supervised learning scenarios.
This challenge 
has been referred to as the “holy grail” of ensemble learning, an open research issue for over 30 years.
Our framework reveals that diversity is in fact a {\em hidden dimension} in the bias-variance decomposition of the ensemble loss.
We prove a family of {\em exact} bias-variance-diversity decompositions, for a wide range of losses in both  regression and classification,
e.g., squared, cross-entropy, and Poisson losses. 
%
%
For losses where an additive bias-variance decomposition is not available (e.g., 0/1 loss) we present an alternative approach: quantifying the {\em effects} of diversity, which turn out to be dependent on the label distribution.
Overall, {\em we argue that diversity is
a measure of model fit, in precisely
the same sense as {bias} and {variance}}, but accounting for statistical dependencies {\em between} ensemble members. Thus, we should not be `maximising diversity' as so many works aim to do---instead, we have a bias/variance/diversity trade-off to manage.

\end{abstract}

\begin{keywords}
ensembles, diversity, bias, variance
\end{keywords}

\section{Introduction}
\label{submission}

Ensemble methods have enabled state-of-the-art results for decades: from early industrial computer vision \citep{viola2001rapid} to the deep learning revolution \citep{krizhevsky2012imagenet}, and inter-disciplinary applications \citep{cao2020ensemble}.  
An accepted mantra is that ensembles work best when the individuals have a ``diversity'' of predictions---often induced by classical methods such as Bagging \citep{breiman1996bagging}, but diversity-encouraging heuristics are rife in the literature \citep{Brown2005Diversity}.
Given this, we trust that the ensemble will ``average out'' the errors of the individuals.
One reason for the popularity of such methods is clear: the very {\em idea} of ensembles is an appealing anthropomorphism, invoking analogies to human committees, and ``wisdom of the crowds''.  
However, such analogies have limitations. More formal approaches have been pursued, in particular for {\em quantifying} diversity.
It is obvious that we do not want all predictions to be identical;
and, it is equally obvious we do not want them to be different just for the sake of it, sacrificing overall performance.
We want something in-between these two---the so-called accuracy/diversity trade-off.
However, here we encounter the problem of formally defining ``diversity'' and its relation to ensemble performance.
{\em In general, there is no  agreement on how to quantify diversity}, except in the limited case of regression with an arithmetic mean ensemble \citep{Krogh1994Neural,ueda1996generalization}.
For classification and other scenarios, there are dozens of proposed diversity measures \citep{kuncheva2014}. A~comprehensive theory of ensemble diversity has been an open problem for over 30 years.\\ 

\paragraph{Motivation:}
Our primary motivation is to fill this `gap' in current ensemble theory, providing a solid foundation to understand and study ensemble diversity. However, there are also many practical reasons to pursue this.
Diverse ensembles can be more computationally efficient than single large models, with the {\em same} generalisation performance \citep{kondratyuk2020ensembling}.
Diverse ensembles are robust against adversarial attacks \citep{biggio2011bagging, pang2019improving}, and can counteract covariate shift \citep{sinha2020dibs}.
Advantages are also found in important application areas \citep{cao2020ensemble} and well beyond supervised learning \citep{carreira2016ensemble}.
It is important to note that these use-cases {\em do not follow a common approach}: they either adopt some measurement of diversity picked from historical literature, or propose their own novel metric.
There is, therefore, reason to pursue a unified theory, where diversity is derived from first principles. 

This challenge has proven non-trivial: surveys of progress can be found in \citet{dietterich2000ensemble,Brown2005Diversity,zhou2012ensemble}, and \cite{kuncheva2014}. 
Diversity is nowadays referred to as a heuristic with no precise definition,
and, it has been said:

\begin{quote}
{\em ``There is no doubt that understanding diversity is the holy grail in the field of ensemble learning''} \citep[Sec 5.1, page 100]{zhou2012ensemble}.\\
\end{quote}

\paragraph{Summary of our Results:}
In contrast to previous efforts which {\em define} novel diversity measures, we take loss functions and {\em decompose} them, exposing terms that {naturally} account for diversity.
We show that diversity is a {\em hidden dimension} in the {\em bias-variance decomposition} of the ensemble loss.  In particular, we prove {\em exact} bias-variance-{\em diversity} decompositions, applying for a broad range of losses, taking a common form:
\begin{equation}
     \textbf{expected~loss} ~=~ (\textbf{average bias)} ~+~ (\textbf{average variance}) ~-~ (\textbf{diversity}), \notag
\end{equation}
where diversity is a measure of ensemble member disagreement, independent of the label.
For the special case of squared loss, this is an alternative to a decomposition proposed by \citet{ueda1996generalization}, but expanding the formal notion of diversity to many other losses, e.g., the {\em cross-entropy}, and the Poisson loss.
For losses where an additive bias-variance decomposition does not exist (e.g., 0/1 loss) we present an approach to precisely quantify the {\em effects} of diversity, with the caveat that the effects are conditional on the label distribution.
%

Overall, {\em we argue that diversity is best understood as a measure of model fit, in precisely the same sense as {bias} and {variance}}, but accounting for statistical dependencies {\em between} ensemble members. Thus, we should not be `maximising' diversity as so many works claim to do---instead, we have a trade-off to manage.
With single models, we have the well-known {\em bias/variance} trade-off. With an ensemble we have a bias/variance/{\em diversity} trade-off, varying both with individual model capacity, {\em and} similarities {between} model predictions.

\section*{Notation}
For quick reference, we summarise the majority of notational conventions used in the paper.  More detailed definitions are given in the main body as appropriate.
Boldface font, e.g., $\bf y$, denotes a vector, whereas $y$ is a scalar.
Capital letters, e.g., $Y$, denote a random variable, where $y$ is a realisation of $Y$.  \\

\noindent For brevity, in the majority of the paper we leave it implicit that a model, $q$, is a function of an input ${\bf x}$, and is dependent on a training set sampled from a random variable $D$.

\begin{table}[!h]
\def\arraystretch{1.2}
\begin{tabular}{@{}ll@{}}
\toprule
\bf \underline{Symbol}        &   \bf \underline{Description} \\
 &    \\
$\vectorx\in \mathcal{X}\subseteq\mathbb{R}^d$   &   Input, in $d$ dimensions\\
$\vectory\in \mathcal{Y}\subseteq \mathbb{R}^k$   &   Label, in $k$ dimensions (or $y$ for a scalar)\\
$\{(\vectorx_j,\vectory_j)\}_{j=1}^n$ &   Training set of $n$ labelled examples\\

 &    \\
 
$P(\vectorx,y)$              &   Unknown data distribution over $(\vectorx,y)$\\
$\Exy [ \cdots ]$            &   Expectation w/r $P(\vectorx,y)$, i.e., over all possible test examples $(\vectorx,y)$     \\
                             &  $\Exy \big[ \cdots \big] ~\defeq~ \int P_{\vectorX}({\bf x} )  \int  
P_{Y|\vectorX}(y ~|~ {\bf x}) \big[ \cdots \big]  \diff {y} \diff {\bf x}$\\
 &    \\

$D$                          &   Random variable $D\sim P(\vectorx,y)^n$, over i.i.d. training sets of size $n$ \\

$\ED [ \cdots ]$            &   Expectation over all possible training sets $\{(\vectorx_j,\vectory_j)\}_{j=1}^n$ drawn from $D$ \\

&    \\ 

$q(\vectorx)$                &  Predictive model $q$, mapping $\vectorx \rightarrow y$.  If output is a vector then ${\bf q}({\bf x})$ \\
$\ell(y,q({\bf x}))$          &  Loss function for label $y$ and prediction $q({\bf x})$ \\
$\ell_{0/1}(y,q({\bf x}))$              & The 0/1 loss function: returns 1 if $y\neq q({\bf x})$, and 0 otherwise.\\
$R(q)$ & Risk of model $q$, defined $R(q):= \Exy[\ell(y,q({\bf x}))]$\\

&    \\ 

$m$   &       Number of ensemble members\\
$q_i$ &       The $i$th member of an ensemble, $\{{q}_i\}_{i=1}^m$\\
$\qbar$ &      Centroid combiner rule, defined $\qbar :=\argmin_{z\in\mathcal{Y}} \averagei[\ell(z,q_i)]$\\
$\centroid{q}$ &       Centroid of a model distribution w/r $D$, defined $\centroid{q}:=\argmin_{z\in\mathcal{Y}} \ED[\ell(z,q)]$\\

&    \\ 

$\phi(\vectory)$    &    Strictly convex generator function for a Bregman divergence\\
$\nabla\phi(\vectory)$    &    Gradient vector of generator $\phi$ w/r argument $\vectory$\\
$B_\phi({\bf y},{\bf q})$   &    Bregman divergence between points ${\bf y},{\bf q} \in \mathbb{R}^k$, using generator $\phi$\\

 &    \\ \bottomrule
\end{tabular}
\caption{Notation used in the paper.\label{tab:notation}}
\end{table}

\newpage

\section{Problem Statement: What is Ensemble Diversity?} \label{sec:problemstatement}

\cite{hansensalamon1990} proposed a methodology to train multiple neural networks, encouraging different models by providing each with a different training data subset. 
Many subsequent papers followed this ``parallel" strategy: an early work being \citet{perrone1992networks}, but perhaps more well-known are {\em Bagging} \citep{breiman1996bagging}, and {\em Random Forests} \citep{breiman2001random}.
{\em Boosting} algorithms  \citep{schapire1998boosting} exploit a similar principle, but construct models \emph{sequentially}.
These approaches, parallel and sequential (see Figure~\ref{fig:parallellvssequential}), are the most common schemes to learn ensembles \citep{kuncheva2014}, and there are many variations, e.g., selecting members from a pre-constructed candidate pool, each built by independent (parallel) teams working on the same task but with different data.
\begin{figure}[ht]
    \centering
    \includegraphics[width=0.9\textwidth]{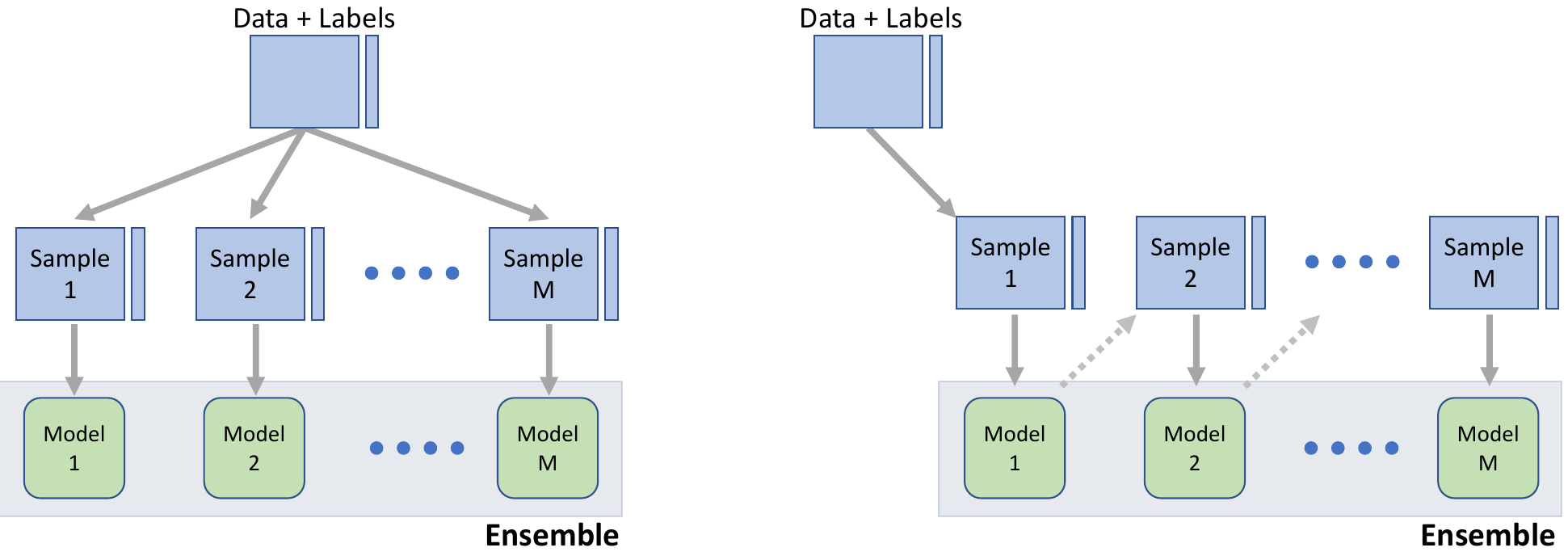}
    \caption{Parallel vs sequential ensemble construction. Both can be seen as creating ``diverse" models in some sense---either {\em implicitly} (independently re-sampling the training data), or {\em explicitly} (re-sampling according to the errors of earlier models).} 
    \label{fig:parallellvssequential}
\end{figure}

\noindent So why do these strategies work? Both can be understood heuristically in terms of ``diversity", in the sense coined by \cite{opitz1996generating}, referring to differences in generalisation behavior among a group of models. In a review, \citet{dietterich2000ensemble} explains:
\begin{quote}
{\em ``An accurate classifier is one that has an
error rate of better than random guessing on new x values. Two classifiers are diverse if they make different errors on new data points.''} \citep{dietterich2000ensemble}
\end{quote}
In this sense, both approaches foster diversity---either {\em implicitly} by randomly perturbing the data for each model, or {\em explicitly} by constructing each data set to address the errors of previous models
\citep{Brown2005Diversity}.
The implicit approach has been widely adopted in deep learning: \cite{Goodfellow-et-al-2016} note that sources of randomness in the initialisation of deep networks are often enough to cause them to make partially independent errors.
%
Given the success of ensembles, there have been many attempts to explain {\em why} they work, in terms of the diversity.
\citet{Goodfellow-et-al-2016} writes,
\begin{quote}
{\em ``The reason that model averaging works is that different models will usually not make all the same errors on the test set''}. \citep[p. 249]{Goodfellow-et-al-2016}
\end{quote}
Whilst this is true, we desire a more formal treatment.

\paragraph{What are we looking for?}
A theory of diversity would ideally have three key ingredients:
\begin{enumerate}
   \item a definition of diversity as a measure of disagreement between the ensemble members;
   \item this measure should have a clear relation to the overall ensemble error; and,
   \item the theory should have a clear relation to previously established results,
   and expand our understanding to a wider range of learning scenarios.
\end{enumerate}
The first point concerns the definition of diversity itself.   An interesting question is whether the measure is a function solely of the ensemble member outputs, or whether it can involve the label distribution as well. 
\citet{kuncheva2014} refers to measures that rely on the true label as `oracle' diversity measures.
{\em Both outcomes would be an interesting scientific conclusion}: the label-independent case showing that diversity is solely a function of the ensemble itself, and the label-dependent case showing that it is a class-conditional phenomenon.

The second point ensures we can interpret what effect diversity has on our ultimate objective: reducing the ensemble error. 
The third point ensures that the new theory contributes to knowledge, in the sense that it expands the depth/breadth of our understanding.  This also relates to the only known scenario where the challenge of defining diversity might be considered a ``solved'' problem: regression using squared loss, with an arithmetic mean ensemble.  We now review this.
%

\paragraph{Known results for regression ensembles:}
\citet{Krogh1994Neural} showed that, for an arithmetic mean combiner, using squared loss, the ensemble loss {\em is guaranteed to be less than or equal to the average individual loss}.
\begin{theorem}[Ambiguity decomposition, Krogh \& Vedelsby, 1994]
Given a label $y\in\mathbb{R}$, a member prediction  $q_i({\bf x})$, and an ensemble $\bar{q}({\bf x})=\averagei q_i({\bf x})$, we have,
\begin{equation}
    \big(\bar{q}({\bf x}) - y\big)^2 ~=~ \averagei \big(q_i({\bf x}) - y\big)^2 - \averagei \big(q_i({\bf x}) - \bar{q}({\bf x})\big)^2.
    \label{eq:ambiguity}
\end{equation}
\end{theorem}
The left hand side is the ensemble loss at a single test point $({\bf x},y)$. The first term on the right is the average individual loss.  The second is known as the {\em ambiguity}---measuring the disagreement of individuals, as a spread around the ensemble prediction. Since this is non-negative, it guarantees the ensemble loss will be less than or equal to the average loss.

This result is often {\em erroneously} cited as the reason why all ensembles work.
However, the form above applies {\em {only if} we use the squared loss with an arithmetic mean combiner.}  If we use squared loss with a different combiner, the result no longer holds.
A deeper understanding came from \citet{ueda1996generalization}---though under the same loss/combiner assumptions. They extended the classical bias-variance theory of \citet{geman1992neural} to show that the {\em expected} squared loss of the ensemble decomposes into three terms, involving the bias, variance, and {\em covariance} of the models. Here, the covariance captures the notion of `diversity'. More details on this will be provided later.

As mentioned, the expression above does not apply beyond squared loss with the arithmetic mean combiner. %
A significant community effort has been directed to find corresponding notions of diversity for classification problems.  We review this next.

\paragraph{Known results for classifier ensembles:}
For classification problems, we might have estimates of the class probability distribution, or just labels---usually combined by averaging probabilities, or majority voting.
%
\citet{tumer1996error} showed that the correlation between pairs of averaged class probabilities had a simple relationship to the overall ensemble classification error, at least in a region close to decision boundaries.
%
%
\citet{brown2009information} and \citet{zhou2010multi} proposed information theoretic analyses, showing that diversity manifests as both low- and high-order interactions between ensemble members.
\citet{buschjager2020generalized} used a Taylor approximation on twice-differentiable losses, showing an exact decomposition only when higher derivatives are zero, e.g.,squared loss, but not cross-entropy.
Similarly, \citet{ortega2022diversity} decomposed upper bounds on losses, again only obtaining an equality for squared loss. 
\citet{kuncheva2003measures} considered diversity measures for their correlation to the ensemble error. They investigated numerous discrepancy metrics in the form $\delta(q_i({\bf x}),q_j({\bf x}))\in \mathbb{R}$, which are averaged over all pairs of ensemble members at each test point ${\bf x}$:
\begin{equation}
    \textnormal{diversity}(q_1,..,q_m, {\bf x}) = \frac{1}{m(m-1)}\sum_{i=1}^m\sum_{j\neq i} \delta(q_i({\bf x}),q_j({\bf x})).
\end{equation}
The ensemble diversity is this quantity averaged over a validation dataset.  The ensemble diversity is evaluated on its empirical correlation to the  ensemble performance, and seen as more successful if it has high correlation, illustrated in Figure~\ref{fig:diversity_toy_plot}.
\begin{figure}[h]
    \centering
    \includegraphics[width=13.2cm]{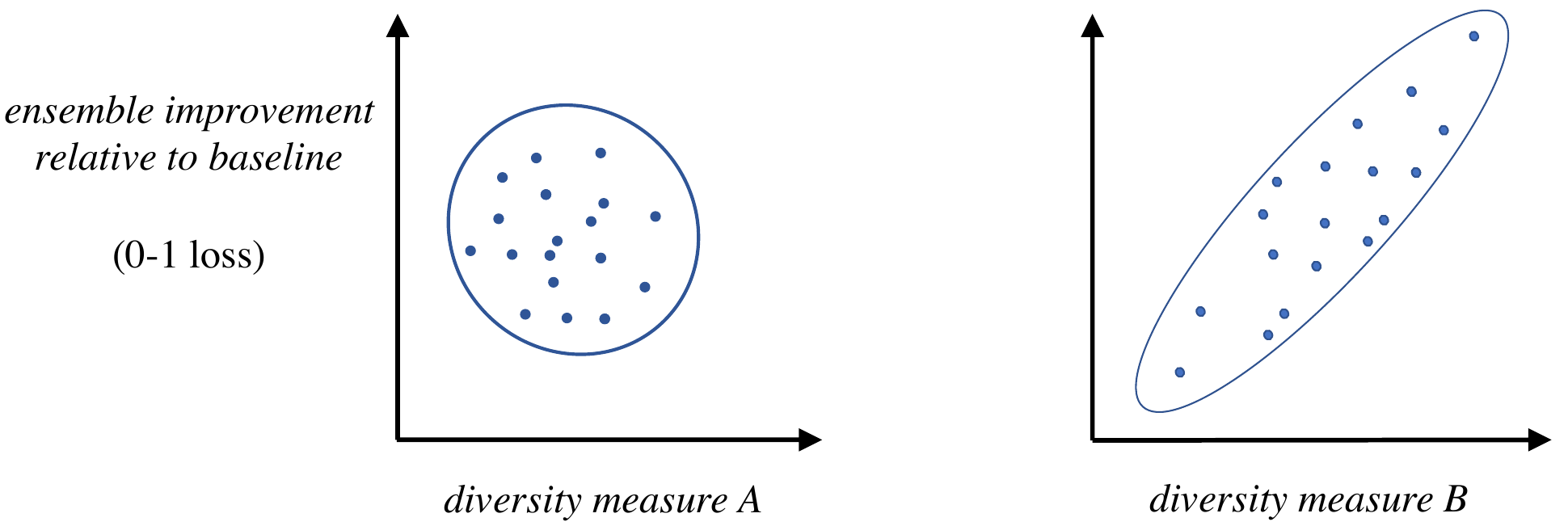}
    \caption{Accuracy/diversity for two (hypothetical) diversity measures. Measure~B (right) is more desirable, as it has stronger correlation to performance improvement.} 
    \label{fig:diversity_toy_plot}
\end{figure}

Several measures (including also non-pairwise measures) were explored,
with no single measure proving more successful than any other.
%
Almost 20 years on, novel diversity heuristics/measures are still being proposed, e.g., \citet{jan2019novel,wu2021boosting}. 

\paragraph{Our approach to the problem:}
As above, diversity is often discussed in a dichotomy of classification vs. regression. 
One of our primary observations is that {\em this high-level dichotomy is insufficient to fully describe the nature of diversity}. Instead, we must consider the {\em loss} function at hand. For regression, this might be squared loss, but also Poisson, or others. For classification, this may be 0/1, or cross-entropy.
%
We build on the strong foundation of bias-variance theory \citep{geman1992neural,james1997generalizations,heskes1998bias,pfau2013},
and {\em decompose} the losses, naturally exposing notions of diversity.

\newpage

\section{A Very Short Introduction to Bias-Variance Decompositions}
\label{sec:veryshortBVintro}

We review literature on {\em bias-variance} decompositions: the statistical setting is a standard supervised learning scenario.
We assume a training set $\{(\vectorx_j, y_j)\}_{j=1}^{n}$ drawn from a random variable $D\sim P(\vectorX, Y)^n$.
From this, we learn a model,~$q$.
For brevity (see \autoref{tab:notation})
in the remainder of the paper 
we leave it implicit that $q$ is a function of $\vectorx$, and is always dependent on data sampled from~$D$.
%
%
\noindent We denote the {\em risk} as $R(q) \defeq \Exy \left[ \ell\left(Y,q\right) \right]$,
%
where the label $Y$ is a random variable, thus potentially noisy.
If $\ell(y,q)=(y-q)^2$, this is the {\em squared} risk, and the
Bayes-optimal prediction is the conditional mean
$\Ystar \defeq \mathbb{E}_{Y|\vectorX}[Y]$.
\citet{geman1992neural} considered the {\em expected squared risk}, over possible training sets drawn from $D$, showing the following three term decomposition.
%
\begin{equation}
    \underbrace{\mystrut{12pt} \ED\Big[ \Exy  [ (Y-q)^2 ]\Big] }_{\substack{\textnormal{expected~squared}~\textnormal{risk}}} 
        = 
        \Ex\Bigg[ \underbrace{\mystrut{12pt} \mathbb{E}_{Y\mid \vectorX}\left[ (Y-\Ystar)^2 \right] }_{\textnormal{noise}} 
        +  \underbrace{\mystrut{12pt}\Big(\ED \left[ q\right]-  \Ystar\Big)^2}_{\textnormal{bias}} 
        + \underbrace{\mystrut{12pt} \ED \left[ \Big( q - \ED \left[ q \right] \Big)^2 \right] }_{\textnormal{variance}} \Bigg].  \label{bvsqloss}
\end{equation} 
%
%
The first term
is the irreducible {\em noise} in the problem, independent of any model parameters.
The second is the {\em bias}\footnote{{\noindent\em Is it `squared bias', or just `bias'?} 
Geman et al's bias term is often referred to as ``squared bias''.  However, the square is in fact an artefact of using the squared  loss, and is not present in the decompositions of other losses. Thus, the term ``squared bias'' {\em is a misnomer}, and throughout the paper we use simply ``bias''.   Further exposition on this point is provided in Appendix \ref{app:squaredbias}.}, defined as the loss of the expected model against the Bayes-optimal prediction. 
%
%
The third is the {\em variance}, capturing variation in $q$ due to different training sets.
Note that we can use the term {\em bias} (correspondingly {\em variance}) to indicate the value at a point $({\bf x},y)$, or averaged over a distribution---the intention will be clear from context.
\noindent The ideas are often explained with a dartboard diagram, as in~\autoref{fig:BVdartboard}. 
\begin{figure}[ht]
    \centering
    \includegraphics[width=.219\textwidth]{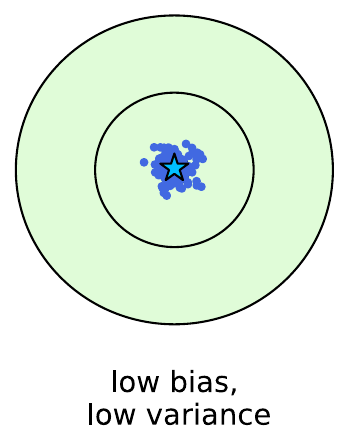}
    \includegraphics[width=.219\textwidth]{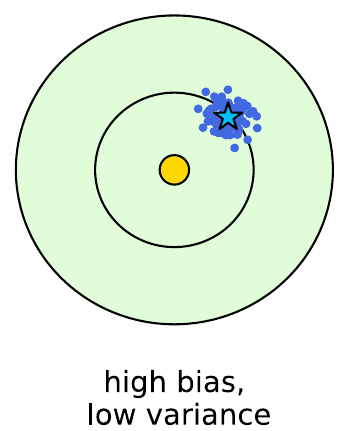}
    \includegraphics[width=.219\textwidth]{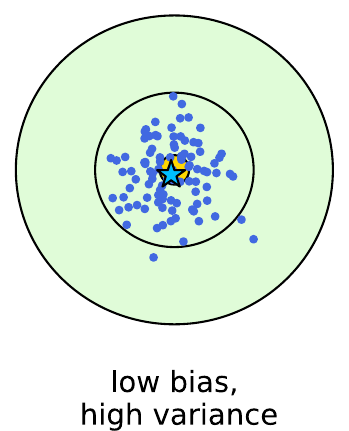}
    \includegraphics[width=.219\textwidth]{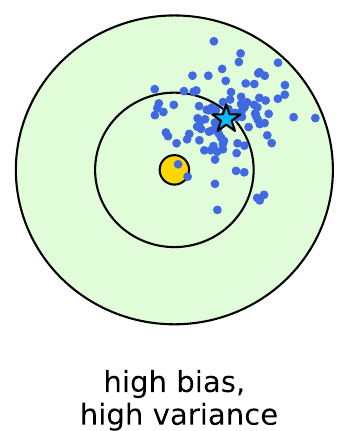}
    \caption{The classic dartboard analogy for explaining bias and variance. 
    \label{fig:BVdartboard}}
\end{figure}

The bullseye (yellow circle) is the target for a single {\em test} point, and each blue dot is a prediction from a model 
given a different {\em training} set.
A model with high bias/low variance
will be on average far from the target, and insensitive to small data changes.
A model with low bias/high variance
will have an expected value  
close to the target, but be very sensitive to data changes, meaning any given model is likely to overfit. 
The random variable $D$ does not {\em have} to be over data, but can be any stochastic quantity involved in learning the model, e.g., random weights in a neural network.\\


\paragraph{Bias-variance decompositions apply for more than just squared loss.}  
\citet{geman1992neural} is a widely appreciated result.
It is less well known that similar decompositions hold for other losses, e.g., the KL-divergence of class probability estimates \citep{heskes1998bias}.
Assume $\vectory$ is a one-hot vector of length $k$, and $\vectorq\in\mathbb{R}^k$ is our model's predicted distribution over the $k$ labels. 
The expected risk decomposes\footnote{Heskes (1998) presents a slight variation, though the expressions are equivalent. See Appendix \ref{app:decomposiingCrossEntropy}.} in a similar three-part form:
\begin{equation}
    \underbrace{\mystrut{12pt} \ED\Big[ \EXY\left[ K\left(\vectorY\mid\mid\vectorq\right) \right] \Big] }_{\substack{\textnormal{expected}~\textnormal{risk}}} 
        = 
        \Ex\Bigg[ \underbrace{\mystrut{12pt} \mathbb{E}_{\vectorY\mid \vectorX}\left[ K\Big(\vectorY\mid\mid\vectorYstar\Big) \right] }_{\textnormal{noise}} 
        +  \underbrace{\mystrut{12pt}K\Big( \vectorYstar\mid\mid \centroid{\vectorq} \Big)}_{\textnormal{bias}} 
        + \underbrace{\mystrut{12pt} \ED \left[ K\Big( \centroid{\vectorq} \mid\mid \vectorq\Big) \right] }_{\textnormal{variance}} \Bigg],  \label{bv:KL}
\end{equation}
where $\centroid{\vectorq}
\defeq \argmin_{\vectorz\in\mathcal{Y}} \mathbb{E}_D\Big[K(\vectorz\mid\mid\vectorq)\Big]  = Z^{-1}\exp(\ED[\ln \vectorq])$
is a {\em normalized geometric mean} at each input vector $\vectorx$.
%
%
This can also be illustrated as a ``dartboard" analogy, in Figure~\ref{fig:simplexes}.

\begin{figure}[!h]
    \centering
    \includegraphics[width=.22\textwidth]{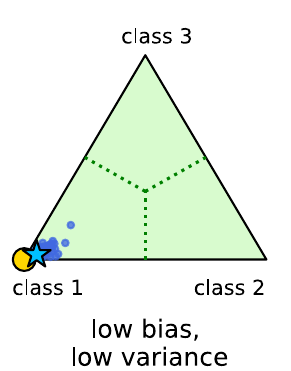}
    \includegraphics[width=.22\textwidth]{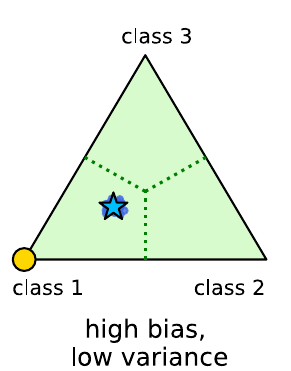}
    \includegraphics[width=.22\textwidth]{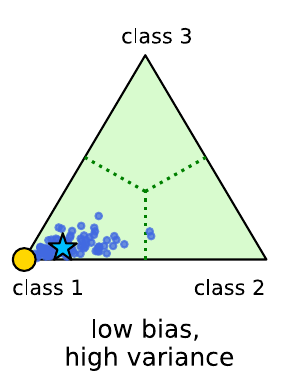}
    \includegraphics[width=.22\textwidth]{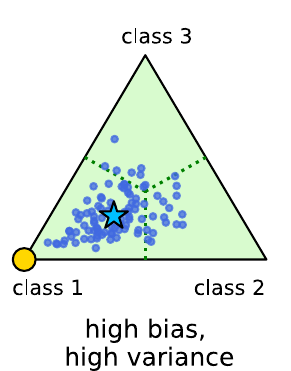}
    \caption{Dartboard diagram illustrating bias/variance for the KL-divergence.}
    \label{fig:simplexes}
\end{figure}


%
\paragraph{Bias-variance decompositions do not apply for all losses.}

A commonality of the decompositions above is that the variance  is {\em independent} of the label.  
A further commonality is that the bias is the loss of a predictor $\centroid{q}
\defeq \argmin_{z\in\mathcal{Y}} \mathbb{E}_D\Big[\ell(z,q)\Big]$, not dependent on any particular training set.
%
%
%
We refer to this form as an {\em additive} bias-variance decomposition.
It turns out that this does not hold for all losses, e.g., 0/1 loss---in this case $\centroid{q}$
is the {\em mode} of the random variable,
but, we have an {\em inequality}:
\begin{equation}
    \ED\Big[ \Exy[\Lzeroone(Y,q)] \Big] ~\neq~ \Ex\Big[ \mathbb{E}_{Y|\vectorX}[\Lzeroone(Y,Y^*)] + \Lzeroone(Y^*,\centroid{q}) + \ED[ \Lzeroone(\centroid{q},q) ] \Big].
\end{equation}
Many authors have presented alternative 0/1 decompositions \citep{kohavi1996bias,tibshirani1996bias,james1997generalizations,heskes1998bias,domingos2000unified}.
An excellent review can be found in \citet{geurts2002contributions}.
A common theme is to either sacrifice the simple additive form, or have a formulation of variance that is dependent on the label distribution. 
The necessary and sufficient conditions for an additive bias-variance decomposition of the form mentioned above, are an open research question. \\

Together, the theories outlined above have been used to understand the nature of model fitting---the so-called `bias-variance' trade-off. 
\noindent In the following section we build upon all these theories to understand the phenomenon of ensemble diversity.


\newpage

\section{A Unified Theory of Ensemble Diversity}
\label{sec:unified_diversity}

Ensemble ``diversity'' is a popular, but variously defined idea.  Bias and variance are clear-cut, and precisely defined. 
It makes sense to build stronger bridges between the two.  Our approach is exactly this, revealing diversity as a {\em hidden dimension} in the bias-variance decomposition of an ensemble. 
We argue that diversity can be considered in exactly the same way as bias/variance---simply another quantifiable aspect of model fitting.

\subsection{Diversity as a Hidden Dimension of the Bias-Variance Decomposition}\label{subsec:hidden}
Our approach yields a `unified theory' in the sense that the {\em same} methodology applies for many different losses, in classification/regression scenarios. 
We first define a {\em generalised} (additive) bias-variance decomposition, in that it generalises Equations~\eqref{bvsqloss} and~\eqref{bv:KL}.

\begin{definition}[Generalised Bias-Variance Decomposition]\label{def:gen_bv}
For a loss $\ell$, we define $\centroid{\vectorq} \defeq \argmin_{\vectorz\in\mathcal{Y}} \mathbb{E}_D \Big[\ell(\vectorz,\vectorq)\Big]$ as the `centroid' of the model distribution.
If the following form holds, we refer to it as a generalised bias-variance decomposition.
\begin{align}
       \underbrace{\mystrut{.5em}\mathbb{E}_D\Big[\EXY[\ell(\vectorY, \vectorq)]\Big]}_{\textnormal{expected risk}} = \Ex\Big[ \underbrace{\mystrut{.5em}\mathbb{E}_{\vectorY|\vectorX}[\ell(\vectorY,\vectorYstar)]}_{\textnormal{noise}} + \underbrace{\mystrut{.5em}{\ell(\vectorYstar, \centroid{\vectorq})}}_{\textnormal{bias}} + \underbrace{\mystrut{.5em}\EDb{\ell (\centroid{\vectorq},\vectorq)}}_{\textnormal{variance}} \Big] \label{eq:unified_bv}.
\end{align}
\end{definition}
%
When $\ell(y,q)=(y-q)^2$, we have $\centroid{q}=\ED[q]$ and this reduces to Equation~\eqref{bvsqloss}. Alternatively, with $\ell(\vectory,\vectorq)=K(\vectory\mid\mid\vectorq)$, we have $\centroid{\vectorq} = Z^{-1}\exp(\ED[\ln \vectorq])$, and it reduces to Equation~\eqref{bv:KL}.    
Importantly, this same additive form also holds for the even broader family of {\em Bregman divergences} \citep{bregman1967relaxation}.  
This family, reviewed in the next section,
provides us a convenient analytical form, which we will use to derive several interesting properties.

We now take our first step toward ensembles. We highlight that the {\em ambiguity} decomposition, Equation~\eqref{eq:ambiguity}, can be proven by a very similar method to the bias-variance decomposition.  In fact: if one exists, then the other must also, outlined in Appendix~\ref{app:specialcase}.
For a loss $\ell$, if Equation~\eqref{eq:unified_bv} holds, we can state the corresponding ambiguity decomposition.

\begin{proposition}[Generalised Ambiguity Decomposition]\label{prop:gen_ambig}
Assuming loss $\ell$ admits a bias-variance decomposition then, for an ensemble $\{\vectorq_i\}_{i=1}^m$, the ambiguity decomposition is,
\begin{align}
    \underbrace{\mystrut{1.3em}\ell(\vectory,\overline{\vectorq})}_{\textnormal{ensemble loss}} = \underbrace{\mystrut{1.3em}\averagei \ell(\vectory,\vectorq_i)}_{\textnormal{average loss}} - \underbrace{\mystrut{1.3em}\averagei  \ell(\overline{\vectorq},\vectorq_i)}_{\textnormal{ambiguity}},
    \label{eq:unified_ambiguity}
\end{align}
where $\vectorqbar \defeq \argmin_{\vectorz\in\mathcal{Y}} \averagei \ell(\vectorz,\vectorq_i)$ is the ensemble combination.

\end{proposition}

We highlight that the ensemble combiner is defined {\bf as a property of the loss.} For squared loss, this results in $\qbar = \averagei q_i$, the commonly used arithmetic mean combiner. For KL-divergence, this results in $\vectorqbar = Z^{-1}\prod_{i=1}^m\vectorq_i^{1/m}$, i.e., a normalised geometric mean, also known as the `product rule' \citep{kuncheva2014}.
%
%
\citet{audhkhasi2013ambiguity} and \citet{jiang2017generalized} proposed  ambiguity decompositions for classification losses, though both  assumed the combiner must be an arithmetic mean.
In contrast, here the combiner is defined {\em in terms of the loss} $\ell$. We refer to this as the {\em centroid combiner rule},  defined  formally below.

\begin{definition}[Centroid Combiner rule]\label{def:centroidcombiner}
For a point $(\vectorx,\vectory)$, given a set of predictions $\{\vectorq_i\}_{i=1}^m$, the centroid combiner $\vectorqbar$ is the minimizer of the averaged loss $\ell({\bf z},{\bf q}_i)$, over all ensemble members.
\begin{equation}
\vectorqbar ~\defeq~ \argmin_{\vectorz\in\mathcal{Y}} \averagei \ell({{\bf z}},{{\bf q}_i}).
\label{eq:centroidcombiner}
\end{equation}
\end{definition}
For each given loss function, the centroid combiner {\em could} be derived separately.  However, in the next section we will meet the family of {\em Bregman divergences} \citep{bregman1967relaxation}, which provide us a common analytic form for the centroid combiner.
%
We can now present our main result,  Theorem~\ref{the:gen_bvd}, a generalised bias-variance-{\em diversity} decomposition.

\begin{restatable}[Generalized Bias-Variance-Diversity decomposition]{theorem}{thegenbvd}
\label{the:gen_bvd}
Consider a set of models $\{\vectorq_i\}_{i=1}^m$, evaluated by a loss $\ell$. Assuming a bias-variance decomposition holds in the form of Definition \ref{def:gen_bv}, the following decomposition also holds.
%
\begin{eqnarray}  \hspace{-0.5cm}\ED\Big[\EXY[\ell(\vectorY, \overline{\vectorq})]\Big] &=& \notag\\
    &&\hspace{-4cm} \Ex\Bigg[ \underbrace{\mystrut{1.5em}\mathbb{E}_{\vectorY|\vectorX}[\ell(\vectorY,\vectorYstar)]}_{\textnormal{noise}}
    +
    \underbrace{\mystrut{1.5em}\averagei \ell(\vectorYstar,\centroid{\vectorq}_i)}_{\textnormal{average bias}}
    + \underbrace{\mystrut{1.5em}\averagei \EDb{\ell(\centroid{\vectorq}_i,\vectorq_i)}}_{\textnormal{average variance}}
    - \underbrace{\mystrut{1.5em}\EDb{\averagei  \ell(\overline{\vectorq},\vectorq_i)}}_{\textnormal{diversity}}\Bigg],
\end{eqnarray}
where $\centroid{\vectorq} \defeq \argmin_{\vectorz\in\mathcal{Y}} \mathbb{E}_D \Big[\ell(\vectorz,\vectorq)\Big]$ and the combiner is $\overline{\vectorq} \defeq \argmin_{\vectorz\in\mathcal{Y}} \averagei \ell(\vectorz,\vectorq_i)$.
\end{restatable}

This has decomposed the expected ensemble risk into: the noise, the average bias, the average  variance, and the expected ambiguity.  It is this expected ambiguity term that we consider as the ensemble {\em diversity}, which we highlight has the opposite sign to other terms. We can make two important observations.\\

\noindent {\bf The ensemble combiner $\vectorqbar$ is a centroid.}  In a practical scenario, the loss of the ensemble is chosen for whatever the task is: 0/1 loss,
cross-entropy, or MSE, or many others. The combiner rule can also be chosen as almost anything: voting, arithmetic
mean, weighted combinations, geometric mean, etc.
The result above shows that, if we impose a constraint between these choices (i.e., Definition~\ref{def:centroidcombiner}) we enable a decomposition which naturally exposes a diversity term. {\em We emphasize that using this makes no claim on the {\em empirical} behavior of the ensemble, but simply on their theoretical properties}.\\

\noindent {\bf There is now a bias/variance/diversity trade-off.}  As individual models increase in capacity, their average bias decreases.  Without regularisation, their average variance would increase.  However, these terms determine only part of the ensemble behavior. The final part is the diversity. 
A critical point here is {\em diversity always subtracts from the expected risk}.  This is not to say that greater diversity always reduces expected risk---it only reduces it given a fixed bias and variance. Ultimately, the three-way trade-off of {\em bias/variance/diversity} is what determines the overall ensemble performance.
It is worth highlighting that diversity is defined similarly to bias/variance, involving an expectation over $D$, as opposed to being a property of a single training run.
We also highlight the result applies for both dependent and independent training (where $D$ is a vector of $m$ random variables, see Appendix~\ref{app:dependent} for details) and also for schemes where $D$ defines a pool of pre-constructed models.

\newpage
\subsection{What if a bias-variance decomposition doesn't hold?}

Definition 2 does not apply for all losses, e.g., for 0/1 loss, or absolute loss.  
This creates an obvious challenge for our framework.
We relied on the existence of such a decomposition, deriving
diversity as a `hidden dimension'.
However, there is still a way forward.\\

\noindent {\bf Quantifying the Effect of Bias/Variance/Diversity}. \citet{james1997generalizations}
present a decomposition which applies for {\em any} loss, and retains the simple additive form.   They achieve this by distinguishing the {\em measurement} of variance (which is independent of the label) from its {\em effect} on the expected risk (which is dependent on the label),
with the analogous pair of quantities for the bias. The measurement and the effect are not necessarily the same quantity.  
They considered the {\em measurement} to be the `natural' form in Definition~\ref{def:gen_bv}.
They then defined the {\em bias-effect}\footnote{J\&H called this the {\em systematic} effect. For consistency within our work, we prefer the term {\em bias}-effect.}, and the {\em variance-effect}.
For compact notation, we average these over $P(\vectorx, \vectory)$. 
Using $R({\bf q})\defeq \Exy[\ell({\bf y},{\bf q})]$ as the risk of ${\bf q}$, their terms are:
%
\begin{eqnarray}
    \textnormal{bias-effect} &&\defeq R(\centroid{\bf q})-R({\bf y}^*),\label{eq:biaseffect_alt}\\
    \textnormal{variance-effect} &&\defeq \ED\Big[ R({\bf q})-R(\centroid{\bf q}) \Big],
\end{eqnarray}   
where $\centroid{\vectorq} \defeq \argmin_{\vectorz\in\mathcal{Y}} \mathbb{E}_D \Big[\ell(\vectorz,\vectorq)\Big]$, and ${\bf y}^*\defeq\argmin_{\vectorz\in\mathcal{Y}} \mathbb{E}_{\vectorY|\vectorX=\vectorx} \Big[\ell(\vectory,\vectorz)\Big]$ is the Bayes-optimal predictor\footnote{We acknowledge a slight abuse of notation, denoting the Bayes predictor $\vectory^*$ either as a function at each point $\vectorx$, or a vector in $\mathbb{R}^k$, as needed. The intention will be made clear from context.} at each point $\bf x$.
These quantify the {\em effect on the risk} for using one predictor instead of another.
The {\em bias-effect} is the change in risk, for the centroid $\centroid{\vectorq}$ versus the Bayes-optimal predictor ${\bf y}^*$. 
The {\em variance-effect} is the change in risk for a model $\bf q$ versus the centroid, averaged over the distribution of $D$.
For losses where Definition \ref{def:gen_bv} holds, the
measurement and the effect are the same, i.e., variance-effect is equal to variance, and bias-effect is equal to bias.
For example, with squared loss and $y\in\mathbb{R}$,
Equation~\eqref{eq:biaseffect_alt}
is equal to $\Exy[(\mathbb{E}_D[q]-y^*)^2]$. 
For general losses (e.g., 0/1 loss, absolute loss), this is not the case, and the {\em effect} terms are different numerical quantities.
With these definitions, James \& Hastie note the following decomposition.

\begin{theorem}[Bias-Variance Effect decomposition, James \& Hastie 1997]
For a loss $\ell:\mathcal{Y}\times\mathcal{Y}\rightarrow \mathbb{R}$, denoting the centroid as $\centroid{\bf q} \defeq \argmin_{\vectorz\in\mathcal{Y}} \EDb{\ell({\bf z},{\bf q})}$, the expected risk decomposes as follows:

\begin{equation}
      \ED\Big[R({\bf q})\Big] =   \underbrace{\mystrut{1em}R({\bf y}^*)}_{\textnormal{noise}} + \underbrace{\mystrut{1em}R(\centroid{\bf q})-R({\bf y}^*)}_{\textnormal{bias-effect}} + \underbrace{\mystrut{1em}\ED[ R({\bf q})-R(\centroid{\bf q})]}_{\textnormal{variance-effect}}
\end{equation}
\end{theorem}

This decomposition holds for  any
loss, since terms on the right always cancel to give the left side.
A notable point here is that the variance-effect is dependent on the label $y$, whilst the variance itself is a label-independent quantity.  Furthermore, whilst bias-effect is non-negative, the variance-effect can be a {\em negative} quantity.
From this, using the same relation exploited for Proposition~\ref{prop:gen_ambig}, we can state a corresponding ambiguity (effect) decomposition.

\begin{restatable}[Ambiguity-Effect Decomposition]{proposition}{ambiguityEffect}\label{prop:ambig_effect}
Given a loss $\ell:\mathcal{Y}\times\mathcal{Y}\rightarrow\mathbb{R}$, and an ensemble $\{\vectorq_i\}_{i=1}^m$ using the centroid combiner $\vectorqbar \defeq \argmin_{\vectorz\in \mathcal{Y}}\Big[\averagei\ell({\bf z},\vectorq_i)\Big]$, we have a decomposition:
\begin{equation}
\underbrace{\mystrut{1.5em}R(\vectorqbar)}_{\textnormal{ensemble~risk}} = \underbrace{\mystrut{1.5em}\averagei R(\vectorq_i)}_{\textnormal{average risk}} - \underbrace{\mystrut{1.5em}\averagei \Bigg[ R(\vectorq_i) - R(\vectorqbar) \Bigg]}_{\textnormal{ambiguity-effect}}. \label{eq:ambig_effect_alt}
\end{equation}
\end{restatable}

We re-emphasize that this applies for any loss. This is again quite obvious, as terms on the right cancel to give the left hand side.
For losses where Definition \ref{def:gen_bv} holds, this generalises ambiguity decompositions stated earlier.
For example, when $\ell$ is the squared loss,
it reduces to \citet{Krogh1994Neural}.
More generally, it reduces to \autoref{prop:gen_ambig}, the generalised ambiguity decomposition.

The powerful nature of this expression is that we can also consider losses where Definition~\ref{def:gen_bv} does {\em not} hold, e.g., 0/1 or absolute loss.
%
For the 0/1 loss, the centroid is the {\em mode} of the random variable \citep{domingos2000unified}. For a finite set of ensemble members, this means $\qbar$ is a majority vote of the individuals in the ensemble.  Or, with the absolute loss, $\qbar$ is the {\em median} prediction of the ensemble members.

Given all of the above, we can now present a novel decomposition
for the expected risk of an ensemble using the centroid combiner, isolating the {\em effect} of diversity. 

\begin{restatable}[Bias-Variance-Diversity effect decomposition]{theorem}{diversityEffect}\label{the:diversity_effect}
Given an ensemble of models $\{\vectorq_i\}_{i=1}^m$ combined by the centroid combiner $\vectorqbar \defeq \argmin_{\vectorz\in\mathcal{Y}}\Big[\averagei\ell({\bf z},\vectorq_i)\Big]$, using any loss $\ell$, the expected risk of the ensemble decomposes,
\begin{align*}
    ~\ED\Big[R(\vectorqbar)\Big] ~=~\\& &\hspace{-2.95cm}\underbrace{\mystrut{1.25em}R({\bf y}^*)}_{\textnormal{noise}} + \underbrace{\averagei \Big[ R(\centroid{\bf q}_i) - R({\bf y}^*) \Big] }_{\textnormal{average bias-effect}} 
     + \underbrace{\averagei\ED\Big[ R(\vectorq_i) - R(\centroid{\vectorq}_i) \Big]}_{\textnormal{average variance-effect}}
    -  \underbrace{\EDb{ \averagei\Big[ R(\vectorq_i) - R(\vectorqbar) \Big]}}_{\textnormal{diversity-effect}}
\end{align*}
\end{restatable}

We obtain four terms: the noise, and the {\em effects} of average bias, average variance, and diversity. Note that here we are considering each term including the expectation over $P(\vectorx,y)$.
For losses where Definition \ref{def:gen_bv} holds, Theorem~\ref{the:diversity_effect} reduces to \autoref{the:gen_bvd}, the bias-variance-diversity decomposition.

Similarly to the diversity in \autoref{the:gen_bvd}, the diversity-{\em effect} here has the {\em opposite} sign to the others.  Thus, as diversity-effect grows larger, it {\em reduces} expected risk.
However, the diversity-effect can potentially take {\em negative} values.  This may seem non-intuitive, but makes more sense if we consider the term is simply the difference between the average risk and the ensemble risk. So, when the ensemble performs {\em worse} than the average, this will be negative.\\

In summary, we have the same conclusion as in the previous sub-section: the effect of diversity can be formulated as a hidden dimension in a decomposition of the ensemble risk.  All these terms can be estimated from data---we explore this next.

\newpage

\subsection{\bf Estimating Diversity from Data}

We present illustrative experiments, estimating bias/variance/diversity terms from data.  
We emphasize that we make no claims on the superiority of the centroid combiner over any other, only using it obtain the decomposition.
Further experiments in Appendix~\ref{app:additionalexperiments}. 

\paragraph{Squared loss:} A simple instance of Theorem \ref{the:gen_bvd} is squared loss, $\ell(y,q)=(y-q)^2$, which implies $\qbar=\averagei q_i$, and $\centroid{q}=\ED[q]$. The decomposition at each test point $(\vectorx,y)$ is,
\begin{align}
       \ED\Big[ (\qbar-y)^2 \Big] = 
       \underbrace{\averagei  (\centroid{q}_i-y)^2}_{\textnormal{average~bias}}
        + \underbrace{\averagei \EDb{(q_i-\centroid{q}_i)^2} }_{\textnormal{average~variance}}
        - \underbrace{\ED\Big[\averagei (q_i-\qbar)^2 \Big]}_{\textnormal{diversity}}. \notag
\end{align}
Figure~\ref{fig:californiaBVDexample} shows an experiment with Bagged regression trees, varying ensemble size.
\begin{figure}[h]
    \centering
    \includegraphics[width=0.83\textwidth]{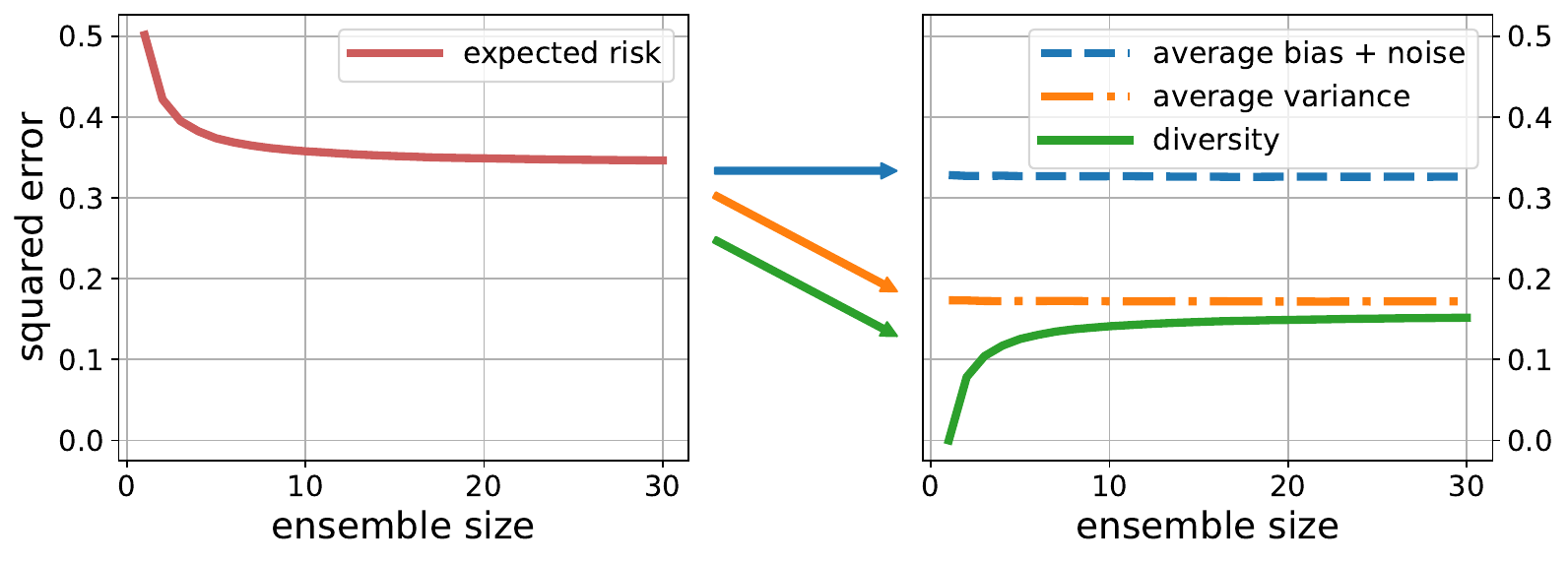}
    \caption{Bagging depth 8 trees, increasing ensemble size (California Housing data).}
    \label{fig:californiaBVDexample}
\end{figure}

Whilst expected risk decreases, the average bias and average variance are constant. This is as we might anticipate, since the individual capacities are constant, it is only the ensemble size, $m$, that we change.
In contrast, diversity {\em increases} with~$m$---subtracting from the expected risk---and the improvement {\em is determined entirely by diversity}.  
This is not so if we vary something other than $m$.  Figure~\ref{fig:californiaBVDexample_depth} fixes $m=10$, varying tree depth---{\em all three components now change}, and we have a {\em bias-variance-diversity} trade-off.
\begin{figure}[ht]
    \centering
    \includegraphics[width=0.83\textwidth]{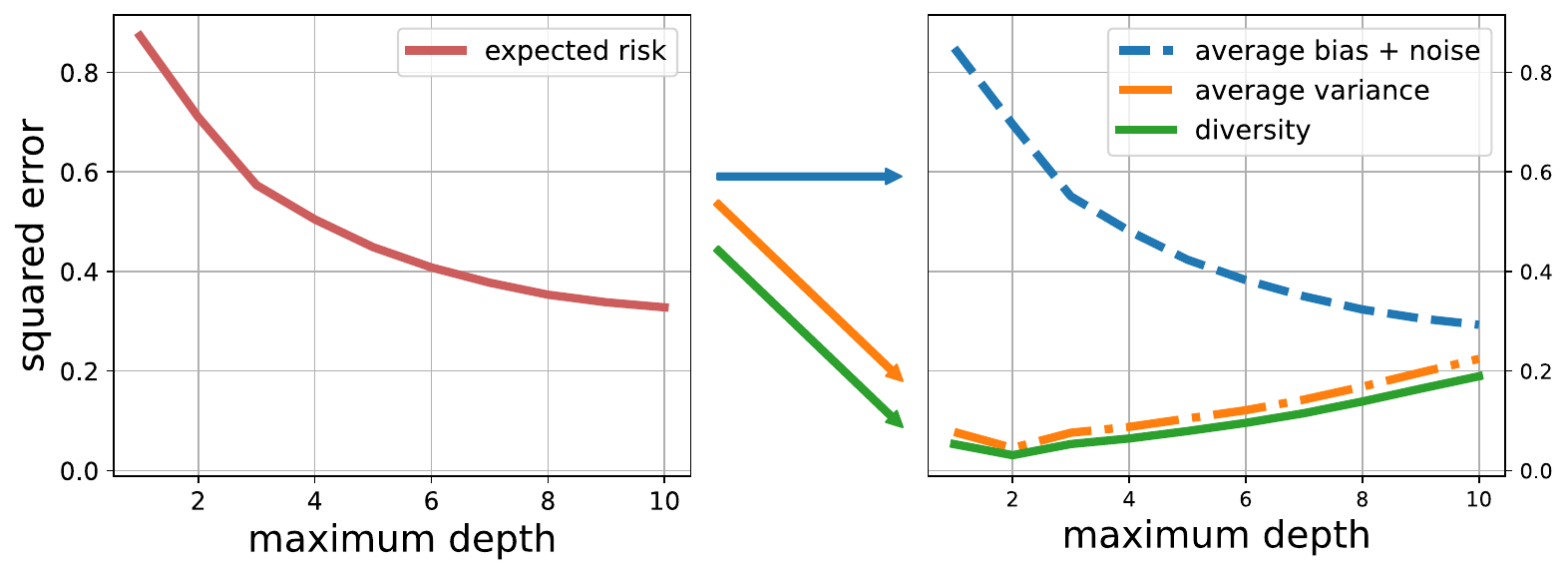}
    \caption{Bagging $m=10$ regression trees, increasing maximum depth.}
    \label{fig:californiaBVDexample_depth}
\end{figure}

\paragraph{Cross-entropy loss:}

We can also consider Theorem~\ref{the:gen_bvd} for the KL-divergence between probability vectors, where the centroid combiner $\vectorqbar$ is a normalized geometric mean.
For a single $({\bf x},{\bf y})$, where $\vectory$ is a one-hot vector,
this gives a decomposition of the cross-entropy, since for any $\vectorq$, we can write  $\KL{\vectory}{\vectorq} = -\vectory \cdot \ln \vectorq$, where we use the convention $\vectory \cdot \ln \vectory = 0$.

\begin{restatable}{corollary}{CrossEntropyDecomposition}\label{the:crossentropydiversity}
Take a test point $({\bf x},{\bf y})$, where $\bf y$ is a one-hot vector.
Consider an ensemble $\{{\bf q}\}_{i=1}^m$ 
where each model predicts a distribution, combined as $\vectorqbar := Z^{-1}\prod_i \vectorq_i^{1/m}$.
We have,
\begin{equation}
    \underbrace{\mystrut{15pt}-\EDb{ \vectory\cdot\ln \overline{\vectorq}} }_{\substack{\textnormal{expected~cross-entropy}}} =
    \underbrace{\mystrut{5pt} -\averagei{ \vectory\cdot\ln\centroid{\vectorq}_i } }_{\textnormal{average bias}} 
        + \underbrace{\mystrut{5pt} \averagei\EDb{ K(\centroid{\vectorq}_i \mid\mid \vectorq_i) } }_{\textnormal{average variance}} 
        - \underbrace{\mystrut{5pt} \EDb{ \averagei K(\overline{\vectorq} \mid\mid \vectorq_i) } }_{\textnormal{diversity}}. \notag
\end{equation}
%
\end{restatable}
 


With ensembles of neural networks, a normalized geometric mean is equivalent to averaging the network logits, a common practice \citep{hinton2015distilling}, with both pros and cons \citep{tassi2022impact}.   It is also known as the `product rule' \citep{kuncheva2014}, and has undergone much scrutiny, showing it can perform either better or worse than the arithmetic mean, depending on various factors \citep{tax1997comparison}. Regardless of performance, the normalized geometric mean is {\em required} to decompose the cross-entropy in this way. In Section \ref{sec:bregman_diversity} we consider the changes in the decomposition if using the arithmetic mean.

In Figure~\ref{fig:decompose_MLPcrossentropy} we compare Bagging MLPs on MNIST, using small networks each with a single layer of 20 hidden nodes, versus larger networks each with 100 hidden nodes.
We observe very similar patterns as we saw with squared loss.

\begin{figure}[h]
    \centering
\includegraphics[width=5.1cm]{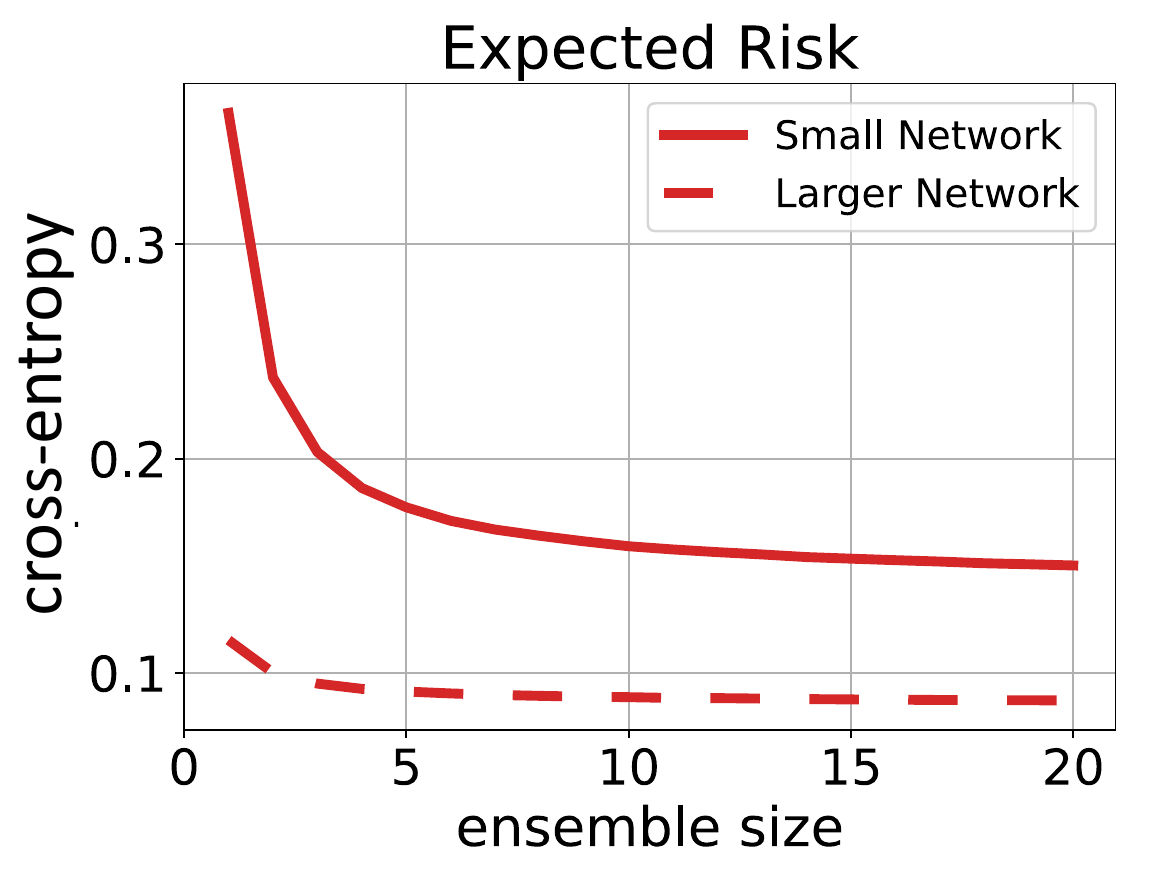}
\includegraphics[width=10cm]{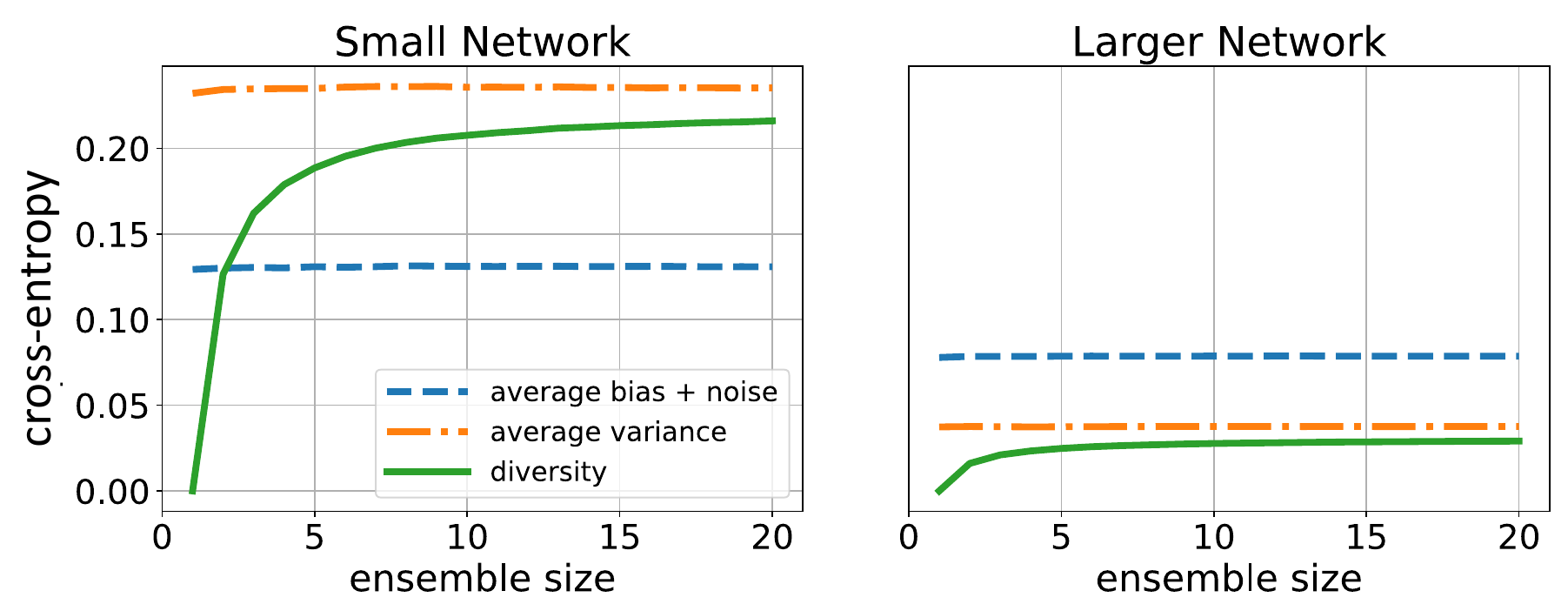}
    \caption{Decomposing expected ensemble cross-entropy for Bagging small/larger MLPs.}
    \label{fig:decompose_MLPcrossentropy}
\end{figure}

Overall, the ensemble of larger networks have performed better: this can again be explained by examining the expected risk components.
We see in the middle panel that the ensemble of smaller networks have a significantly higher diversity (green line).
However, in the far right hand panel, we see the the larger networks have a significantly lower\footnote{The lower variance is counter to the ML folklore that increasing model capacity should also increase variance; but this is consistent with recent observations \citep{yang2020rethinking}} average bias/variance (blue/orange lines).
This gives a significant advantage in the bias/variance/diversity trade-off. %
Overall, the higher diversity among the small networks is outweighed by the more powerful individuals in the ensemble of larger networks.

\newpage
\paragraph{The 0/1 loss and Majority Voting Ensembles:}
One of the most common ensemble schemes is majority voting, assessed by 0/1 loss \citep{ kuncheva2003elusive, didaci2013diversity}.
%
Here, \autoref{the:gen_bvd} does not hold. So, we use the {\em effect} decomposition of \autoref{the:diversity_effect}, and the centroid combiner $\qbar$ is the majority vote.
Using $\mathbb{P}(\qbar\neq y) \defeq\ED[\ell_{0/1}(y,\qbar)]$ for the probability of ensemble error across possible data sets drawn from $D$, at each test point $(\vectorx, y)$ we have:
\begin{align*}
    \mathbb{P}(\qbar \neq y) &=\\
    &\hspace{-1cm}\underbrace{\averagei \Big[ \ell_{0/1}(\centroid{q}_i, y) \Big] }_{\textnormal{average bias-effect}} 
     + \underbrace{\averagei \Big[\mathbb{P}(q_i\neq y) - \ell_{0/1}(\centroid{q}_i, y) \Big]}_{\textnormal{average variance-effect}}
    -  \underbrace{\averagei \Big[\mathbb{P}(q_i\neq y)  - \mathbb{P}(\qbar\neq y) \Big].}_{\textnormal{diversity-effect}}
\end{align*}
Note that the average bias-effect is simply the proportion of the ensemble members whose centroid $\centroid{q}_i$ is incorrect.
The diversity-effect is the difference between the average probability of error, and the ensemble probability of error.
One might be tempted to re-formulate the diversity-effect, looking for a rearrangement that does not involve the label. In fact, for the 0/1 loss, this is impossible: not just with the majority vote, but with {\em any} combiner rule. 

\begin{restatable}[Non-existence of label-independent diversity-effect for 0/1 loss]{theorem}{zeroOneDiversityDoesNotExist}
For the 0/1 loss, using any ensemble combiner rule, the difference between
the average individual risk
and the ensemble risk
is necessarily dependent on the label.
\end{restatable}

Therefore, the effect of diversity in majority voting ensembles is necessarily dependent on the label:  an unavoidable property of the 0/1 loss.
We now compare Bagging and Random Forests on MNIST (10 classes). In the leftmost panel below, we see the Random Forest initially under-performing, but overtaking Bagging as we increase ensemble size.

\begin{figure}[h]
    \centering    \includegraphics[width=\textwidth]{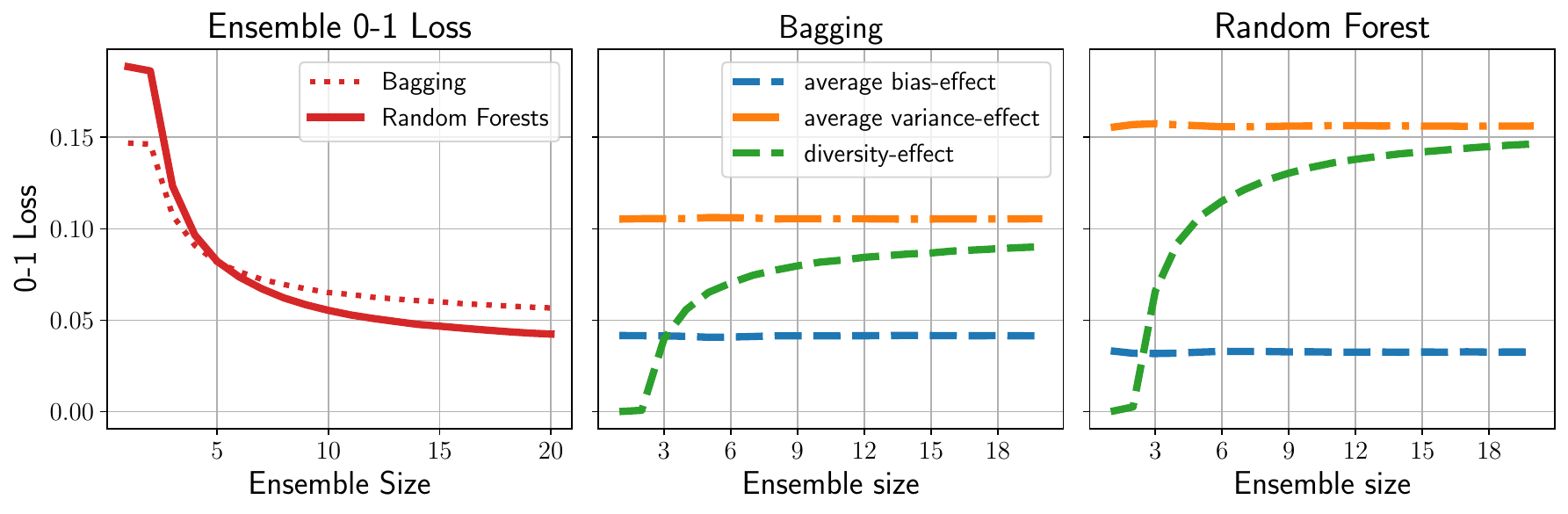}
    \caption{Bias/Variance/Diversity effect for Bagging vs Random Forests.}
    \label{xfig:diversity_effect}
\end{figure}

In the right hand panels, we decompose the expected risk, and see familiar patterns. As we increase $m$, the bias/variance-effects are constant, but diversity-effect increases.
The variance-effect in Random Forests is much higher than that of Bagging, but this is compensated by higher diversity-effect in larger ensembles.  We can also see the average bias-effect is very slightly lower in the Random Forest, which we attribute to the trees being able to avoid noisy features.
%

\newpage

Theorem~\ref{the:diversity_effect} can be extended relatively easily for {\em weighted} voting---as shown in Appendix~\ref{appsubsec:weightedDiversityEffect}.
In Figure~\ref{fig:adaboost_vs_bagging} we plot the ensemble  0/1 risk components for Bagging and two different boosting algorithms, using
decision stumps as base learners. Here we adopt a synthetic dataset used by \citet{mease2008evidence} to analyse boosting algorithms.

\begin{figure}[h]
    \centering
    \includegraphics[width=.95\textwidth]{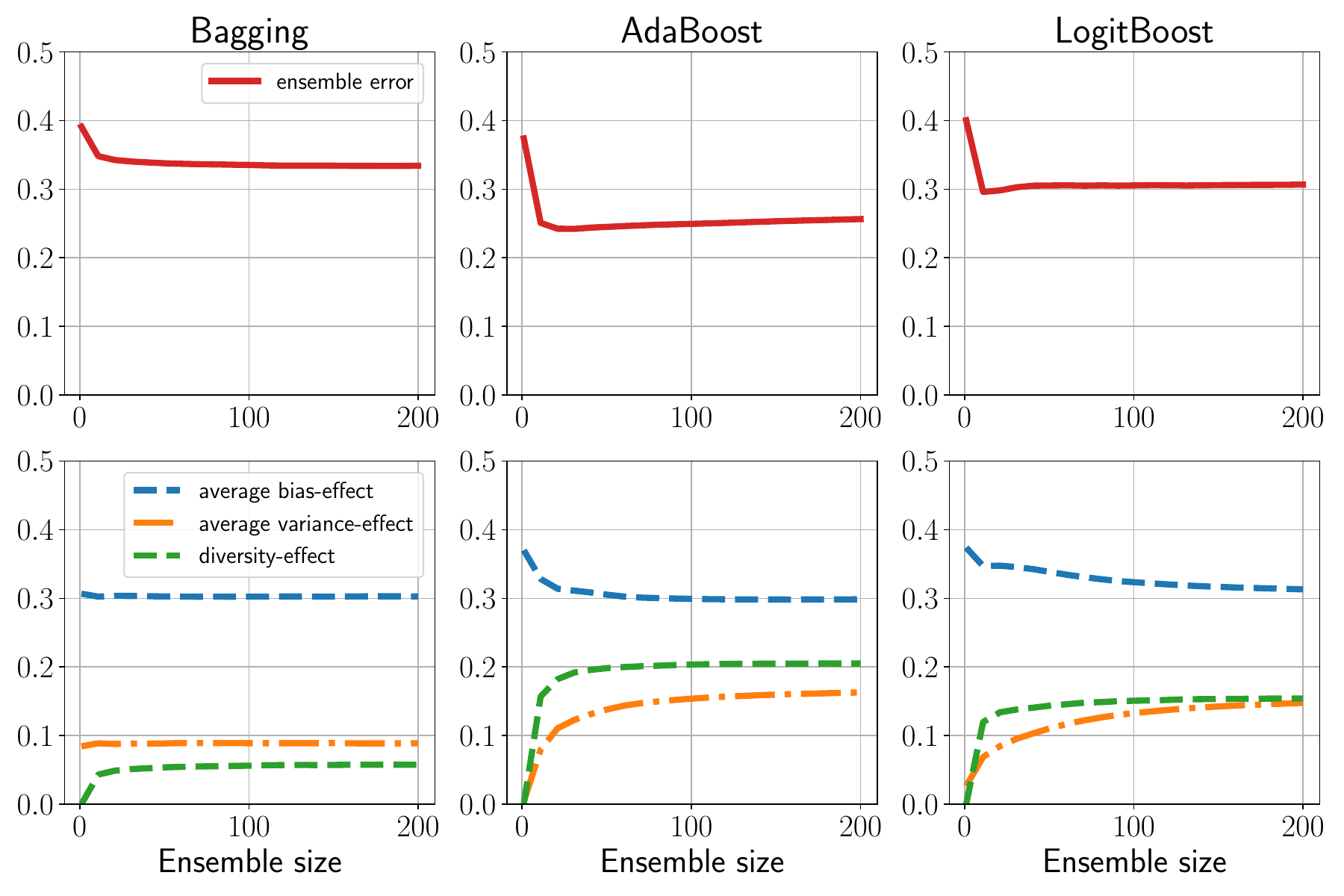}
    \caption{Mease data, ensembling decision stumps.}
    \label{fig:adaboost_vs_bagging}
\end{figure}

We note some interesting differences between the parallel model (Bagging) and the sequential models (LogitBoost/AdaBoost).
As before, the bias/variance are constant for Bagging. However for boosting, they vary with $m$. This is due to the non-homogeneous nature of the individuals, specialising to different parts of the data.
With the 0/1 loss, the diversity-effect can be \emph{greater} than the variance-effect, as opposed to the diversity itself, which is upper-bounded by the average variance (see Section~\ref{sec:bregman_diversity}).
More generally, with Bagging the performance comes from increasing diversity. But, the story for boosting is more subtle, with performance determined by a complex trade-off between the components.

\subsection{Summary}
We presented a framework to understand diversity---explaining it as a measure of model fit, in precisely the same sense as bias/variance, but with the opposite sign. The  expected ensemble risk is then determined by a three-way {\em bias}/{\em variance}/{\em diversity} trade-off.
When an additive bias-variance decomposition is not available, we showed how to instead isolate the {\em effects} of bias/variance and diversity.
In Section \ref{sec:bregman_diversity} we apply this framework with {\em Bregman divergences} \citep{bregman1967relaxation}. This will give us a convenient analytical form, and additional insights: notably, a deeper understanding of the centroid combiner rule.

\clearpage
\newpage

\section{Understanding
Diversity with Bregman Divergences}
\label{sec:bregman_diversity}

We apply our framework
to the family of {\em Bregman divergences} \citep{bregman1967relaxation},
including many losses as special cases,
and enabling us to derive several interesting properties.

%

\subsection{The Basics of Bregman Divergences}
\newcommand{\targ}{y}
\newcommand{\qmodel}{\ensuremath{q}}

A Bregman divergence is defined in terms of a {\em generator function}, $\phi$.
Let $\phi:\bregmanDomain\rightarrow \mathbb{R}$
be a strictly convex function defined on a convex set $\bregmanDomain\subseteq \mathbb{R}^k$, such that $\phi$ is differentiable on $\relativeinterior(\bregmanDomain)$---the relative interior of $\bregmanDomain$.
The Bregman divergence $B_\phi:\bregmanDomain \times \textrm{ri}(\bregmanDomain) \rightarrow \mathbb{R}_+$ is defined,
\begin{equation}
    \Bregman{\bf \targ}{\vectorq} \defeq \bregman{\bf \targ}{\vectorq},
\end{equation}
where $\langle\cdot,\cdot\rangle$ denotes an inner product, and $\nabla\phi(\vectorq)$ denotes the gradient vector of $\phi$ at $\vectorq$. 
%
%
The choice of $\phi$ leads to many different losses.  With $\generator{q}=q^2$, the gradient vector $\nabla \phi( \vectorq)$ is a scalar derivative $\mathrm{d}\phi(q)/\mathrm{d} q = 2q$, and we recover a squared loss, $\Bregman{\targ}{\qmodel} = (\targ-q)^2$.

\begin{figure}[ht]
\centering
\begin{minipage}[b]{0.46\linewidth}
    \centering
    \includegraphics[width=\textwidth]{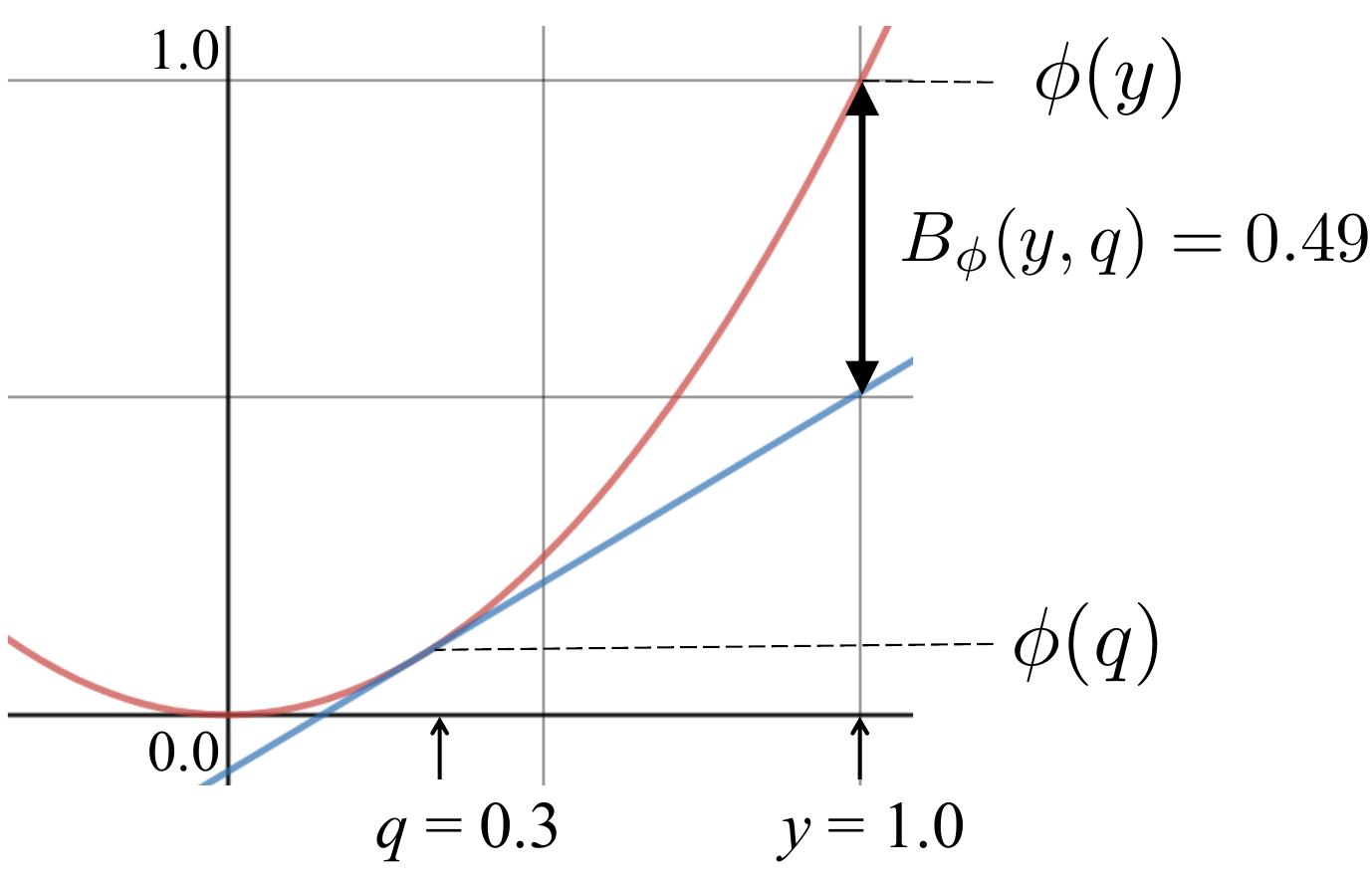}
\end{minipage}
\hspace{1.0cm}
\begin{minipage}[b]{0.4\linewidth}
    \centering
    \footnotesize
    \begin{eqnarray}
        \Bregman{\targ}{\qmodel}&=&\bregman{\targ}{\qmodel}\nonumber\\
            &=& \targ^2-\qmodel^2-\langle 2\qmodel, (\targ-\qmodel)\rangle\nonumber\\
            &=& \targ^2-\qmodel^2 - 2\targ\qmodel + 2\qmodel^2\nonumber\\
            &=& \targ^2 + \qmodel^2 - 2\targ\qmodel\nonumber\\
            &=& (\targ-\qmodel)^2\nonumber
    \end{eqnarray}
    ~\\
    ~\\
\end{minipage}
\caption{Bregman divergence illustrated for the generator $\generator{\qmodel}=\qmodel^2$.  For this example, we have a divergence $\Bregman{\targ}{\qmodel}=(\targ-\qmodel)^2=(1.0-0.3)^2 = 0.49$.}
\label{fig:bregmansquared}
\end{figure}
\noindent Alternatively, we can take a vector $\vectorq \in \mathbb{R}^{k-1}$, for a $k$-class problem. Note, this is not a probability vector summing to one.  It is, however, the minimal description of the distribution, as the $k$th class probability is $1-\sum_c \vectorq^{(c)}$.
With a particular generator (see final row of Table \ref{tab:bregmangenerators}) we recover the the KL-divergence between the distributions in $\mathbb{R}^k$.

\begin{table}[ht]
\centering
\def\arraystretch{1}
\begin{tabular}{@{}lllll@{}}
    \toprule[0.9pt]
    \bf Loss function
       & \bf $\Bregman{\bf y}{\bf q}$ 
       & \bf Generator $\generator{\bf q}$   
       & \bf \shortstack{Domain $\bregmanDomain$}\\ 
    \midrule
        Squared loss
        & $(\targ-\qmodel)^2$ 
        & $\qmodel^2$
        & $\qmodel\in\mathbb{R}$ \\
        Itakura-Saito
        & $\frac{\targ}{\qmodel}-\ln\frac{\targ}{\qmodel}-1$ 
        & $-\ln \qmodel$~~ 
        & $\qmodel\in\mathbb{R}^+$\\
    Poisson loss
        & $\targ \ln \frac{\targ}{\qmodel} - (\targ-\qmodel)$ 
        & $\qmodel\ln \qmodel - \qmodel$ 
        & $\qmodel\in\mathbb{R}^+$\\
    \shortstack{KL-divergence\\~\\~\\~}
        & \shortstack[l]{$\sum \mathbf{\targ}^{(c)} \ln \frac{\mathbf{\targ}^{(c)}}{\mathbf{\qmodel}^{(c)}}~+ $\\$\left(1 - \sum \mathbf{\targ}^{(c)}\right) \ln\frac{1 - \sum  \mathbf{\targ}^{(c)}}{1 - \sum  \mathbf{\qmodel}^{(c)}}$ }
        & \shortstack[l]{\\~\\~\\$\sum \mathbf{q}^{(c)} \ln \mathbf{q}^{(c)}~+$\\ $\left(1 -\sum \mathbf{q}^{(c)}\right) \ln \left(1 - \sum \mathbf{q}^{(c)}\right) $}
        & \shortstack[l]{${\bf \qmodel}\in [0,1]^{k-1}$\\ $s.t. \sum_c \mathbf{q}^{(c)} \leq 1$}\\
        \bottomrule[1.5pt]
\end{tabular}
\caption{Common loss functions and their Bregman generators.}
\label{tab:bregmangenerators}
\end{table}

\subsection{Applying our Framework to the family of Bregman Divergences}

As discussed in Section \ref{sec:unified_diversity}, we require the existence of a bias-variance decomposition in the form of Definition~\ref{def:gen_bv}.  For Bregman divergences, \citet{pfau2013} proved the following:
\begin{align}
    \underbrace{\mystrut{1em}\mathbb{E}_D\Big[ \EXY[\Bregman{\bf Y}{\vectorq}] \Big]}_{\textnormal{expected ensemble risk}} = \Ex\Big[ \underbrace{\mystrut{1em}\mathbb{E}_{\vectorY|\vectorX}[\Bregman{\bf Y}{\vectorYstar}]}_{\textnormal{noise}} + \underbrace{\mystrut{1em}\Bregman{\vectorYstar}{\centroid{\vectorq}}}_{\textnormal{bias}} + \underbrace{\mystrut{1em}\EDb{\Bregman{\centroid{\vectorq}}{\vectorq}}}_{\textnormal{variance}} \Big], \label{eq:bregman_bv}
\end{align}
where the centroid $\centroid{\vectorq}$ is conveniently available in closed-form,
\begin{align}
        \centroid{\vectorq} ~\defeq~ \argmin_{\vectorz\in\bregmanDomain} \mathbb{E}_D\Big[ \Bregman{{\mathbf{z}}}{\vectorq} \Big]
        ~=~
        \left[\nabla\phi\right]^{-1} \Big(\mathbb{E}_D \left[\nabla \generator{\vectorq} \right] \Big).
    \label{eq:main_centroid}
\end{align}
%
%
This matches Definition 2.  If $\phi(q)=q^2$, then $B_\phi(y,q)=(y-q)^2$, and $\centroid{q}=\EDb{q}$, meaning the overall expression is exactly that of \citet{geman1992neural}.
With other generators/losses, the expression corresponds to other bias-variance decompositions, e.g., \citet{heskes1998bias}.
%
From this one result, using the framework presented in Section 4, we can derive Bregman versions of the ambiguity decomposition, the centroid combiner, and the bias-variance-diversity decomposition.

\begin{restatable}[Bregman Ambiguity Decomposition]{theorem}{BregmanAmbiguity}
    \label{the:bregmanambiguity}
    For a label $\vectory\in\bregmanDomain$ and a set of predictions $\vectorq_1, \ldots, \vectorq_m \in \textnormal{ri}(\bregmanDomain)$, combined as ${\bf\qbar} = [\nabla\phi]^{-1}\big(\averagei \nabla\phi( {\bf q}_i) \big)$. Then we have:
    \begin{align}
        \Bregman{\vectory}{{\bf\qbar}} = \averagei 
        \Bregman{\vectory}{{\mathbf{q}}_i} - \averagei \Bregman{{\bf\qbar}}{{\bf q}_i}.\label{eq:bregman_ambiguity}
    \end{align}
\end{restatable}
%


%
%
\begin{definition}[Bregman Centroid Combiner]\label{def:leftcentroid}

The Bregman centroid combiner is the
minimizer of the average divergence 
from all members. For an ensemble $\{\vectorq_i\}_{i=1}^m$, this is

\begin{equation}
\vectorqbar ~\defeq~ \argmin_{\vectorz\in\bregmanDomain} \averagei \Bregman{{\bf z}}{{\bf q}_i} ~=~ \left[\nabla \phi\right]^{-1}\Big(\averagei \nabla \phi( {\bf q}_i ) \Big).
\label{eq:leftcentroid}
\end{equation}
\end{definition}


\begin{restatable}[Bregman Bias-Variance-Diversity decomposition]{theorem}{bvdd}~\label{the:bvdd}
~\\
    For an ensemble of models $\{\vectorq_i\}_{i=1}^m$, let $\centroid{\vectorq}_i$ be the left Bregman centroid of model $\vectorq_i$, i.e., $\centroid{\vectorq}_i \defeq [\nabla \phi]^{-1}\left(\EDb{\nabla\phi(\vectorq_i)} \right)$, and define the ensemble output $\vectorqbar\defeq\gradinv{\averagei\nabla\generator{\vectorq_i} }$. Then we have the decomposition
    \begin{align*}
    \EDb{ \EXYb{\Bregman{\vectorY}{\vectorqbar}}}  &=\\
    %
    %
    &\hspace{-3.5cm}\mathbb{E}_{\bf X}\Bigg[
        \underbrace{\mystrut{15pt}\mathbb{E}_{{\bf Y}|{\bf X}}\Bregman{\vectorY}{\vectorYstar} }_{\textnormal{noise}}
        +\underbrace{\mystrut{15pt}\averagei \Bregman{\vectorYstar }{\centroid{\vectorq}_i}}_{\textnormal{average bias}} 
        + \underbrace{\mystrut{15pt}\averagei\EDb{\Bregman{\centroid{\vectorq}_i}{\vectorq_i} }}_{\textnormal{average variance}} 
        - \underbrace{\mystrut{15pt}\mathbb{E}_D\Big[\averagei \Bregman{\vectorqbar}{\vectorq_i }}_{\textnormal{diversity}}\Big]\Bigg], \notag
    \end{align*}
    where $\vectorYstar = \mathbb{E}_{\vectorY | \mathbf{X}} \left[ \vectorY \right] $.
\end{restatable}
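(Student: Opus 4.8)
The plan is to follow the double-decomposition strategy of Figure~\ref{fig:doubledecomp}: first peel off the diversity term using the Bregman ambiguity decomposition, and then resolve the surviving average-loss term into noise, average bias, and average variance by combining the pointwise Bregman bias-variance identity of Equation~\eqref{eq:bregman_bv} with a complementary identity that handles the randomness of the target $\vectorY$.

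First I would fix $\mathbf{x}$, $\vectory$ and $D$ and apply the Bregman ambiguity decomposition (Theorem~\ref{the:bregmanambiguity}) to the combiner $\vectorqbar = \gradinv{\averagei\gradgen{\vectorq_i}}$, writing $\Bregman{\vectorY}{\vectorqbar} = \averagei\Bregman{\vectorY}{\vectorq_i} - \averagei\Bregman{\vectorqbar}{\vectorq_i}$. Taking $\EDb{\EXYb{\,\cdot\,}}$ of both sides and using linearity, the second term contributes exactly $-\EXY\big[\EDb{\averagei\Bregman{\vectorqbar}{\vectorq_i}}\big]$, the claimed diversity term; because $\vectorqbar$ and each $\vectorq_i$ depend only on $\mathbf{x}$ and $D$ and not on $\vectory$, the $\vectorY$-expectation there is vacuous, which is why diversity may be written under $\EXY$ unchanged. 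It then remains to show that $\averagei \EDb{\EXYb{\Bregman{\vectorY}{\vectorq_i}}}$ equals the sum of the noise, average-bias and average-variance terms.

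For that surviving term I would decompose each individual's expected loss $\EDb{\EYsb{\Bregman{\vectorY}{\vectorq_i}}}$, conditioned on $\mathbf{x}$, in two stages. The first stage applies Equation~\eqref{eq:bregman_bv} to the $D$-randomness---which lives in the \emph{second} argument of $\Bregman{\vectory}{\vectorq_i}$---giving $\EDb{\Bregman{\vectory}{\vectorq_i}} = \Bregman{\vectory}{\vectorq_i^*} + \EDb{\Bregman{\vectorq_i^*}{\vectorq_i}}$ with the left centroid $\vectorq_i^* = \gradinv{\EDb{\gradgen{\vectorq_i}}}$ from Equation~\eqref{eq:main_centroid}; averaged over $i$ this isolates the average-variance term. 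The second stage takes the $\vectorY$-expectation of the leading term $\Bregman{\vectorY}{\vectorq_i^*}$, whose randomness now lives in the \emph{first} argument, and splits it into noise and bias via the complementary identity $\EYsb{\Bregman{\vectorY}{\mathbf{z}}} = \EYsb{\Bregman{\vectorY}{\vectorYbar}} + \Bregman{\vectorYbar}{\mathbf{z}}$, valid for any fixed $\mathbf{z}$ with $\vectorYbar = \mathbb{E}_{\vectorY|\mathbf{X}}[\vectorY]$. Applied at $\mathbf{z} = \vectorq_i^*$ this produces the noise term $\EYsb{\Bregman{\vectorY}{\vectorYbar}}$ (independent of $i$) and the per-model bias $\Bregman{\vectorYbar}{\vectorq_i^*}$; averaging over $i$, integrating over $\mathbf{x}$, and recombining with the diversity term assembles the four claimed terms.

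The main obstacle is the bookkeeping around the two distinct centroids and the argument slot in which each source of randomness sits. Equation~\eqref{eq:bregman_bv} handles randomness in the second argument and produces the \emph{dual} left centroid $\vectorq_i^*$, whereas the noise split concerns randomness in the first argument and produces the ordinary arithmetic mean $\vectorYbar$; conflating the two would substitute the wrong centroid and break the identity. The first-argument identity is the single ingredient not already stated in the excerpt, so I would verify it directly from $\Bregman{\mathbf{p}}{\mathbf{q}} = \bregman{\mathbf{p}}{\mathbf{q}}$: the dependence of $\Bregman{\vectorY}{\mathbf{z}}$ on $\vectorY$ is the term $\generator{\vectorY}$ plus a part linear in $\vectorY$, so taking $\EYsb{\,\cdot\,}$ replaces the linear part by its value at $\vectorYbar$ while $\generator{\vectorY}$ cancels, giving $\EYsb{\Bregman{\vectorY}{\mathbf{z}}} - \Bregman{\vectorYbar}{\mathbf{z}} = \EYsb{\generator{\vectorY}} - \generator{\vectorYbar} = \EYsb{\Bregman{\vectorY}{\vectorYbar}}$. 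Beyond this, the only remaining care is to justify interchanging the $D$-, $\mathbf{X}$- and $\vectorY$-expectations (Fubini, under integrability) and to note that $\vectorq_i^*$ and $\vectorqbar$ are constants with respect to the $\vectorY$-expectation.
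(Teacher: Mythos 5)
Your proposal is correct and takes essentially the same route as the paper's own proof: apply the Bregman ambiguity decomposition (Theorem~\ref{the:bregmanambiguity}) to extract the diversity term, then resolve the surviving average loss into noise, average bias, and average variance via the Bregman bias-variance decomposition. The only difference is one of packaging: the paper cites the noise-inclusive statement of Pfau's result (Theorem~\ref{the:bregman_bv}) in a single step, whereas you reconstruct that statement from the pointwise Equation~\eqref{eq:bregman_bv} together with the first-argument identity $\EYsb{\Bregman{\vectorY}{\mathbf{z}}} = \EYsb{\Bregman{\vectorY}{\vectorYbar}} + \Bregman{\vectorYbar}{\mathbf{z}}$, which you verify correctly (the linear term vanishes at the conditional mean $\vectorYbar$).
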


Examples for different losses (i.e., different Bregman generators) are shown in Table \ref{bigtableofmeasures}.
One point that this makes clear is that {\em the mathematical formulation of diversity is specific to the loss function}. 

\newpage

\begin{landscape}
\begin{table}[htbp]\renewcommand{\arraystretch}{3}
\setlength\extrarowheight{2pt} 

\begin{center}\small
\begin{tabular}{@{}llll@{}}
\toprule[1.5pt]
        \bf \shortstack[l]{Expected Ensemble Loss}
        &\bf Average Bias
        &\bf Average Variance
        &\bf Diversity
    \\ \midrule
        \shortstack[l]{\bf Squared
        \\${\EDb{(\qbar-y)^2}}$}
        & \mystrut{.6cm}$\displaystyle\averagei {(\centroid{q}_i - y)^2}$
        & $\displaystyle\averagei \EDb{(q_i - \centroid{q}_i)^2}$
        &$\displaystyle\mathbb{E}_D\Big[~ \averagei \Big( q_i - \qbar \Big)^2 ~\Big]$\\
        \shortstack[l]{~\\\bf KL-divergence (Bernoulli) \\
        ${\EDb{y\ln\frac{y}{\qbar}+(1{-}y)\ln\frac{1{-}y}{1{-}\qbar}}}$}
        & \mystrut{0.6cm}$\frac{1}{m} \sum\limits_{i=1}^m y\ln \frac{y}{\centroid{q}_i} + (1-y)\ln \frac{1-y}{1-\centroid{q}_i}$
        & \mystrut{0.6cm}$\frac{1}{m} \sum\limits_{i=1}^m \ED \left[ \centroid{q}_i\ln \frac{\centroid{q}_i}{q_i} + (1-\centroid{q}_i)\ln \frac{1-\centroid{q}_i}{1-q_i}\right]$
        & \mystrut{0.6cm}$\EDb{\frac{1}{m}\sum\limits_{i=1}^m \qbar\ln \frac{\qbar}{q_i} + (1-\qbar)\ln \frac{1-\qbar}{1-q_i}}$\\   
        \shortstack[l]{~\\ \bf KL-divergence (Multinoulli) \\
        ${\EDb{\KL{\vectory}{\vectorqbar}}}$}
       & \mystrut{0.7cm}$\displaystyle\averagei {\KL{\vectory}{\centroid{\vectorq}_i}}$
        & $\displaystyle\averagei\EDb{\KL{\centroid{\vectorq}_i}{\vectorq_i}}$ 
        & $\displaystyle \EDb{\averagei \KL{\vectorqbar}{\vectorq_i}}$ \\ 
        \shortstack[l]{ \bf Itakura-Saito
        \\ ${\EDb{\frac{y}{q} - \ln \frac{y}{\qbar} - 1 }}$}
        &  \mystrut{0.7cm}$\displaystyle\averagei \frac{y}{\centroid{q}_i} - \ln \frac{y}{\centroid{q}_i} - 1 $
        & $\displaystyle\averagei \EDb{\frac{\centroid{q}_i}{q_i} - \ln \frac{\centroid{q}_i}{q_i} - 1 }$
        & $\displaystyle\mathbb{E}_D \left[ \averagei\Bigg( \frac{\qbar}{\qmodel_i}-\ln\frac{\qbar}{\qmodel_i}-1\Bigg)\right]$\\
        \shortstack[l]{\bf Poisson
        \\${\EDb{y \ln \frac{y}{\qbar} - (y-\qbar)}}$}
        &  \mystrut{0.7cm}$\displaystyle\averagei \left[y \ln \frac{y}{\centroid{q}_i} - (y-\centroid{q}_i)\right]$
        & 
        $\displaystyle\averagei\Big[ \EDb{q_i} - \centroid{q}_i \Big]$
        & $\displaystyle\mathbb{E}_D\Bigg[ \averagei q_i ~-~ \prod_{i=1}^m q_i^{1/m} \Bigg]$\\
    \bottomrule[1.5pt]
\end{tabular}
\end{center}
\caption{Bias, variance, and diversity under different Bregman divergences. In all cases, the expectation over $p(\vectorx)$ is omitted and the expressions given are for a single point $(\vectorx,\vectory)$.
See Table~\ref{main:tab:bregmancombiners} for the definitions of the centroid combiners $\qbar$.}
\label{bigtableofmeasures}
\end{table} 
\end{landscape}
\pagestyle{plain}

\subsection{Properties of the Bregman Centroid Combiner}

The centroid combiner is the result of a constraint, requiring the ensemble combination $\vectorqbar$ to take the same analytical form as the centroid $\centroid{\vectorq}$, as found in the bias-variance decomposition for the relevant loss.
For Bregman divergences, this form (Definition~\ref{def:leftcentroid}) is known in the information geometry literature: as a {\em left Bregman centroid} \citep{nielsen2009sided}.   In general these turn out to be
{\em quasi-arithmetic} means, and include several well-known ensemble combiner rules, some shown in Table~\ref{main:tab:bregmancombiners}.
\begin{table}[ht]
\centering
\def\arraystretch{1.2}
\begin{tabular}{@{}lll@{}}
        \toprule[1.5pt]
        \bf Loss function
        & \bf Centroid Combiner
        & \bf Name\\ \midrule
        Squared loss 
        & $\averagei q_i$
        & Arithmetic mean\\[2ex]
        Poisson regression loss 
        & $\prod_{i=1}^m q_i^{\frac{1}{m}}$
        & Geometric mean\\[2ex]
        \shortstack{KL-divergence \\ \hspace{1em}} & 
        $Z^{-1}\prod_{i=1}^m\left(\vectorq_i^{(c)}\right)^{\frac{1}{m}}$
        & Normalised geometric mean\\[2ex]
        Itakura-Saito loss & 
        $1 \big{/} \Big(\averagei \frac{1}{q_i}\Big)$
        & Harmonic mean\\[1ex]
        \bottomrule[1.5pt]
\end{tabular}
    \caption{Centroid combiners (i.e., left Bregman centroid of the ensemble) for various losses.}
\label{main:tab:bregmancombiners}
\end{table}

%

{\noindent\bf Ensemble averaging in a dual coordinate system:} The centroid combiner can be understood as an ensemble averaging operation,
{\em in a new coordinate system}. The mapping between coordinate systems is defined by the gradient of the Bregman generator with respect to its argument, $\nabla\phi(\vectorq)$.  This is illustrated  in Figure~\ref{fig:spaces} for the KL-divergence.

\begin{figure}[ht]
    \centering

    \begin{tikzpicture}[scale=1.35,
    thick,
    >=stealth',
    dot/.style = {
      draw,
      fill = white,
      circle,
      inner sep = 0pt,
      minimum size = 4pt
    }
    ]
    \input{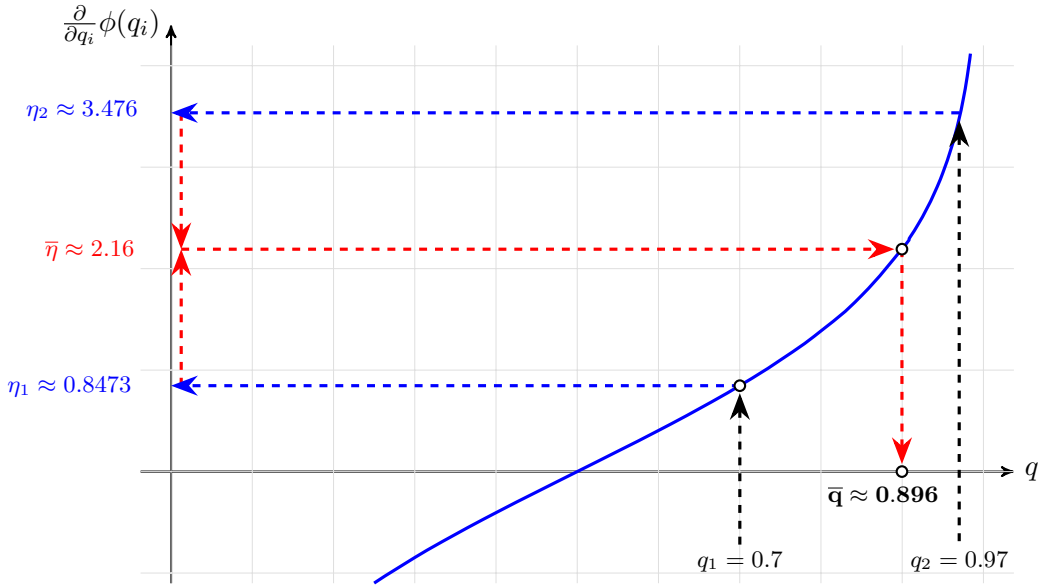}
    \end{tikzpicture}
    \caption{Ensemble averaging in the geometry defined by the KL-divergence.} 
    \label{fig:spaces}
    
\end{figure}

We use notation $q$ for the \emph{primal} coordinate system, and $\eta$ for the \emph{dual} coordinate system. In this simple illustration we are predicting a single probability $p\in (0,1)$.
The primal-dual mapping is the gradient $\eta_i=\frac{\partial}{\partial q_i}\phi(q_i)=\ln\frac{q_i}{1-q_i}$, plotted as the blue curve.  Two points in the primal $\{q_1=0.7,q_2=0.97\}$ are mapped to the dual $\{\eta_1\approx 0.8473 ,\eta_2\approx 3.476\}$, then combined via arithmetic mean ($\overline{\eta}\approx 2.16$), and finally mapped back by the inverse operation $\overline{q}=\textrm{exp}(\overline{\eta})/(1+\textrm{exp}(\overline{\eta}))\approx 0.896$.
The centroid combiner is therefore an {\em arithmetic mean ensemble in the dual coordinate system}, which is equivalent to
the left Bregman centroid of the models in the {\em primal} coordinate system. 
An equivalent definition was considered by \citet{gupta2022ensembling} under the assumption of i.i.d. models. Our analysis both complements and extends this by removing the i.i.d. assumption, and more fully characterising the properties of ensembles using this combination rule.\\

{\noindent \bf Example for KL-divergence of probability estimates:} In the case of KL divergence, the centroid combiner is a normalised\footnote{Note that the Bregman centroid is only a normalized distribution with $\phi$ as in Table 2---see Appendix~\ref{app:parameterEncoding}.} geometric mean in the primal coordinate system, i.e., the probability simplex.  Figure~\ref{fig:centralmodelimage} shows this for the 3-class case. 
    
\begin{figure}[ht]
\centering
\includegraphics[width=\textwidth]{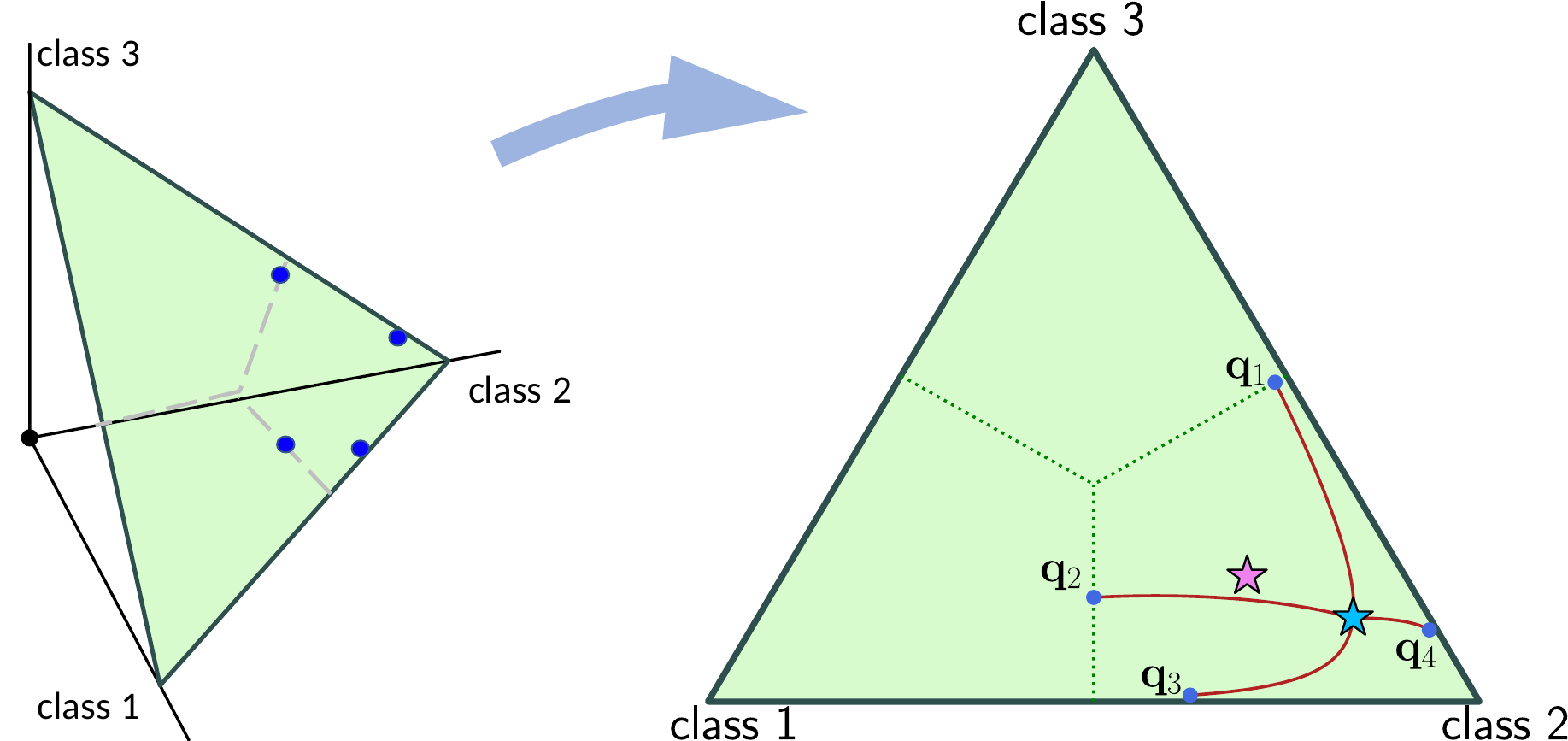}
\caption{Combining $m=4$ predictions in the probability simplex. }
\label{fig:centralmodelimage}
\end{figure}

The centroid combiner is shown as the blue star (also connected to the 4 individual points)
and the {\em arithmetic} mean as the pink star, which is the centre of mass for the points.
{\noindent For squared loss the resulting Bregman geometry is Euclidean, meaning the centre of mass and the centroid are the same.  However, other divergences define a non-Euclidean geometry, meaning the centroid and centre of mass are different. }
Note that points are connected in the simplex not by straight lines, but by {\em dual geodesics} defined by the choice of $\phi$~\citep{nielsen2009sided}.
A different $\phi$ (and therefore a different loss) would provide a different
geometry, and imply a different ensemble combination.\\


\newpage

\newpage
\subsection{Properties of the Decomposition}
\label{sec:furtherproperties}

We explore further properties of the bias-variance-diversity decomposition. 

\subsubsection{Homogeneous vs Heterogeneous Ensembles}

An ensemble comprises a set of models $\{\vectorq_i\}_{i=1}^m$, but can also be considered as `one big model' in its own right.  Thus, we can discuss the bias/variance of $\vectorqbar$ itself.
%
%
These bias/variance terms can be related to those of the individual models, using the theorem below.
\begin{restatable}{theorem}{ensbiasvariance}
The ensemble bias and ensemble variance can be re-written as a function of the individual models:
\begin{align}
\underbrace{\mystrut{15pt} \Bregman{\vectory}{\centroid{\vectorqbar}} }_{\textnormal{ensemble bias}} &~=~ \underbrace{\mystrut{15pt}\averagei\Bregman{\vectory}{\centroid{\vectorq}_i}}_{\textnormal{average bias}} ~-~ \Delta, \label{eq:ensemblebias}\\
\underbrace{\mystrut{15pt}\EDb{\Bregman{\centroid{\vectorqbar}}{\vectorqbar}}}_{\textnormal{ensemble variance}} &~=~ \Delta ~+~ \underbrace{\mystrut{15pt}\averagei\EDb{\Bregman{\centroid{\vectorq}_i}{\vectorq_i}}}_{\textnormal{average~variance}} - \underbrace{\mystrut{15pt}\EDb{\averagei\Bregman{\vectorqbar}{\vectorq_i}}}_{\textnormal{diversity}},
    \label{eq:ensemblevariance}
\end{align}
where the common term is $\Delta = \averagei \Bregman{\centroid{\vectorqbar}}{\centroid{\vectorq}_i}$.
\end{restatable}

The interesting term here is $\Delta$, which we refer to as the model ``disparity''. This accounts for  diversity among the model {\em families} in the ensemble.
If the individual models are all the same family, (e.g., all neural nets of a particular architecture), then the only differences in their predictions will be due to the random variable $D$.  In this case their centroids will be the same $\centroid{\vectorq}_i=\centroid{\vectorq}_j=\centroid{\vectorqbar}$, and the disparity will be zero. 
If instead they have different inductive biases, the centroids are not necessarily the same, and the disparity may be non-zero. These are usually referred to as {\em homogeneous} vs. {\em heterogeneous} ensembles.\\

{\noindent\bf Bias reduction?}
In a homogeneous ensemble, $\Delta=0$, and the ensemble bias is equal to the average, i.e., no reduction in bias, relative to the individual models.  In a heterogeneous ensemble with sufficiently different model families, $\Delta>0$, and the bias is {\em always reduced}.\\

{\noindent\bf Variance reduction?} Ensembles are often referred to as `variance reduction' methods. 
We see in Equation~\eqref{eq:ensemblevariance} that for any Bregman divergence, for a homogeneous ensemble ($\Delta=0$), the ensemble variance is guaranteed to be less than the average, i.e., we retain the variance reduction property, {\em provided that we use the centroid combiner rule.}  From the same perspective, we see the diversity is upper-bounded by the average variance.   The amount by which the ensemble variance is reduced is exactly the diversity of the ensemble.  In a heterogeneous ensemble, the story is less simple, as it has both the addition of the disparity, and the subtraction of the diversity.\\

The properties above apply for {\em any} Bregman divergence. In practice, we will have a particular choice for a learning problem---we now discuss specific properties of two ubiquitous losses: the cross-entropy, and squared loss.

\newpage

\subsubsection{Cross Entropy Loss: Averaging estimates of class probabilities}
When combining class probability estimates, a very popular strategy is to take their arithmetic mean, e.g., \citet{lakshminarayanan2017simple}, but, if we use the cross-entropy, this is {\em not} the centroid combiner. 
We might wonder what effect this has.  The proposition below demonstrates that the cross-entropy loss of the ensemble is still \emph{guaranteed to be less than the average loss of its members}, but the ambiguity becomes dependent on the label.

\begin{proposition}
Assume a true probability vector, $\vectory\in \mathbb{R}^{k}$,
and a set of models $\{\vectorq_i\}_{i=1}^m$ combined by averaging, i.e., $\vectorq^\dagger=\averagei \vectorq_i$, then the cross-entropy loss is
\begin{align}
   \underbrace{\mystrut{2.em}-{ \vectory\cdot\ln {\vectorq^\dagger} }}_{\textnormal{ensemble cross-entropy}} ~=~ \underbrace{\mystrut{2.em}-\averagei \vectory\cdot\ln{\vectorq_i}}_{\textnormal{average cross-entropy}} ~-~ \underbrace{\mystrut{2.em}\sum_{c=1}^{k} {\vectory}^{(c)} \ln \dfrac{\frac{1}{m}\sum_{j=1}^{m} {\vectorq}_j^{(c)}} {\left(\prod_{i=1}^{m} {\vectorq}_i^{(c)}\right)^{\frac{1}{m}} }}_{\textnormal{ambiguity (label-dependent)}},\label{eq:arithmetic_ambiguity}
\end{align}
where the second term is non-negative, thus the ensemble loss is guaranteed less than or equal to the average individual loss.
\label{prop:averagedprobability}
\end{proposition}

This property can be observed without the framework we have presented thus far, by taking the difference between the $-{ \vectory\cdot\ln {\vectorq^\dagger} }$ and $-\averagei \vectory\cdot\ln{\vectorq_i}$ terms.
In fact, for the case of two classes, this was observed independently by \citet{ivascu2021}.
%
%
Using the general case above, we can combine it with the methodology from Section~\ref{sec:unified_diversity}, to obtain a result analogous to the bias-variance-diversity decomposition.

\begin{restatable}[Diversity for Averaged Probabilities  is label-dependent]{proposition}{klDependency}~\\
Let $\vectorq^\dagger = \averagei \vectorq_i$, with $\vectorq_i\in [0,1]^k$. The expected cross-entropy admits the decomposition:

\begin{eqnarray}
    -\EDb{ {\vectory\cdot\ln{\vectorq^\dagger}  }} &=&\notag\\
        &&\hspace{-3.3cm} \underbrace{\mystrut{2.em}-\averagei\vectory\cdot\ln {\centroid{\vectorq}_i}}_{\textnormal{average bias}}
        \,+~\, \underbrace{\mystrut{2.em}\averagei\EDb{\KL{\centroid{\vectorq}_i}{\vectorq_i} }}_{\textnormal{average variance}} - \underbrace{\mystrut{2.em}\EDb{\sum_{c=1}^{k} {\vectory}^{(c)} \ln \dfrac{\frac{1}{m}\sum_{j=1}^{m} {\vectorq}_j^{(c)}} {\left(\prod_{i=1}^{m} {\vectorq}_i^{(c)}\right)^{\frac{1}{m}} } }}_{\textnormal{dependency}}.\notag
\end{eqnarray}

\end{restatable}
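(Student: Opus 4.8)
The plan is to derive the decomposition from the pointwise target-dependent ambiguity already proved in Proposition~\ref{prop:averagedprobability}, and then take the expectation over $D$, treating the surviving average-loss term with a single-model cross-entropy bias-variance decomposition. First I would note that Proposition~\ref{prop:averagedprobability} is a purely algebraic identity holding for \emph{every} realisation of the training set $D$ (it relates $\vectorq^\dagger$, the individual $\vectorq_i$, and $\vectory$, with no expectation). Applying $\EDb{\cdot}$ to both sides, the left-hand side becomes the quantity of interest $\EDb{-\vectory\cdot\ln\vectorq^\dagger}$, the final (target-dependent) summand maps term-for-term onto the claimed ``dependency'' expression, and what remains is the expected average cross-entropy of the members, $-\averagei\EDb{\vectory\cdot\ln\vectorq_i}$. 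It therefore suffices to show that this term equals the claimed average-bias plus average-variance.

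The key step is a per-model bias-variance decomposition for the cross-entropy, which I would obtain from the Bregman (KL-divergence) decomposition of Equation~\eqref{eq:bregman_bv} and \citet{heskes1998bias}. The bridge is the elementary relation that cross-entropy differs from KL divergence only by a target-entropy term, $-\vectory\cdot\ln\vectorq = \KL{\vectory}{\vectorq} - \vectory\cdot\ln\vectory$, where the correction $-\vectory\cdot\ln\vectory$ depends on neither $\vectorq$ nor $D$. Taking $\EDb{\cdot}$ of this relation for a single model $\vectorq_i$, the correction passes through unchanged and cancels between the two sides, so applying Equation~\eqref{eq:bregman_bv} with the KL generator yields
\[
    -\EDb{\vectory\cdot\ln\vectorq_i} ~=~ -\vectory\cdot\ln\vectorq_i^\ast + \EDb{\KL{\vectorq_i^\ast}{\vectorq_i}},
\]
where $\vectorq_i^\ast = [\nabla\phi]^{-1}\big(\EDb{\nabla\phi(\vectorq_i)}\big)$ is exactly the left Bregman (normalised geometric mean) centroid referenced in the statement. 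Averaging over $i$ reproduces the average-bias term $-\averagei\vectory\cdot\ln\vectorq_i^\ast$ and the average-variance term $\averagei\EDb{\KL{\vectorq_i^\ast}{\vectorq_i}}$ verbatim.

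Combining the two steps gives the claimed identity. The step I would be most careful about is the second one: I must confirm that the centroid $\vectorq_i^\ast$ delivered by Equation~\eqref{eq:bregman_bv} is the normalised geometric mean of the $\vectorq_i$ under resampling of $D$, and that this is genuinely distinct from the arithmetic-mean combiner $\vectorq^\dagger=\averagei\vectorq_i$ used to form the ensemble. Keeping these two averaging operations separate is precisely what prevents the leftover discrepancy from collapsing into a target-free ambiguity: because $\vectorq^\dagger$ is not the centroid induced by the loss, the residual term retains the factors $\vectory^{(c)}$ and is therefore irreducibly dependent on the target, which is the whole point of the proposition. The remaining manipulations (pushing $\EDb{\cdot}$ through finite sums and matching terms) are routine.
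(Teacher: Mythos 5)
Your proposal is correct and follows essentially the same route as the paper: take the expectation over $D$ of the pointwise arithmetic-mean ambiguity identity (Proposition~\ref{prop:averagedprobability}, Equation~\eqref{eq:arithmetic_ambiguity}), then apply the per-member bias-variance decomposition to the surviving expected average-loss term. Your explicit justification of that second step via the cross-entropy/KL bridge and Equation~\eqref{eq:bregman_bv} simply fills in detail the paper leaves implicit (and handles non-one-hot targets slightly more generally), but the argument is the same.
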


This is very similar to \autoref{the:crossentropydiversity}, except that the final term here is dependent on the label $\vectory$.  To distinguish these, we avoid using the name ``diversity'', and instead refer to it as a ``dependency'' term.  

\cite{gupta2022ensembling} also studied properties of ensemble bias/variance with an arithmetic mean combiner, showing that (under an i.i.d. model assumption), the ensemble variance is always reduced.   At the same time, they raised a concern, that this may potentially increase the ensemble bias (above the average bias), dependent on the label distribution. 
Our proposition adds insight: the overall expected loss will {\em always} be less than the average.  Thus, even if there is an increase in ensemble bias, it is always more than compensated by the reduction in ensemble variance, leading to lower overall expected loss.

\newpage

\subsubsection{Relation to the Bias-Variance-Covariance Decomposition}\label{subsubsec:squaredbvc}

For the case of squared loss, our decomposition
can be contrasted with the bias-variance-covariance decomposition of \citet{ueda1996generalization}, which states, for a point $(\vectorx,y)$:
\begin{align}
     \EDb{~(\qbar-y)^2~} &= \\
     &\hspace{-2.5cm} \underbrace{\mystrut{15pt} (\EDb{\qbar}-y)^2}_{\textnormal{bias}(\qbar)} + \underbrace{\mystrut{15pt} \frac{1}{m}\averagei\ED\Big[{(q_i-\EDb{q_i})^2}\Big]}_{1/m ~\times~ \overline{\textnormal{variance}}} +  \underbrace{\mystrut{15pt} \frac{1}{m^2}\sum_{i,j}\ED\Big[{(q_i-\EDb{q_i})(q_j-\EDb{q_j})\Big]}}_{(1-1/m)~\times~\overline{\textnormal{covariance}}}. \notag 
\end{align}
Comparing this to our Theorem~\ref{the:bvdd}, a difference can be seen in the bias components. Ours is the {\em average individual} bias, whereas Ueda \& Nakano's is the {\em ensemble} bias:
\begin{eqnarray}
\bias &=& \averagei (\EDb{q_i}-y)^2, \\
\textnormal{bias}(\qbar) &=& (\EDb{\qbar}-y)^2.
\end{eqnarray}
Ueda \& Nakano observed a re-writing of their term: $(\EDb{\qbar}-y)^2=(\averagei[ \EDb{q_i}-y])^2$, and described this as {\em ``the square of the average biases''}.
%
However, we remind the reader that {\em the square is an artefact of using the squared loss function}.    This square is not present in generalised forms of the bias-variance decomposition---see Appendix~\ref{app:squaredbias}.
Thus, $(\EDb{\qbar}-y)^2$ should be referred to as simply the ``bias of the ensemble''.
The {\em difference} between $\bias$ and $\textnormal{bias}(\qbar)$ is itself an interesting quantity---if we instantiate \eqref{eq:ensemblebias} for squared loss, we see $\textnormal{bias}(\qbar)=\bias-\Delta$. 
This shows that their term is in fact made up of two components: the average of the individual biases, and  the {\em disparity} term introduced above.
Of course, if the models are {\em homogeneous}, i.e., of the same family, then the disparity is zero, and $\textnormal{bias}(\qbar)=\bias$.

Ueda \& Nakano's decomposition relies on a property of linear combinations of random variables: 
%
$Var(aX_1+bX_2)=a^2Var(X_1)+b^2Var(X_2)+2abCov(X_1,X_2)$,
where $X_1,X_2$ represent two model outputs and $aX_1+bX_2$ represents the ensemble combination.
When the ensemble combination rule is {\em non-linear} (i.e., not a simple arithmetic mean) then this property (and hence this as a route to understand diversity) no longer applies.

It is notable that the covariance can be either positive or negative.  Our diversity term, however, is always non-negative, growing with more disagreement around the ensemble decision. The covariance is a  fundamentally {\em pairwise} computation---it is likely that this form inspired the many published {\em pairwise} diversity measures \citep{kuncheva2014}. 
Diversity in the Bregman case is written in a {\em non-pairwise} manner---the expected average deviation around the ensemble prediction.  We conjecture that this term cannot be expressed as solely pairwise operations, implying that pairwise measures may be fundamentally limited.

\subsection{Summary}
We applied the framework developed in Section \ref{sec:unified_diversity} to the broad family of Bregman divergences. These have a particularly convenient analytical form, which enabled us to derive several interesting properties. 

\clearpage
\newpage

\section{Discussion of Related Work}
\label{sec:diversity_zero_one}

We provide a discussion of related literature, 
and outline some possible future work.
%

%

%
\subsection{A diversity of diversities}
There are {\em many} proposed measures of ensemble diversity.  \citet[Section 8.5]{kuncheva2014} 
categorises measures along three lines: (1) diversity as a characteristic of the classifiers, not considering the combiner or the true label; (2) diversity as a characteristic of the classifiers and the combiner, but {\em not} using the true label; and (3) diversity as a characteristic of the classifiers, the combiner, {\em and} the true label.  Our framework clearly supports the second idea, for losses where Definition~\ref{def:gen_bv} holds (e.g., Bregman divergences), and the third, for losses where it does not (e.g., 0/1 loss, absolute loss).


\subsection{Good and Bad Diversity in Majority Voting Ensembles}\label{subsec:propertiesZeroOne}


With the 0/1 loss, \autoref{the:gen_bvd} does not apply. As a consequence, we have to use the {\em effect} decomposition, in 
\autoref{the:diversity_effect}.  
As a reminder, we still consider the {\em measurement} of diversity in the same form as other losses: $\ED[\averagei \Lzeroone(\qbar,q_i)]$. However, the {\em effect} that this quantity has on the ensemble risk is given by the {\em diversity-effect} term.
This can be related to the idea of `good' and `bad' diversity \citep{brown2010good}.\\

\noindent {\bf ``Good'' vs ``Bad'' diversity.}
%
\citet{brown2010good} showed that, restricting the label to $y \in \{-1,+1\}$, and $\qbar$ as a majority vote, the following holds:
\begin{align}
    \underbrace{\mystrut{1.4em}\Lzeroone(y, \qbar)}_{\textnormal{ensemble~loss}} ~=~ \underbrace{\averagei \Lzeroone(y, q_i)}_{\textnormal{average~loss}}  - \underbrace{{y \qbar \averagei \Lzeroone(\qbar, q_i)}}_{\textnormal{good/bad~diversity}}.
    \label{eq:goodbadambiguitydecomp}
\end{align}
The similarity to the ambiguity decompositions from earlier is self-evident. The difference is the $y\qbar$ preceding the second term. 
%
{\bf The sign of $y\qbar$ mediates the effect of the diversity.}
If $y\qbar=+1$ (equivalent to saying $\qbar=y$), the term is non-negative and therefore acts to subtract from the average error. When this occurs, it is referred to as ``good'' diversity.
When $y\qbar=-1$, the diversity adds to the error, referred to as ``bad'' diversity.
Comparing this to the Ambiguity-effect decomposition (\autoref{prop:ambig_effect}) we can relate the two as follows.

\begin{corollary}
For 2-class problems, $y\in \{-1,+1\}$, the ambiguity-effect is Brown \& Kuncheva's good/bad diversity term, averaged over the data distribution.
\begin{align}
    \Exy\Big[\underbrace{{Y \qbar \averagei \Lzeroone(\qbar, q_i)}}_{\textnormal{good/bad diversity}}\Big] = \underbrace{\averagei\Big[ R_{0/1}(q_i) - R_{0/1}(\qbar) \Big]}_{\textnormal{ambiguity-effect}} .
\end{align}
\end{corollary}

The concepts of Good/Bad diversity were generalised to the multi-class case by \citet{didaci2013diversity}, and the same relation applies.
%
The idea was used in~\cite{bian2019does} to define diversity in weighted ensembles,
giving 
the same form as \cite{didaci2013diversity}. \\

{\noindent\bf When the effect of diversity is negative.}
The definition of diversity-effect is simple to state: {\em the expected difference between the average individual risk and the ensemble risk.}  If the ensemble risk performs worse than the individuals, this will be negative, and hence the effect of diversity will be to {\em increase} the 0/1 risk.  
Assume a majority voting ensemble of classifiers, making independent errors on a $k$-class  problem,
with each model predicting the correct class with probability $p$. 
If $p > 0.5$, then the ensemble 0/1 risk is guaranteed to be less than the individual 0/1 risk
\citep{lam1997application}.
In our context
this means the diversity-effect will be non-negative, and thus the effect of diversity is to reduce ensemble risk.
However, this idealised scenario will certainly not always be possible.  \citet[Section 8.3]{kuncheva2014} details carefully constructed scenarios where more diversity is
associated with {\em increasing} ensemble risk, i.e., the bad diversity outweighs the good diversity.


\subsection{The Correlation of Diversity and Classification Accuracy}

Figure~\ref{fig:diversity_toy_plot} is a toy ``accuracy/diversity'' scatter plot, in the style popularised by \citet{kuncheva2003measures}; it shows the correlation of a diversity measure to  accuracy (0/1 loss). 
The x-axis is some diversity measure, and the y-axis is 
$\averagei R_{0/1}(q_i) - R_{0/1}(\qbar)$,
i.e., the difference between the average individual error and the ensemble error. The higher this value, the more the ensemble outperforms the average individual model.
A higher correlation to diversity is seen to be a more successful diversity measure, as it explains the performance improvement. 
Figure~\ref{fig:mlp_scatter} shows two such plots for {\em real} data, for the cross-entropy diversity.  We estimate diversity on validation data, and the 0/1 loss on a final test set.
The caveat here, is that we have assumed access to class probability estimates, for computing the cross-entropy diversity term. This may not always be possible.

\begin{figure}[h]
    \centering
    \includegraphics[width=0.45\textwidth]{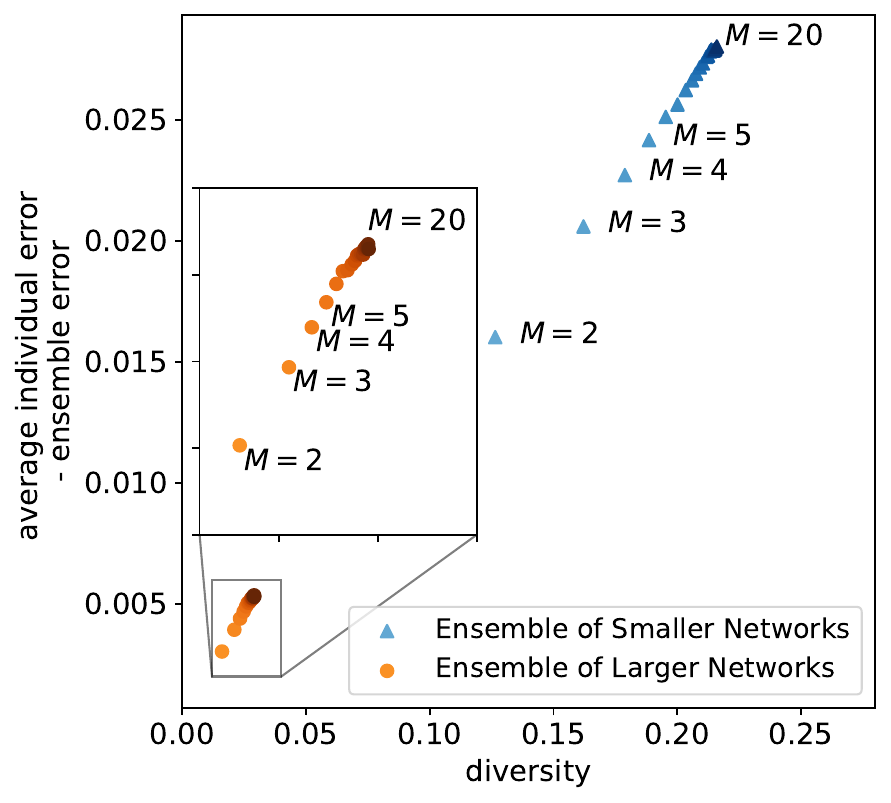}
    ~~~~~~~\includegraphics[width=0.46\textwidth]{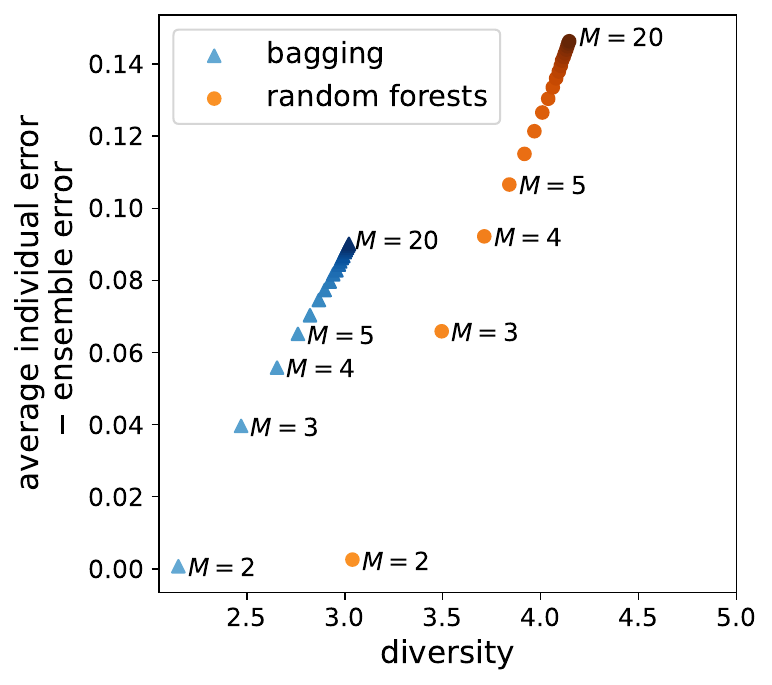}
\caption{Accuracy/Diversity plots. LEFT: Bagged MLPs (corresponds to experiment in Figure~\ref{fig:decompose_MLPcrossentropy}). RIGHT: Decision tree ensembles, Bagging vs Random Forests.}
\label{fig:mlp_scatter}
\end{figure}

We see a strong correlation for both ensembles.
The smaller networks (blue triangles, Pearson's $r^2=0.998$)
have greater diversity than 
the larger networks (orange circles, $r^2=0.992$).
The plot must be read in the context that overall, the larger networks significantly outperformed the smaller networks---it simply shows that the performance came from more powerful base models, as opposed to their diversity. 
%
%

In the right hand figure, we compare Bagging decision trees (unlimited depth) and Random Forests on MNIST.  In both cases the trees are combined by obtaining probabilities and combining via normalised geometric mean.  
Again we see strong correlation of diversity and performance gain.
Bagging has a correlation $r^2\approx 0.996$, whilst Random Forests has $r^2\approx 0.999$.
For a fixed $m$, we can compare corresponding points, where the {\em only difference is the additional split-point randomisation of the forest.}
At $m=20$, RF provides a reduction in generalisation error of $\approx 14.5\%$, versus only $\approx 9\%$  for Bagging.
It interesting to note this is solely due to increased diversity generated by random feature splits.

One might wonder {\em why} there is such a strong correlation in both cases. If we remember the alternative view of the same experiment, Figure~\ref{fig:decompose_MLPcrossentropy}, we see that bias/variance are constant, and it is {\em only the diversity that changes}.  When any change is observed in the overall ensemble cross-entropy, we know it is caused by a change in diversity. Therefore, if we can assume strong correlation between the  ensemble cross-entropy and the 0/1 loss, then there will be a similar strong correlation between diversity and 0/1 loss.

We might now wonder, with this diversity measure, {\em will we always see a strong correlation between diversity and reduction in 0/1 loss?}  
The answer is no, for a very good reason that highlights a critical point in our understanding of diversity.
In Figure~\ref{fig:something} we fix at $m=10$ bagged trees, and vary their depth.  The expected loss reduces---however, now it is not solely due to diversity. Now, the \bias~and \variance~also change rapidly, and the correlation of 0/1 loss/diversity is much lower.

\begin{figure}[h]
    \centering
    \includegraphics[width=\textwidth]{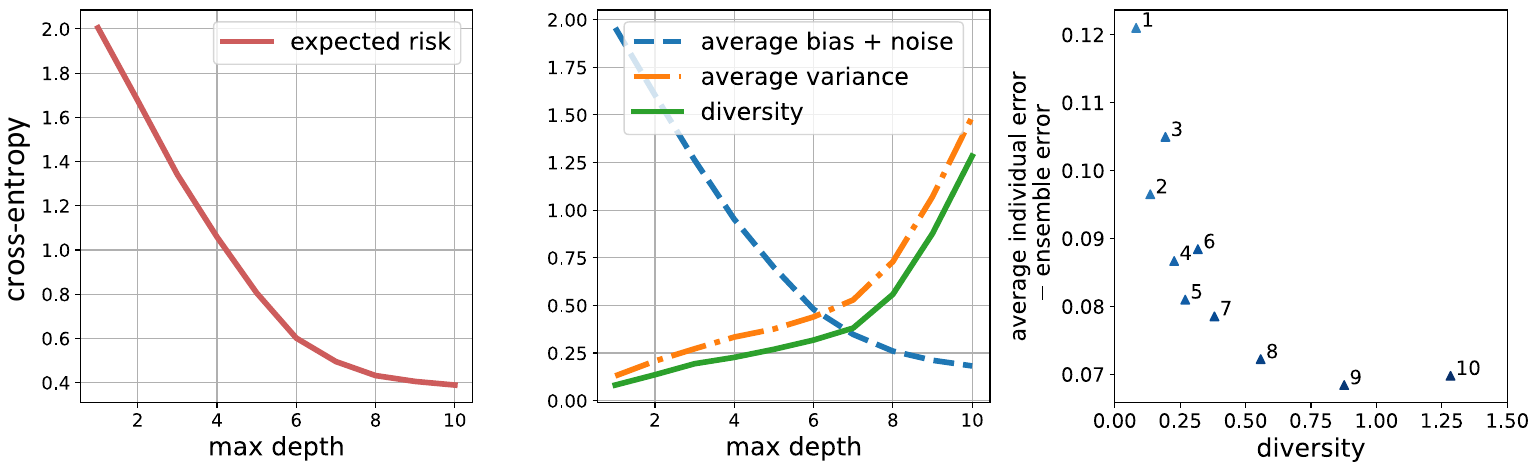}
    \caption{Bagging $m=10$ trees, varying depth, and correlation is now $r^2\approx 0.59$.} 
    \label{fig:something}
\end{figure}

When varying any other parameter than $m$, one {\em should not expect} to see a strong correlation of performance improvement and diversity. This is because, if we vary any parameter that alters individual capacities, then the average bias/variance also changes, and diversity is not the only factor in play.  The overall performance is decided by a 3-way trade-off, just as there is a 2-way trade-off of bias/variance in single models. It would be interesting to explore this with ensembles of very deep neural networks, where the bias/variance trade-off seems to not act as classical theory predicts \citep{belkin2018reconciling}.


\newpage

\subsection{Future work}

{\noindent\bf Re-visiting previous work involving heuristic diversity measures.}
In the introduction, we outlined several areas that had made use of heuristic diversity measures. Examples were given for diversity helping defend adversarial attacks, or counteracting covariate shift, or in forming more computationally efficient models.
%
%
Any one of these might be repeated, using diversity derived from the loss function, as in our framework.\\

{\noindent\bf Forcing diversity.} A natural line of research might be to {\em enforce diversity} in some sense, i.e., using our diversity measures as regularisers during the construction of an ensemble.  Negative Correlation (NC) Learning \citep{liu1999ensemble} encourages diversity in regression ensembles.  \citet{brown2006managing} showed this is in fact exploiting the squared loss ambiguity decomposition, Equation~\eqref{eq:ambiguity}.   The Bregman ambiguity decomposition, Equation~\eqref{eq:bregman_ambiguity}, implies that the NC algorithm is a special case of a wider family of diversity-encouraging losses---the case for cross-entropy was explored in \citet{webb2021ensemble}.\\ 

{\noindent\bf Diversity for Margin losses.} 
Margin losses (e.g., logistic or exponential loss) are an important class of losses, best known in the context of {\em Gradient Boosting} algorithms.
\cite{wood2022bias} analysed bias-variance decompositions for such losses. 
Using these results, it would be possible to obtain bias-variance-diversity decompositions for margin losses, and thus potentially gain insight into boosting algorithms.
%
However, \citet{mease2008evidence} showed strong evidence that the additive model form in AdaBoost/LogitBoost results in a disconnect between the surrogate margin loss and our true objective, the 0/1 loss.  In particular, the surrogate loss can go {\em up} (sometimes exponentially fast) whilst the 0/1 loss on a hold-out sample is going {\em down}.   
This implies that any analysis of the surrogate (including loss decompositions) does not necessarily give meaningful insights for the 0/1 loss.
Furthermore, with boosting, the individual models are more naturally interpreted as learning to correct the errors of previous ensemble members rather than perform well in their own right, making interpretation of the average bias term problematic.  Extending our framework for these cases could be an interesting new interpretation of this popular class of algorithms.\\

{\noindent\bf Diversity in Mixtures of Experts.} As it stands, our framework does not apply for a mixture of experts model, where the models are gated by an additional input-dependent model, itself learnt from data.  Future work might consider diversity for this scenario---perhaps measuring diversity of predictions in small regions of the space, where the gated weights of members are reasonably constant.\\

{\noindent\bf Understanding or Encoding Probabilistic Assumptions.} The concept of diversity is one way of measuring dependencies between predictions.  An explicit {\em probabilistic} assumption of conditional independence (sometimes known as the {\em Dawid-Skene} model) would seemingly imply some degree of diversity.  It is not clear how the loss function view of diversity (which we present) relates to this probabilistic view.  Given the duality between losses/log-likelihood, and that Bregman divergences can be seen as KL-divergences of probability densities, this should be possible and may yield interesting insights.

\newpage

\section{Conclusion}

We have presented a unified theory of ensemble diversity.  
A key insight is that it is not the task (e.g., classification/regression) that matters, but the {\em loss function.}
We demonstrated that
diversity can be seen as a {\em hidden dimension in the bias-variance decomposition of an ensemble loss.}
Diversity emerges naturally with this point of view---the exact functional form is specific to the loss being used, but the decompositions have a common structure applicable for a wide range of losses:
\begin{equation}
        \textbf{expected~loss} ~=~ (\textbf{average bias)} ~+~ (\textbf{average variance}) ~-~ (\textbf{diversity}).\notag
\end{equation}
This gives a clear relationship between the ensemble performance and its diversity. 
The only other scenario where this was previously available is for squared loss with an arithmetic mean combiner \citep{ueda1996generalization}. Our framework is an alternative in this case, but  generalises the notion of diversity to a far wider range of losses, including the cross-entropy, and Poisson regression losses, but more generally any Bregman divergence.
The decomposition requires the use of a particular combination rule, specific to the loss at hand, which we call the {\em centroid combiner.} This combiner turns out to correspond to several well-known combiners already in the literature---e.g., for cross-entropy, it is the normalized geometric mean.  This generalises the idea of ensemble ``averaging'' to many other scenarios, explained as averaging in a dual coordinate system defined by a Bregman divergence.

For losses where an additive bias-variance decomposition does not exist, we adopted the approach of \citet{james1997generalizations}, to instead measure the {\em effects} of bias, variance, and diversity.
which turn out to be dependent on the label distribution.  The case of 0/1 loss is particularly interesting---we show that, not just for majority voting, but for any combiner, the effect of diversity is necessarily {\em a label-dependent} quantity.

We therefore have a broad and precise formulation of diversity for a wide range of supervised learning scenarios. 
This challenge has been referred to as the ``holy grail'' of ensemble learning \citep[pg 100]{zhou2012ensemble}, an open question for over 30 years.
The answer we provide phrases diversity as a measure of {\em model fit}, in precisely the same sense as bias/variance.
%
%
Thus, we should not be aiming to ``maximise diversity'' as so many works aim to do.
Instead, just as bias and variance change with model characteristics, the same applies to diversity, and we have to manage the three-way {\em bias/variance/diversity trade-off}.\\

\noindent Code for all experiments at: \url{https://github.com/EchoStatements/Decompose}


\acks{Funding in direct support of this work: EPSRC EP/N035127/1 (LAMBDA project) and EP/T026995/1 (EnnCore project). Mikel Luj\'an holds an Arm/RAEng Research Chair Award and a Royal Society Wolfson Fellowship.\\~\\GB would like to thank LK and FR for a career’s worth of inspiration \& support.}

\appendix

\newpage

\newpage

\section{Additional Experimental details}

\subsection{Additional results}
\label{app:additionalexperiments}

We present additional results on several datasets. 
We emphasize that we make no claims on empirical superiority of any one method over any other. We simply use these toy datasets as illustrative examples of how the risk components can be estimated.\\

{\noindent\bf Squared loss:} We use California housing data---Table~\ref{tab:ensembleCalifornia} shows results from three ensembles (each $M=30$ regression trees), compared to a single  tree.
We use Bagging with constrained trees (max depth 8) and compare against unlimited depth trees, and a Random Forest.

\begin{table}[ht]
\centering
\begin{tabular}{@{}llll@{}}
    \toprule
    \shortstack[l]{\bf \small Single tree \\\bf \small(depth 8)} & \shortstack[l]{\bf\small Bagging\\ \bf\small (depth 8)} & \shortstack[l]{\bf\small Bagging\\ \bf\small (unconstrained)} & \shortstack[l]{\bf\small Random Forest\\\small\phantom{x}} \\ \midrule
    $0.47$ & $0.35$ & $0.30$ & $0.28$ \\
        \bottomrule
\end{tabular}
\caption{California housing data: MSE of a single tree versus ensembles of 30 trees.}
\label{tab:ensembleCalifornia}
\end{table}

We observe that the Random Forest is the best choice here, followed up closely by the unconstrained Bagging.
Figure \ref{fig:bvdplot_regression} explains their performance
by decomposing risk into bias, variance, and diversity---also showing how the components change as we grow the ensemble.

\begin{figure}[h]
    \centering
    \includegraphics[width=.9\textwidth]{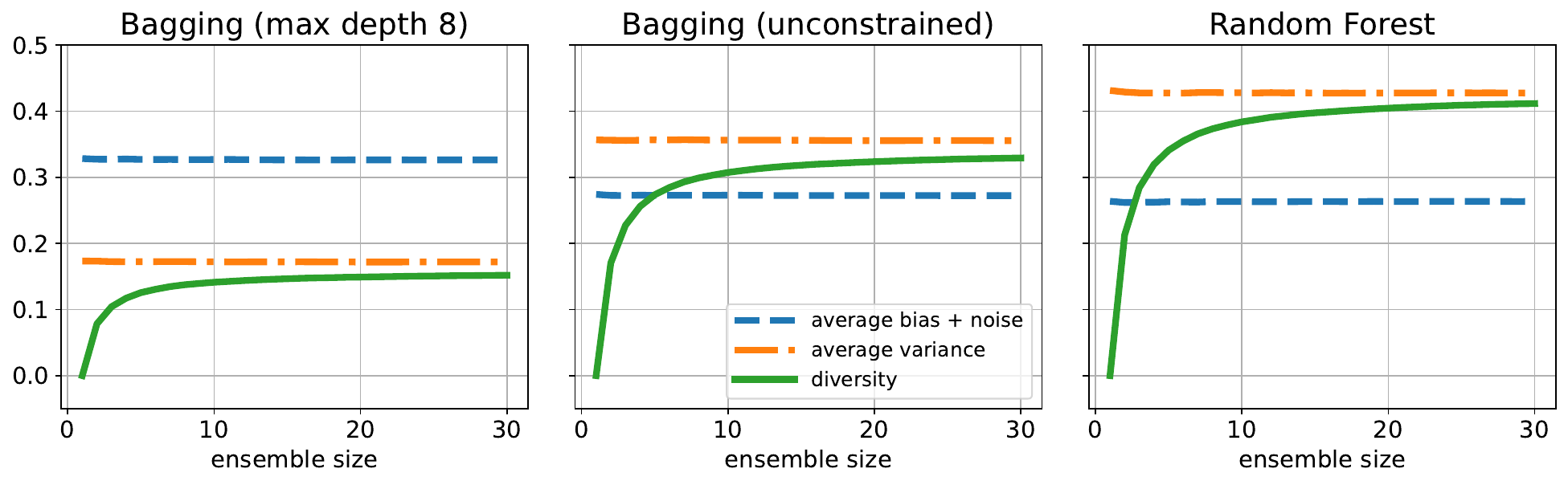}
    \caption{Decomposing the expected risk of three decision tree ensembles.}
    \label{fig:bvdplot_regression}
\end{figure}

We observe the same behaviour as in Figure \ref{fig:californiaBVDexample}.  The diversity increases with $m$, and is upper-bounded by the value of the average variance.  A higher average variance effectively raises the ``ceiling'' to which diversity can rise.  
The average variance is higher as we move from depth-limited trees to unlimited depth, and higher again with the random split-points in the Random Forest (here we use the square root of the number of features).  The higher average variance is compensated for by the diversity, causing Random Forest to be the best option. It is notable that for large ensembles, the expected risk of the ensemble is almost entirely due to the value of the average bias ($\approx 0.28$ in the case of unconstrained trees), with diversity having essentially cancelled out the average variance of the individual models.
This behaviour is not just a quirk of this data set, in fact it holds as long as the individuals are all from the same model family, i.e., the ensemble is {\em homogeneous}---the general case is discussed in Section~\ref{sec:furtherproperties}.\\


{\noindent \bf Cross-entropy:} In Figure~\ref{fig:lotsofdata}, we have additional accuracy/diversity plots for neural network ensembles on classification problems ($n$ examples, $d$ features, $k$ outputs/classes).
Datasets: Phoneme ($n=5404$, $d=4$, $k=2$), Landsat ($n=6435$, $d=36$, $k=6$), Spambase ($n=4601$, $d=57$, $k=2$), South German Credit (2019 version: $n=1000$, $d=20$, $k=2$). 
 
In each case, the squared Pearson's correlation coefficient is shown in the legend.
The following configuration was used in all MLP experiments:

\begin{itemize}    \setlength{\itemsep}{-3pt}%
    \item learning rate: 0.1 (Stochastic gradient descent)
    \item num epochs: 50 (MNIST), 200 (other data sets)
    \item hidden layer size (20 small/100 larger)
    \item number of trials: 100
\end{itemize}
where each trial uses a 90\% sub-sample of the full training data, as outlined in Figure~\ref{fig:protocol}.

\begin{figure}[ht]
    \centering
    \includegraphics[width=\textwidth]{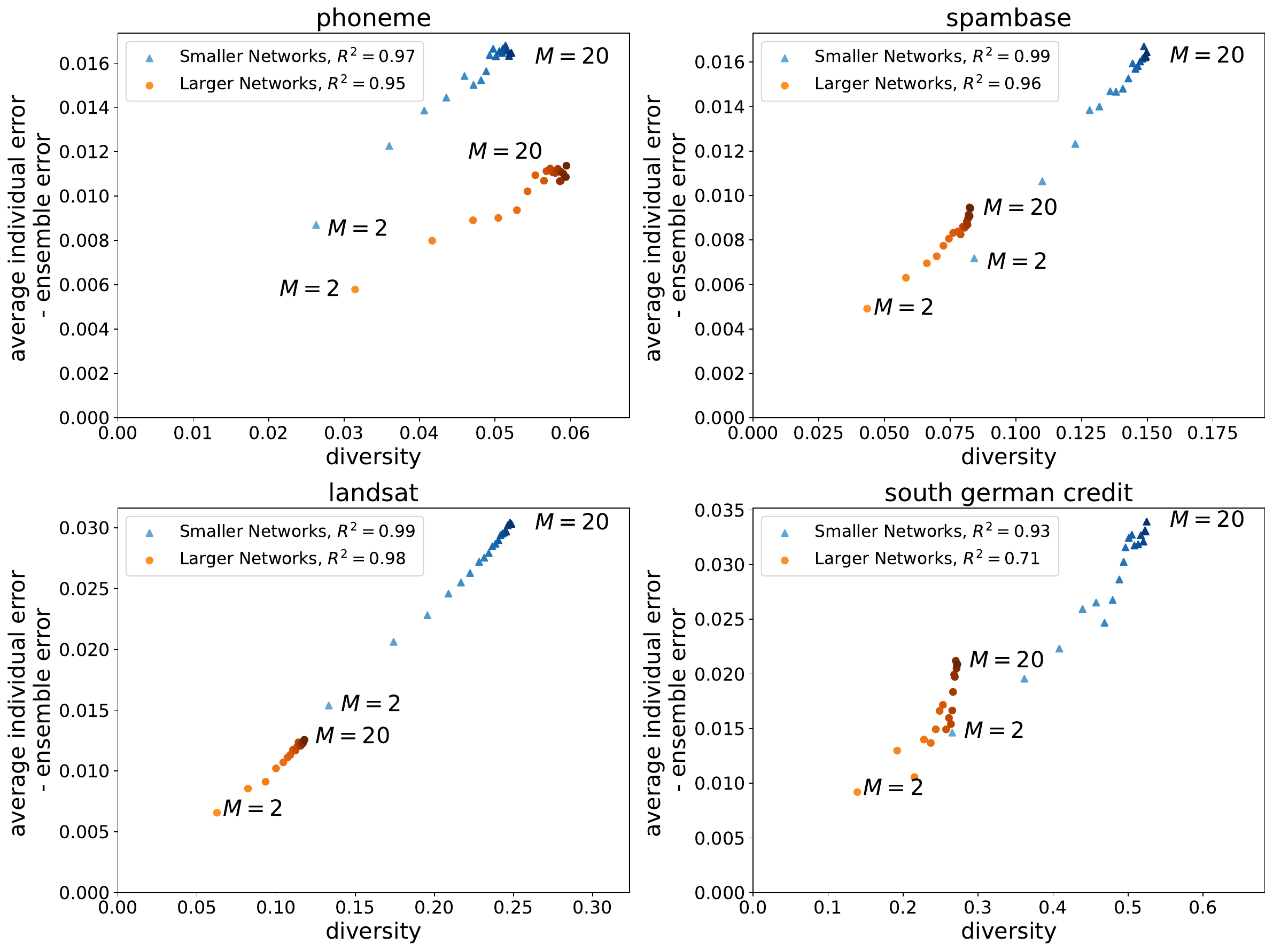}
    \caption{Error/diversity plots across four data sets, comparing ensembles of small MLPs (20 hidden nodes, blue dots) versus large (100 hidden nodes, orange dots).}
\label{fig:lotsofdata}
\end{figure}

\newpage
\subsection{Methodology for Estimating Bias, Variance, and Diversity}\label{app:estimating}

We present our methodology for estimating the bias, variance and diversity from data.
Algorithm~\ref{alg:data_collection} shows the procedure for experiments where we estimate diversity of ensembles of different sizes, such as in the experiments for Bagging and Random Forests. 
Notably, an ensemble of size $m+1$ is created by using the members of the ensemble of size $m$, rather creating a new ensemble of size $m+1$ from scratch. We also present a visualisation of the sub-sampling scheme used for Bagging in Figure~\ref{fig:protocol}.

\SetKwComment{Comment}{/* }{ */}

\begin{algorithm}[ht]\label{alg:data_collection}
$m$ : number of ensemble members\\
$n$ : number of test points\\
$t$ : number of trials\\
$D_k$ : the training set to use in trial $k$\\
~\\
{\bf Arguments}: $model$, $train\_data$, $test\_data$ \\ 
{\bf Output}: $test\_preds$: an array of model predictions of size $t\times m\times n$.\\ 
    \For{$k \in \{1, \ldots, t\}$}{
        \For{$j \in \{1, \ldots, n\}$}{
        $D_k \gets$ 90\% of $train\_data$, sampled without replacement\;
            \For{$i \in \{1, \ldots, $m$\} $ }{
                $member\_data$ $\gets$ bootstrap from $D_k$\;
                $i$th ensemble member $\gets$ copy of $model$ trained on $member\_data$\;
                $test\_preds$[k, i, j] $\gets$ prediction of $test\_data$ from the $i$th ensemble member, in the $k$th trial, $j$th test data point\;}}}
    \caption{Algorithm for collecting data, later used to estimate diversity of a Bagging ensemble, whilst varying ensemble size $m$.}
\end{algorithm}

\begin{figure}[ht]
    \centering
    \includegraphics[width=10.5cm]{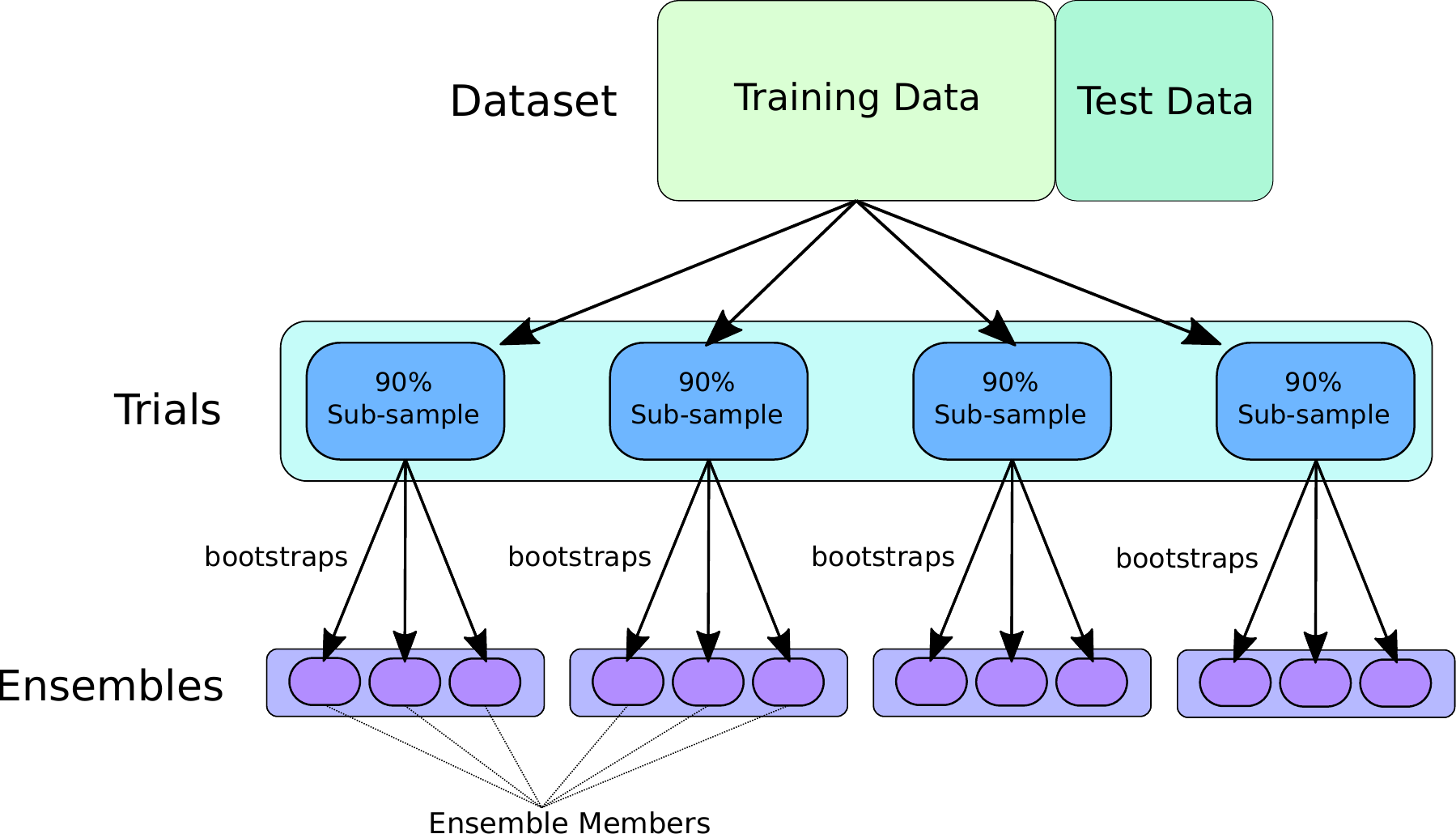}
    \caption{Visualisation of the sub-sampling scheme used for Bagging ensembles.}
    \label{fig:protocol}
\end{figure}

The result of Algorithm~\ref{alg:data_collection} is an array of size $(t, m, n)$. 
For Bregman divergences, the average bias and average variance are approximated by first estimating the left Bregman centroid for each ensemble member, replacing the expectation over the random variable $D$ with an average over $t$ trials. We use notation ${D}_k$ as the full training data in the $k$th trial, from which the Bagging algorithm can be applied.  The centroid for model $i$ on a single test point $\vectorx$ is therefore estimated as,
\begin{align*}
   \centroid{q}_i^{est}(\vectorx) =   [\nabla \phi]^{-1} \left( \frac{1}{t} \sum_{k=1}^t \nabla \phi( q_i(\vectorx; {D}_k)) \right) \approx \centroid{q}_i(\vectorx)
\end{align*}
With this estimate of $\centroid{q}_i(\vectorx)$, the average bias and average variance are computed as
\begin{equation}
    \textnormal{average bias} \approx \frac{1}{m} \frac{1}{n}  \sum_{i=1}^m \sum_{j=1}^{n} \BregmanGen{\phi}{y_j}{ \,\centroid{q}_i^{est}(\vectorx_j)},
\end{equation}\break
\begin{equation}
\textnormal{average variance} \approx \frac{1}{t} \frac{1}{m} \frac{1}{n}  \sum_{k=1}^t \sum_{i=1}^m \sum_{j=1}^{n} \BregmanGen{\phi}{ \centroid{q}_i^{est}(\vectorx_j)}{\,q_i(\vectorx_j; {D}_k)}.
\end{equation}
Diversity is estimated similarly, with $\qbar$ defined as the centroid combiner rule applied to the $m$ ensemble members in a given trial:
\begin{equation}
\textnormal{diversity} \approx \frac{1}{t} \frac{1}{m} \frac{1}{n}  \sum_{k=1}^t \sum_{i=1}^m \sum_{j=1}^{n} \BregmanGen{\phi}{ \qbar(\vectorx_j, {D}_k)}{\,q_i(\vectorx_j; {D}_k)}.
\end{equation}

\newpage

\section{Further explanations}

We present further exposition on certain points within the paper. The material in this appendix is not novel, but expands on selected aspects to support our argument.

\subsection{Is it ``squared bias'', or just ``bias''?}
\label{app:squaredbias}

\citet{geman1992neural} presented a squared loss bias-variance decomposition, Equation~\eqref{bvsqloss}. One of the terms is $(\EDb{q}-\ystar)^2$. They, and many subsequent papers, refer to this as ``squared bias'', or ``bias squared''.  {\em This is a misnomer}, as we will now argue.

The root of the issue is that Geman et al. borrowed terminology from classical statistics. 
For some population value $\theta$, and an estimator $\hat{\theta}$, classical statistics refers to the quantity $\mathbb{E}[\hat{\theta}]-\theta$ as the {\em bias} of $\hat{\theta}$. Thus, by analogy,
they refer to $(\EDb{q}-\ystar)^2$ as the {\em squared} bias of estimator $q$, where $\ystar$ is assumed to be the population value.

The reason this is a misnomer---is that similar decompositions are known for other losses, and the square is {\em not present}, e.g. \cite{heskes1998bias, pfau2013, wood2022bias}. 
Relevant to our paper, the ensemble bias-variance-covariance decomposition of \citet{ueda1996generalization} unfortunately also inherited the ``squared'' misnomer, as we explained  in sub-section~\ref{subsubsec:squaredbvc}.
These results were proven long after the publication of \citet{geman1992neural}, thus we cannot blame the authors, especially since the term ``bias'' is overloaded in science (e.g. inductive, sampling, selection, confirmation).  {\em  So, in short, the term ``squared bias'' is a misnomer, and throughout our paper we refer to just ``bias''.}

\subsection{The Ambiguity Decomposition is equivalent to Bias-Variance}
\label{app:specialcase}

In the bias-variance decomposition, the random variable $D$ is commonly assumed to be over all possible training sets of a fixed size $n$. This is a convention introduced by \citet{geman1992neural}.  However this is not necessarily the case, and $D$ can be over any stochastic quantity involved in construction of the model (e.g. random initial weights for neural networks), or indeed over a finite set of {\em pre-constructed models}.  In this latter case, we see an equivalence between the bias-variance decomposition and the ambiguity decomposition---in the sense that if one exists, then the other must also.
This can be seen most easily by considering a simpler form of the bias-variance decomposition, with no noise: 
\begin{equation}
    \EDb{ \ell(y,q) } =
       \ell(y,\centroid{q})
        + \EDb{\ell(\centroid{q},q)}.
        \label{eq:simple_bv}
\end{equation}

The relation still holds with noise, but is easier to explain with this assumption. Now, define $D$ as a discrete random variable with an event space consisting of $m$ different pre-constructed models, $\{q_i\}_{i=1}^m$, and the probability assigned to each event is uniform, i.e., $1/m$ for each event $q_i$. This means the expectation $\ED$ becomes an average $\averagei$.  The centroid is still defined in the same way, but we use notation $\qbar$ to denote the fact that it concerns a finite set of predictions: $\qbar \defeq \argmin_{z\in\bregmanDomain} \averagei\ell(z,q_i)$.  This means the expression above becomes:

\begin{equation}
    \averagei{ \ell(y,q_i) } =
       \ell(y,\qbar)
        + \averagei{\ell(\qbar,q)}.
\end{equation}
A simple rearrangement of terms yields the ambiguity decomposition.

\newpage

\subsection{Properties of Heskes' Decomposition for the Cross-Entropy}\label{app:decomposiingCrossEntropy}

\citet{heskes1998bias} presents a bias-variance decomposition for the KL-divergence between probability densities, which he extends to expose a noise term, giving a noise-bias-variance decomposition for the cross-entropy.  For Multinoulli distributions, we claim this is equivalent to that we present in Equation~\eqref{bv:KL}, 
which is important as it then matches our unified bias-variance form. 
For clarity, we restate our KL decomposition here:
\begin{equation}
    \underbrace{\mystrut{12pt} \ED\Big[ \EXY\left[ K\left(\vectorY\mid\mid\vectorq\right) \right] \Big] }_{\substack{\textnormal{expected}~\textnormal{risk}}} 
        = 
        \Ex\Bigg[ \underbrace{\mystrut{12pt} \mathbb{E}_{\vectorY\mid \vectorX}\left[ K\Big(\vectorY\mid\mid\vectorYstar\Big) \right] }_{\textnormal{noise}} 
        +  \underbrace{\mystrut{12pt}K\Big( \vectorYstar\mid\mid \centroid{\vectorq} \Big)}_{\textnormal{bias}} 
        + \underbrace{\mystrut{12pt} \ED \left[ K\Big( \centroid{\vectorq} \mid\mid \vectorq\Big) \right] }_{\textnormal{variance}} \Bigg],  \label{app:bv:KL}
\end{equation}
where ${\bf Y}$ is random variable, over one-hot encoded indicators of the true class for an input point $\vectorx$.
If we take Heskes' expression  \citep[Eq 6]{heskes1998bias} and assume Multinoulli distributions, we get a decomposition of the expected cross-entropy at a single point $\vectorx$, between a true class distribution $\vectorystar$ and a model $\vectorq$.  In our notation this reads:
\begin{equation}
       -\mathbb{E}_D\Big[ \vectorystar\cdot\ln {\vectorq} \Big] = \underbrace{\mystrut{12pt} -\vectorystar\cdot\ln\vectorystar}_{\textnormal{noise}}  + \underbrace{\mystrut{12pt} \KL{\vectorystar}{\centroid{\vectorq}}}_{\textnormal{bias}} + \underbrace{\mystrut{12pt} \ED\Big[{ \KL{\centroid{\vectorq}}{\vectorq}}\Big]}_{\textnormal{variance}},
\end{equation}
where $\centroid{\vectorq} \propto \textnormal{exp}(\ED[\ln\vectorq])$ is the normalized geometric mean of the model distribution.

To show the equivalence, we first note that \eqref{app:bv:KL} has an expectation over $P({\bf X})$. We can restate this at a single  ${\bf x}$,
\begin{equation}
    \underbrace{\mystrut{12pt} \ED\Big[ \mathbb{E}_{\vectorY|{\bf X}={\bf x}}\left[ K\left(\vectorY\mid\mid\vectorq\right) \right] \Big] }_{\substack{\textnormal{expected}~\textnormal{loss}}} 
        = 
        \underbrace{\mystrut{12pt} \mathbb{E}_{\vectorY\mid \vectorX=\vectorx}\left[ K\Big(\vectorY\mid\mid\vectorystar\Big) \right] }_{\textnormal{noise}} 
        +  \underbrace{\mystrut{12pt}K\Big( \vectorystar\mid\mid \centroid{\vectorq} \Big)}_{\textnormal{bias}} 
        + \underbrace{\mystrut{12pt} \ED \left[ K\Big( \centroid{\vectorq} \mid\mid \vectorq\Big) \right] }_{\textnormal{variance}} ,
\end{equation}
where $\vectorystar=\mathbb{E}_{\vectorY|{\bf X}=\vectorx}[\vectorY]$ is the true class distribution at $\bf x$.
The second and third terms on the right are equal to those in Heskes' decomposition. For the noise term we note,
\begin{eqnarray}
    \mathbb{E}_{\vectorY\mid \vectorX=\vectorx}\left[ K\Big(\vectorY\mid\mid\vectorystar\Big) \right] &=& \mathbb{E}_{\vectorY\mid \vectorX=\vectorx}\left[ \vectorY\cdot\ln\frac{\vectorY}{\vectorystar} \right] \\
    &=& \mathbb{E}_{\vectorY\mid \vectorX=\vectorx}\left[ \vectorY\cdot\ln{\vectorY} \right] -  \mathbb{E}_{\vectorY\mid \vectorX=\vectorx}\left[ \vectorY\cdot\ln{\vectorystar} \right] \\
    &=&  -  \mathbb{E}_{\vectorY\mid \vectorX=\vectorx}\left[ \vectorY\right] \cdot\ln{\vectorystar} \\
    &=& -\vectorystar\cdot\ln\vectorystar,
\end{eqnarray}
where we used the convention $0\ln 0=0$.  Since the right hand sides are equal, therefore the left hand sides are equivalent too.

\newpage

\subsection{The Importance of Parameter Encoding in the KL-divergence}\label{app:parameterEncoding}

Consider the KL-divergence between two discrete probability vectors of length $k$,
\begin{equation}
    \KL{\vectorp}{\vectorq} = \sum_{c=1}^k \vectorp^{(c)}\ln \frac{\vectorp^{(c)}}{\vectorq^{(c)}}
\end{equation}
There are two ways in which this can be expressed as a Bregman divergence, either using the full-length probability vectors, $\vectorp \in \mathbb{R}^k$ and using the generator 
\begin{align}
    \phi_{\textnormal{full}}(\vectorp) = \sum_{c=1}^k \vectorp^{(c)} \ln \vectorp^{(c)}\label{eq:kl_full}
\end{align}
or using the minimally parameterised vectors, $\vectorptilde \in \mathbb{R}^{k-1}$, where the last entry is omitted: 
\begin{align}
    \phi_{\textnormal{min}}(\vectorptilde) = \sum_{c=1}^{k-1} \vectorptilde^{(c)} \ln \vectorptilde^{(c)} + (1 - \sum_{c'=1}^{k-1} \vectorptilde^{(c')} ) \ln (1 - \sum_{c'=1}^{k-1} \vectorptilde^{(c')} ). \label{eq:kl_min}
\end{align}

Given two probability vectors in the appropriate form, either formulation gives a Bregman divergence is equivalent to the KL-divergence between the vectors of class probability estimates~\citep{nielsen2009sided}, i.e.,
        \begin{align*}
        \BregmanGen{\phi_{\textnormal{full}}}{\vectorp}{\vectorq} = \BregmanGen{\phi_{\textnormal{min}}}{\vectorptilde}{\vectorqtilde} =  \KL{\vectorp}{\vectorq}.
\end{align*}
We use the minimal parameterisation, as it exhibits a desirable property. In particular, it is necessary to use the second to ensure that the Bregman centroid is always a valid distribution on the probability simplex. In this case, the centroid combiner is the normalised geometric mean, as we now demonstrate.\\

{\noindent \bf Minimally parameterized vectors:}
To show this we consider $m$ minimally parameterised vectors, ${\bf q}_i \in \mathbb{R}^{k-1}$ (note that we have dropped tilde above $\vectorq$ for simplicity).
Our claim is that the centroid combiner is the normalised geometric mean
$\vectorqbar = [\nabla\phi]^{-1}\left(\frac{1}{m}\sum_{i=1}^m \nabla\phi ({\bf q}_{i}) \right)$, 
is of the form
\begin{align}
    {\vectorqbar}^{(c)} 
        = [\nabla\phi_{\textnormal{min}}]^{-1}\left(\frac{1}{m}\sum_{i=1}^m \nabla\phi_{\textnormal{min}} ({\bf q}_{i}) \right)
    = \frac{\prod_{i=1}^m {\vectorq_i^{(c)}}^\frac{1}{m}}{ \sum_{c'=1}^k \prod_{i=1}^m {\widehat{\vectorq}^{(c')\frac{1}{m}}} },\label{eq:geo_mean}
\end{align}
where $\widehat{\vectorq}$ denotes the extension of the $k-1$ length vector into a full $k$ length probability vector.
Plugging in the gradients from Table~\ref{tab:bregmancentroids}, we start with
\begin{align*}
     {\vectorqbar}^{(c)} 
    &= \frac{\exp \left(\averagei \ln 
            \frac{\vectorq_i^{(c)}}
            {1-\sum_{c'=1}^{k-1} \vectorq_i^{(c')}}\right) }
         { 1+ \sum_{c'=1}^{k-1} \exp \left( \averagei \ln \frac{\vectorq_i^{(c')}}{1-\sum_{c''=1}^{k-1} \vectorq_i^{(c'')}}\right)   }   .
\end{align*}
Note that the numerator here can be rearranged:
\begin{align*}
    \exp \left(\averagei \ln 
                \frac{\vectorq_i^{(c)}}
                {1-\sum_{c'=1}^{k-1} \vectorq_i^{(c')}}\right) 
    &= \prod_{i=1}^m\left( 1 -\sum_{c'=1}^{k-1} \vectorq_i^{(c')}\right)^{-\frac{1}{m}}\prod_{i=1}^m \left(\vectorq_i^{(c)} \right)^{\frac{1}{m}},
\end{align*}
and the denominator can be written 
\begin{align*}
    1+ \sum_{c'=1}^{k-1} \exp \left( \averagei \ln \frac{\vectorq_m^{(c')}}{1-\sum_{c''=1}^{k-1} \vectorq_m^{(c'')}}\right)   
    &= 1 + \sum_{c'=1}^{k-1} \prod_{i=1}^m {\vectorq_i^{(c')}}^\frac{1}{m}\left(1 - \sum_{c''=1}^{k-1} \vectorq_i^{(c'')} \right)^{-\frac{1}{m}}\\
    &\hspace{-4cm} = 1 + \prod_{i'=1}^m\left(1 - \sum_{c''=1}^{k-1} \vectorq_{i'}^{(c'')} \right)^{-\frac{1}{m}}\sum_{c'=1}^{k-1} \prod_{i=1}^m {\vectorq_i^{(c')}}^\frac{1}{m}\\
    &\hspace{-4cm} = \prod_{i'=1}^m \left(1 - \sum_{c''=1}^{k-1} \vectorq_{i'}^{(c'')} \right)^{-\frac{1}{m}} \left( \prod_{i=1}^m \left(1 - \sum_{c''=1}^{k-1} \vectorq_i^{(c'')} \right)^{\frac{1}{m}} + \sum_{c'=1}^{k-1} \prod_{i=1}^m {\vectorq_i^{(c')}}^\frac{1}{m} \right).
\end{align*}
Putting the numerator and denominator back into the second expression of Equation~\eqref{eq:geo_mean} and using the definition of $\widehat{\vectorq}$, we find the first terms in both products cancel and we are left with the required result.

\paragraph{Full length $k$ Probability Vector:} If we do not use the minimally parameterized vectors, we would have the Bregman generator $\phi(\vectorp) = \sum_{c=1}^k \vectorp^{(c)} \ln \vectorp^{(c)}$. This gives the geometric mean, rather than the normalised version. To see this, we first note that
\begin{align*}
    \left(\nabla \phi (\vectorp)\right)^{(c)} &= 1 + \ln \vectorp^{(c)} = \boldeta^{(c)}\\ 
    \left(\left[\nabla \phi\right]^{-1} (\boldeta)\right)^{(c)} &= \exp \left(\boldeta^{(c)} -1 \right),
\end{align*}
and therefore the centroid combiner is
\begin{align*}
 \left([\nabla\phi]^{-1}\Big(\frac{1}{m}\sum_{m=1}^m \nabla\phi ({\bf q}_{m}) \Big)\right)^{(c)} &= \exp \left( \averagei 1 + \ln \vectorq_i^{(c)} - 1 \right) \\
  &= \exp \left( \averagei  \ln \vectorq_i^{(c)} \right) = \prod_{i=1}^m {\vectorq_i^{(c)}}^{\frac{1}{m}} .
\end{align*}

Note that this means that $\vectorqbar$ is not necessarily a valid probability vector. In fact, it is a valid probability vector \emph{only if} $\vectorq_1=\ldots=\vectorq_m$.

\newpage

\subsection{List of Bregman Centroids}
\label{appsubsec:bregmandiversity}
For quick reference purposes (see \citet{nielsen2009sided} for more details) we list centroids for different losses of interest to the ML community. The left Bregman centroid is defined:
    \begin{equation}
        \centroid{\vectorq} ~\defeq~ \argmin_{\bf z \in \mathcal{Y}} \mathbb{E}_D\Big[ \Bregman{{\mathbf{z}}}{\vectorq} \Big]
        ~=~
        \left[\nabla\phi\right]^{-1} \Big(\mathbb{E}_D \left[\nabla \generator{\vectorq} \right] \Big).
    \label{eq:centroid}
    \end{equation}

\noindent The centroid $\centroid{\vectorq}$ takes different forms dependent on the generator used.

\begin{table}[ht]
\centering
\def\arraystretch{2} 
\begin{tabular}{@{}llll@{}}
    \toprule[1.5pt]
    {\bf Loss}
        & \shortstack[l]{\bf Gradient\\ $\vectoreta=\nabla\phi({\bf  q})$} 
        & \bf \shortstack[l]{Inverse Grad.\\ ${\bf q}=\left[\nabla\phi\right]^{-1}(\vectoreta)$}  
        & \shortstack[l]{\bf Left Bregman Centroid\\$\centroid{\vectorq}:= \left[\nabla\phi\right]^{-1}\Big(\mathbb{E}_D \left[ \nabla \generator{{\bf  q}} \right] \Big)$} \\ 
    \midrule
    Squared 
        & $2 q$ 
        & $\frac{1}{2}\vectoreta$ 
        & $\ED [  q ]$ \\
    Itakura-Saito 
        & $-\frac{1}{ q}$ 
        & $-\frac{1}{\vectoreta}$ 
        & ${1}\big/{\Big(\ED\left[{1}/{ q}\right]\Big)}$\\
    Poisson loss 
        & $\ln q$
        & $\textrm{exp}(\vectoreta)$
        & $\textrm{exp}\big(\ED \left[ \ln q \right]\big)$ \\
    KL-divergence 
        & $\ln\frac{\mathbf{ q}^{(c)}}{1-\sum_{{c'}=1}^{k-1}\mathbf{ q}^{(c')}}$ 
        & $\frac{\textrm{exp}(\boldsymbol{\vectoreta}^{(c)})}{1+\sum_{c'=1}^{k-1}\textrm{exp}(\boldsymbol{\vectoreta}^{(c')})}$ 
        & $\frac{1}{Z}\textrm{exp}\Big(\ED \left[ \ln {\bf q} \right] \Big)$ \\[1ex]
    \bottomrule[1.5pt]
\end{tabular}
\caption{Common losses (see Table~\ref{tab:bregmangenerators}) and their Bregman centroids. In the case of KL-divergence, $Z$ is a normalizer to ensure a valid distribution.}
\label{tab:bregmancentroids}
\end{table}

\newpage
\section{Proofs for Section \ref{sec:unified_diversity}}


\subsection{Proofs for Bias-Variance-Diversity and Effect Decompositions}\label{app:dependent}

\thegenbvd*

\begin{proof}
    Take the expected risk of $\vectorqbar$, and apply Proposition~\ref{prop:gen_ambig}, the generalized ambiguity decomposition:
    \begin{align}
        \EDb{\EXYb{\ell({\mathbf{Y}},{\vectorqbar})}} &= \EDb{\EXYb{\averagei \ell({\mathbf{Y}},{\vectorq_i})}} -  \EDb{\EXYb{\averagei \ell({\vectorqbar},{\vectorq_i})}}.\label{eq:gbvdd_intermediate_1}
    \end{align}
    Now apply Definition~\ref{def:gen_bv}, the generalised bias-variance decomposition, to the first term on the right:
    \begin{align}
        \mathbb{E}_D \Big[ \EXY \Big[ \averagei \ell({\bf Y},\vectorq_i) \Big] \Big] =& \nonumber\\
        &\hspace{-3.5cm}    
        \Ex\Bigg[\mathbb{E}_{{\bf Y}|{\bf X}} \left[ \ell({{\bf Y}},{\vectorYstar})\right]
        ~+~  \averagei \ell({\vectorYstar},{\centroid{\vectorq}_i})
        ~+~ \averagei\mathbb{E}_D\left[  \ell({\centroid{\vectorq}_i},{\vectorq_i})\right]\Bigg]. \label{eq:gbvdd_intermediate_2}
    \end{align}
Plugging Equation \eqref{eq:gbvdd_intermediate_2} into (\ref{eq:gbvdd_intermediate_1}) completes the proof.
\end{proof}

{\noindent \bf Further explanation on dependent/independent training schemes: } For the bias-variance decomposition of a single model $q$, we defined a random variable $D\sim P(\vectorx,y)^n$, i.e. over i.i.d. training sets of size $n$.  When we have an ensemble of models, it is common for each to have their own training set, e.g. the Bagging algorithm.
In this case, $D$ is redefined to a vector $[D_0, D_1, D_2, \dots, D_m]$, where $D_0\sim P(\vectorx,y)^n$ defines the overall training set supplied to the ensemble.
Each $D_i$, for $1..m$,  is a random variable defining an i.i.d. sample with replacement from the same fixed training set defined by $D_0$. Thus each $D_i$ is dependent on $D_0$, but not on any other $D_j$.  
However, it can also be that $D_t$ is dependent on $D_{t-1}$, for example as in Boosting algorithms.
In either case, if we consider any function of (without loss of generality) the first model, $q_1$, then by the law of total expectation, $\mathbb{E}_Y\Big[ \mathbb{E}_{X|Y} [X]\Big]=\mathbb{E}_X[X]$, we have:

\begin{equation}
\mathbb{E}_{D_{0\dots m}}\Big[ q_1(D_1) \Big] = \mathbb{E}_{{D_0,D_{2\dots m}}}\Big[ \mathbb{E}_{D_1|{D_0,D_{2\dots m}}}[q_1(D_1)] \Big]~=~\mathbb{E}_{D_1}\Big[q_1(D_1)\Big].    
\end{equation}
Thus, the function of the model is only dependent on the individual model's training set.


%
\diversityEffect*
\begin{proof}
Note that several terms on the right cancel, reducing to the left-hand side.
\end{proof}

\zeroOneDiversityDoesNotExist*
\begin{proof}
We show that 
$\averagei{\Lzeroone(y, q)} - \Lzeroone(y, \qbar)$
is necessarily dependent on the value of $y$ at a point $(\vectorx,y)$. Taking expectation
over $P(\vectorx, y)$ proves the result.
We show a case, where there is no ensemble combination rule such that the expression is independent of $y$.

\paragraph{The two-class case:} Define $\mathcal{Y}=\{1,2\}$.
For a fixed $\vectorx$, without loss of generality, let $p=0.6$ be the proportion of the $m$ models predicting class $1$. The combiner $\qbar$, provides a label $\in \bregmanDomain$.
%
 %
For both possible $\qbar$, the quantity $\averagei{\Lzeroone(y, q)} - \Lzeroone(y, \qbar)$ depends on $y$.
First, if we assume $\qbar=1$, we have:
 \begin{align*}
     \averagei{\Lzeroone(y, q_i)} - \Lzeroone(y, \qbar)  = 
     \begin{cases}
        0.4 - 0 = 0.4 & \textnormal{if }y=1\\
        0.6 - 1 = -0.4 & \textnormal{if }y=2
     \end{cases}
 \end{align*}
 Alternatively, when $\qbar=2$,
  \begin{align*}
     \averagei{\Lzeroone(y, q_i)} - \Lzeroone(y, \qbar)  = 
     \begin{cases}
        0.4 - 1 = -0.6 & \textnormal{if }y=1\\
        0.6 - 0 = 0.6 & \textnormal{if }y=2
     \end{cases}
 \end{align*}
 For both $\qbar$, the value of $\averagei{\Lzeroone(y, q_i)} - \Lzeroone(y, \qbar)$ is dependent on the true label $y$.
 \paragraph{The multiclass case:} Define $\mathcal{Y}=\{1,2, \dots, k\}$, and the proportion of the $m$ models predicting each class as $p_1, \dots, p_k$.  We
 set $p_1=0.6$, $p_2=0.4$, and zero for all other classes.
 From the two-class case we know that when $\qbar \in \{1,2\}$, there is a dependency on $y$.  This persists for $\qbar \in \{3, \ldots, k\}$, where we have
   \begin{align*}
     \averagei{\Lzeroone(y, q_i)} - \Lzeroone(y, \qbar)  = 
     \begin{cases}
        0.4 - 1 = -0.6 & \textnormal{if }y=1\\
        0.6 - 1 = -0.4 & \textnormal{if }y=2\\
        1.0 - 0 = 1 & \textnormal{if }y=\qbar\\
        1.0 - 1 = 0 & \textnormal{for all other classes}.
     \end{cases}
    \end{align*}
    For all $\qbar\in\mathcal{Y}$, 
     the expression $\averagei{\Lzeroone(y, q_i)} - \Lzeroone(y, \qbar)$ is dependent on the label $y$.
     
\end{proof}

\newpage

\subsection{Bias/Variance/Diversity-Effect Decomposition: Weighted Voting}\label{appsubsec:weightedDiversityEffect}

We now prove a bias-variance-diversity-effect decomposition for a weighted majority vote combination.
Given an ensemble of classification models $q_1, \ldots, q_m$
each outputting a label prediction from $\{1, \ldots, k\}$, 
and weights for those models $\alpha_1(D), \ldots, \alpha_m(D)$, we consider the ambiguity-effect decomposition for weighted plurality vote.
The weighted majority vote is
\begin{align*}
    \qbar(\vectorx; D) = \argmin_{c \in \{1, \ldots, k\}} \sum_{i=1}^m \frac{\alpha_i(D)}{\sum_{j=1}^m \alpha_j(D)} \Lzeroone(c, q_i),
\end{align*}
similarly, the centroid for an ensemble member is given by
\begin{align*}
    \centroid{q}_i = \argmin_{c \in \{1, \ldots, k\}} \EDb{ \frac{\alpha_i(D)}{\EDb{\alpha_i(D)}} \Lzeroone(c, q_i)},
\end{align*}
with ties broken randomly (the tie break procedure can be thought of as part of the random variable $D$, since $D$ implicitly contains all sources of stochasticity related to the model).

\begin{theorem}[Ambiguity-Effect Decomposition for Weighted Majority Vote]
With this, we can define a weighted effect decomposition as
\begin{align*}
    \Lzeroone(y, \qbar) = \underbrace{\sum_{i=1}^m  a_i(D) \Lzeroone(y, q_i)}_{\textnormal{weighted average loss}} - \left[\underbrace{\sum_{i=1}^m  a_i(D) \Lzeroone(y, q_i) - \Lzeroone(y, \qbar)}_{\textnormal{ambiguity-effect}}\right],
\end{align*}
where $a_i = \frac{\alpha_i}{\sum_{j=1}^m \alpha_j}$.
\end{theorem}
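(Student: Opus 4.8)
The plan is to recognise that this decomposition, exactly like its unweighted counterpart in Proposition~\ref{prop:ambig_effect}, is a pure algebraic identity rather than a statement about the structure of the weighted majority vote. Writing $A \defeq \sum_{i=1}^M a_i(D)\,\Lzeroone(y, q_i(\vectorx; D))$ for the weighted average loss and $B \defeq \Lzeroone(y, \qbar(\vectorx; D))$ for the ensemble loss, the bracketed ambiguity-effect term is by definition $A - B$. Substituting this into the right-hand side gives $A - (A - B) = B$, which is precisely the left-hand side. So the identity holds by construction, and I would state this cancellation explicitly as the entire content of the proof.

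First I would emphasise that no property of $\qbar$, of the central votes $q_i^\ast$, or of the specific weighting $a_i = \alpha_i / \sum_{j} \alpha_j$ is actually invoked: the decomposition is valid for \emph{any} combiner rule and \emph{any} nonnegative weights, exactly because the ambiguity-effect is defined as the residual difference between the weighted average loss and the ensemble loss. This is the direct analogue of the unweighted derivation, where the $\tfrac{1}{M}$ weights are simply replaced by the normalised weights $a_i(D)$. Consequently there is no technical obstacle here; the only thing to verify is that the two copies of $A$ genuinely cancel, which is immediate.

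The single conceptual point worth flagging --- rather than a difficulty --- is the contrast with a \emph{genuine} ambiguity decomposition. By Corollary~\ref{corrollaryNoAmbiguity}, no target-independent ambiguity term can exist for the 0-1 loss under any combiner, so the quantity $A - B$ retains an explicit dependence on the target $y$ and cannot be simplified away. I would therefore close by noting that the value of the decomposition lies not in any simplification of the ambiguity-effect, but in the way it isolates, for each fixed $y$, the per-model change in loss incurred by using the individual members in place of the weighted vote. This is exactly the term later pushed through the effect decomposition of \citet{james1997generalizations} to obtain the weighted analogue of Theorem~\ref{the:diversity_effect}.
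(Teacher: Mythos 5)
Your proof is correct and matches the paper's own argument exactly: the paper likewise observes that the identity holds by cancelling the two copies of the weighted average loss on the right-hand side, with no property of $\qbar$ or the weights being invoked. Your additional remarks on target-dependence and the link to Corollary~\ref{corrollaryNoAmbiguity} are accurate context but not part of the proof itself.
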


The validity of this theorem can be verified simply by cancelling terms on the right-hand side. The theorem tells us that the loss of an ensemble can be decomposed into a non-negative term (the weighted average loss of the ensemble members), and an ambiguity-effect term, which quantifies how much better (or worse) the performance of the ensemble is compared to the average member loss.
Using the same principle, we can also construct a bias/variance-effect decomposition which takes into account a weighting $\alpha(D)$.

\begin{theorem}[Bias/variance-effect Decomposition for Weighted Majority Vote]
\begin{align*}
    \EDb{ \alpha \Lzeroone(y, q)} = \EDb{\alpha}\Lzeroone(y, \centroid{q}) + \Big[\EDb{\alpha \Lzeroone(y, q)} - \EDb{\alpha}\Lzeroone(y, \centroid{q}) \Big] 
\end{align*}
\end{theorem}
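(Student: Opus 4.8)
The plan is to recognise that this statement is an algebraic identity rather than a claim requiring substantive calculation, exactly as with the companion weighted ambiguity-effect decomposition stated just above it and the unweighted effect decompositions of \citet{james1997generalizations}. Writing $A \defeq \EDb{\alpha}\Lzeroone(y, f^\ast)$ and $B \defeq \EDb{\alpha \Lzeroone(y, f)}$, the right-hand side is literally $A + (B - A)$. First I would expand the bracketed term, observe that the two copies of $\EDb{\alpha}\Lzeroone(y, f^\ast)$ cancel, and read off
\[
A + (B-A) = B = \EDb{\alpha \Lzeroone(y, f)},
\]
which is the left-hand side. No convexity, no Jensen, and no appeal to the structure of the $0$-$1$ loss is needed for the identity to hold.

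The genuine content lies not in the identity but in the \emph{choice} of $f^\ast$. The decomposition is valid for any deterministic prediction; what makes the first term interpretable as a (weighted) bias-effect and the bracketed term as a (weighted) variance-effect is taking $f^\ast$ to be the weighted central vote, $f^\ast(\vectorx) = \argmin_{c} \EDb{\,(\alpha(D)/\EDb{\alpha(D)})\,\Lzeroone(c, f(\vectorx; D))}$. With this choice the bracketed term measures the expected $\alpha$-weighted increase in loss incurred by the fluctuating model $f$ relative to its weight-adjusted modal prediction $f^\ast$, directly paralleling the variance-effect in Theorem~\ref{the:james} but carrying the weighting $\alpha(D)$ through the expectation. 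I would therefore present the proof as a one-line cancellation while emphasising that the $\argmin$ definition of $f^\ast$ is what licenses the interpretation.

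There is no computational obstacle in the identity itself, so the care I would exercise is directed elsewhere, in two places. First, I would check that this two-term decomposition composes correctly with the weighted ambiguity-effect decomposition via the double-decomposition trick (Figure~\ref{fig:doubledecomp_effect}) to yield the full weighted bias-variance-diversity-effect decomposition; this requires that the normalisation $a_i = \alpha_i/\sum_j \alpha_j$ is handled consistently across both steps (note the unnormalised $\alpha$ and the factor $\EDb{\alpha}$ appearing here) and that the nested expectations over $D$ and $Y$ are taken in the correct order. Second, since Theorem~\ref{the:no_zo} rules out any target-independent variance term for the $0$-$1$ loss, I would be careful to describe the bracketed quantity strictly as an \emph{effect}, a difference of losses, and never claim independence from $y$ — which is precisely why the James \& Hastie effect formulation is required here. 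The main subtlety, then, is bookkeeping of the weights and the nested expectations, not any single hard analytic step.
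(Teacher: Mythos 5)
Your proof is correct and matches the paper exactly: the paper likewise observes that the identity holds ``immediately from considering which terms on the right cancel,'' i.e., the right-hand side is $A + (B - A) = B$. Your added emphasis that the substantive content lies in choosing $f^\ast$ as the weighted central vote (making the terms interpretable as bias-effect and variance-effect) mirrors the paper's own follow-up discussion of the normalised weighting $\alpha/\EDb{\alpha}$.
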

Again, the proof of the result is immediate from considering which terms on the right cancel. However, it is worth considering how the decomposition works and what the terms mean.
Consider the decomposition when we replace $\alpha$ with a normalised version $\frac{\alpha}{\EDb{\alpha}}$, we get:
\begin{align*}
    \EDb{\frac{\alpha}{\EDb{\alpha}} \Lzeroone(y, q)} = \Lzeroone(y, \centroid{q}) + \left[\EDb{\frac{\alpha}{\EDb{\alpha}} \Lzeroone(y, q)} - \Lzeroone(y, \centroid{q}) \right].
\end{align*}

This is exactly bias-variance-effect decomposition that we have seen previously, but re-weighting the contributions of the different data sets. 
In fact, the two are equivalent, with the weights defining a new probability density function. Taking $P_D(\mathcal{D})$ as the probability density function over data sets, the decomposition above is exactly the bias-variance-effect decomposition with the new probability density function $Q_D(\mathcal{D}) = P_D(\mathcal{D})\frac{\alpha(\mathcal{D})}{\EDb{\alpha(D)}}$.
We can also easily reintroduce label noise and expose a noise term, and turning the bias into a bias-effect:

\begin{align*}
    \EYsb{\EDb{\frac{\alpha}{\EDb{\alpha}} \Lzeroone(Y, q)}} =& \underbrace{\EYsb{\Lzeroone(Y, Y^\ast)}}_{\textnormal{noise}} + \underbrace{\EYsb{\Lzeroone(Y, \centroid{q}) - \Lzeroone(Y, Y^\ast)}}_{\textnormal{weighted bias-effect}} \\
    &\underbrace{\left[ \EDb{\EYsb{\frac{\alpha}{\EDb{\alpha}} \Lzeroone(Y, f) - \Lzeroone(Y, \centroid{q})}} \right]}_{\textnormal{weighted variance-effect}}.
\end{align*}

We can now apply similar proof methodology, and get the following bias-variance-diversity effect decomposition for weighted majority vote.

\begin{proposition}[Bias-Variance-Diversity-Effect for Weighted Voting]
    Given $m$ classifiers $q_1, \ldots, q_m$, where the ensemble  is a weighted majority vote, i.e., $\qbar = \argmin_{z \in \bregmanDomain} \sum_{i=1}^m \alpha_{i} \Lzeroone(z, q_i)$ for weights $\alpha_1, \ldots, \alpha_m \in \mathbb{R}_+$, the ensemble loss admits the following decomposition, where the normalised weight is $a_i={\alpha_i}/{\sum_{j=1}^m \alpha_j}$.
    \begin{align*}
       \EYsb{\EDb{\Lzeroone(Y, \qbar)}}
       &= \underbrace{\mystrut{1.2em}\EYsb{\Lzeroone(Y, Y^\ast)}}_{\textnormal{noise}} ~+~ \underbrace{\sum_{i=1}^m \EDb{a_i ~ \EYsb{\Lzeroone(Y, \centroid{q}_i) - \Lzeroone(Y, Y^\ast)}}}_{\textnormal{weighted~average~bias-effect}}\\
       &\hspace{-3.4cm}+ \underbrace{\sum_{i=1}^m\EDb{a_i ~ \EYsb{\Lzeroone(Y, q_i) - \Lzeroone(Y, \centroid{q}_i)}}}_{\textnormal{weighted~average~variance-effect}}  - \underbrace{ \EDb{ \EYsb{{\sum_{i=1}^m a_i ~ \Lzeroone(Y, q_i) - \Lzeroone(Y, \qbar)}}}}_{\textnormal{diversity-effect}} \Bigg],
    \end{align*}
    where $\centroid{q}_i = \argmin_{z \in \bregmanDomain} \EDb{\alpha_i \Lzeroone(z, q_i)} $, noting that $\alpha_i$ and $q_i$ are both dependent on $D$.
\end{proposition}

As before, the veracity of this result can be seen by cancelling terms on the right-hand side.
AdaBoost produces a set of binary classifiers $h_i\in\{-1,+1\}$ and corresponding weights $\alpha_i(D)\in \mathbb{R}$, so setting $q_i=h_i$ allows immediate application of the decomposition. 
LogitBoost does not produce classifier/weight pairs, but instead a set of regression models each $g_i \in \mathbb{R}$. We can apply the decomposition by separating these into sign/magnitude components, giving a classification $q_i = \sign(g_i)$ and and weight: $\alpha_i(D) = |g_i|$.

\newpage
\section{Proofs for Section~\ref{sec:bregman_diversity}}
\label{app:proofsSection5}


\subsection{Proof of Bregman Ambiguity / Bias-Variance-Diversity decompositions.}

Technically, the first two theorems in this sub-section are a corollary to the existence of a bias-variance decomposition for Bregman divergences \citep{pfau2013} combined with Theorem~\ref{the:gen_bvd}. However, for completeness and didactic purposes, we present separate proofs here. The first proof will use the Bregman three-point property \citep{banerjee2005clustering}:

\newcommand{\xone}{{\bf x}_1}
\newcommand{\xtwo}{{\bf x}_2}
\newcommand{\xthree}{{\bf x}_3}
\begin{definition}[Bregman three-point property]
    Given a convex set $\mathcal{S}\subseteq \mathbb{R}^k$, a strictly convex function $\phi:\bregmanDomain \rightarrow \mathbb{R}$, and any $\xone\in\bregmanDomain$ and $\xtwo,\xthree \in \textnormal{ri}(\mathcal{S})$, the following identity holds:
\begin{equation}
    \Bregman{\xone}{\xthree} = \Bregman{\xone}{\xtwo} + \Bregman{\xtwo}{\xthree} + \langle \xone-\xtwo ~,~ \nabla\phi(\xtwo)-\nabla\phi(\xthree) \rangle.
\end{equation}
\end{definition}

\BregmanAmbiguity*

\begin{proof}
\noindent We instantiate the three-point property with $\xone=\vectory$, $\xtwo=\vectorqbar$, and $\xthree=\vectorq_i$.
\renewcommand{\xone}{\vectory}
\renewcommand{\xtwo}{\vectorqbar}
\renewcommand{\xthree}{\vectorq_i}
\begin{equation}
    \Bregman{\xone}{\xthree} = \Bregman{\xone}{\xtwo} + \Bregman{\xtwo}{\xthree} + \langle \xone-\xtwo ~,~ \nabla\phi(\xtwo)-\nabla\phi(\xthree) \rangle
\end{equation}
Now average both sides of this expression over the $m$ models:
\begin{eqnarray}
    \averagei\Bregman{\xone}{\xthree} &=& \Bregman{\xone}{\xtwo} + \averagei\Bregman{\xtwo}{\xthree} + \averagei\Big[\langle \xone-\xtwo ~,~ \nabla\phi(\xtwo)-\nabla\phi(\xthree) \rangle\Big] \notag \\
    &=& \Bregman{\xone}{\xtwo} + \averagei\Bregman{\xtwo}{\xthree} + \langle \xone-\xtwo ~,~ \nabla\phi(\xtwo)-\averagei\nabla\phi(\xthree) \rangle \notag\\
    &=& \Bregman{\xone}{\xtwo} + \averagei\Bregman{\xtwo}{\xthree}, \label{app:eq:bregman_ambiguity_proof}
\end{eqnarray}
where in the final step we have rearranged the definition of $\vectorqbar$ to use that $\nabla\phi(\vectorqbar) = \averagei\nabla\phi(\vectorq_i)$.
Finally, rearranging the terms of \eqref{app:eq:bregman_ambiguity_proof} recovers the desired result, Equation~\eqref{eq:bregman_ambiguity}.
\end{proof}

\newpage

\bvdd*

\begin{proof}
    Take the expected risk of $\vectorqbar$, and apply the Bregman ambiguity decomposition:
    \begin{align}
        \EDb{\EXYb{\Bregman{\mathbf{Y}}{\vectorqbar}}} &= \EDb{\EXYb{\averagei \Bregman{\mathbf{Y}}{\vectorq_i}}} -  \EDb{\EXYb{\averagei \Bregman{\vectorqbar}{\vectorq_i}}}.\label{eq:bvdd_intermediate_1}
    \end{align}
    Now apply Pfau's decomposition, Equation~\eqref{eq:bregman_bv}, to the first term on the RHS, and we have
    \begin{align}
        \mathbb{E}_D \Big[ \EXY \Big[ \averagei B_{\phi}({\bf Y},\vectorq_i) \Big] \Big] =& \nonumber\\
        &\hspace{-3.5cm}    
        \Ex\Bigg[\mathbb{E}_{{\bf Y}|{\bf X}} \left[ \Bregman{{\bf Y}}{\vectorYstar}\right]
        ~+~  \averagei \Bregman{\vectorYstar}{\centroid{\vectorq}_i}
        ~+~ \averagei\mathbb{E}_D\left[  \Bregman{\centroid{\vectorq}_i}{\vectorq_i}\right]\Bigg]. \label{eq:bvdd_intermediate_2}
    \end{align}
Plugging Equation \eqref{eq:bvdd_intermediate_2} into (\ref{eq:bvdd_intermediate_1}) completes the proof.
\end{proof}

\ensbiasvariance*
\begin{proof}\\
Equation~\eqref{eq:ensemblebias} can be proven by applying Theorem~\ref{the:bregmanambiguity} to a set of centroid models, $\{\centroid{\vectorq}_i\}^m_{i=1}$. 
Equation~\eqref{eq:ensemblevariance} can be proven by applying~Theorem~\ref{the:bvdd} but substituting ${\bf y}=\centroid{\vectorqbar}$.
\end{proof}

\newpage

\subsection{Proofs for Section \ref{sec:furtherproperties}}
\label{app:furtherPropertyProofs}


{\bf Proposition 15}~{\em 
Assume a true probability vector, $\vectory\in \mathbb{R}^{k}$,
and a set of models $\{\vectorq_i\}_{i=1}^m$ combined by averaging, i.e., $\vectorq^\dagger=\averagei \vectorq_i$, then the cross-entropy loss is
\begin{align}
   \underbrace{\mystrut{2.em}-{ \vectory\cdot\ln {\vectorq^\dagger} }}_{\textnormal{ensemble cross-entropy}} ~=~ \underbrace{\mystrut{2.em}-\averagei \vectory\cdot\ln{\vectorq_i}}_{\textnormal{average cross-entropy}} ~-~ \underbrace{\mystrut{2.em}\sum_{c=1}^{k} {\vectory}^{(c)} \ln \dfrac{\frac{1}{m}\sum_{j=1}^{m} {\vectorq}_j^{(c)}} {\left(\prod_{i=1}^{m} {\vectorq}_i^{(c)}\right)^{\frac{1}{m}} }}_{\textnormal{ambiguity (label-dependent)}}, \label{app:eq:arithmetic_ambiguity}
\end{align}
where the second term is non-negative, thus the ensemble loss is guaranteed less than or equal to the average individual loss.\\
}

\begin{proof}
Take the average cross-entropy, and subtract the ensemble cross entropy:
\begin{align*}
       - \averagei \vectory \cdot \ln\vectorq_i - \Big[ -\vectory \cdot \ln \vectorq^\dagger\Big] &=  \sum_{c=1}^{k}\vectory^{(c)} \ln {\vectorq^\dagger}^{(c)} - \averagei \sum_{c=1}^{k} \vectory^{(c)}\ln \vectorq_i^{(c)} \\
       &=  \sum_{c=1}^{k}\vectory^{(c)} \ln {\vectorq^\dagger}^{(c)} - \sum_{c=1}^{k} \vectory^{(c)}\ln\Big( \prod_i \vectorq_i^{(c)} \Big)^{1/m} \\
       &=  \sum_{c=1}^{k}\vectory^{(c)}\ln \left( \frac{{\vectorq^\dagger}^{(c)}}{\prod_{i=1}^m\left( \vectorq_i^{(c)}\right)^{\frac{1}{m}}}\right)
\end{align*}
Using the definition of $\vectorq^\dagger$ and rearranging completes the derivation.
From the arithmetic-geometric mean inequality,
${\vectorq^\dagger}^{(c)}\geq \prod_{i=1}^m \left(\vectorq_i^{(c)}\right)^{{1}/{m}}$, implying that the term inside the logarithm is greater or equal to 1, and so the overall term is non-negative.
\end{proof}

\newcommand{\lastElem}[1]{\left( 1 - \sum_{c=1}^{k-1} (#1)^{(c)}\right)}


{\noindent \bf Proposition 16}~{\em Let $\vectorq^\dagger = \averagei \vectorq_i$, with $\vectorq_i\in [0,1]^k$. The expected cross-entropy admits the decomposition:
\begin{eqnarray}
    -\EDb{ {\vectory\cdot\ln{\vectorq^\dagger}  }} &=&\notag\\
        &&\hspace{-3.3cm} \underbrace{\mystrut{2.em}-\averagei\vectory\cdot\ln {\centroid{\vectorq}_i}}_{\textnormal{average bias}}
        \,+~\, \underbrace{\mystrut{2.em}\averagei\EDb{\KL{\centroid{\vectorq}_i}{\vectorq_i} }}_{\textnormal{average variance}} - \underbrace{\mystrut{2.em}\EDb{\sum_{c=1}^{k} {\vectory}^{(c)} \ln \dfrac{\frac{1}{m}\sum_{j=1}^{m} {\vectorq}_j^{(c)}} {\left(\prod_{i=1}^{m} {\vectorq}_i^{(c)}\right)^{\frac{1}{m}} } }}_{\textnormal{dependency}}.\notag
\end{eqnarray}}
\begin{proof}
    Starting with Equation~\eqref{app:eq:arithmetic_ambiguity} and taking the expectation over $D$, we have 
    \begin{align*}
    -\EDb{\vectory \cdot \ln \vectorq^\dagger} &= \EDb{-\averagei \vectory \cdot \ln \vectorq_i} - \EDb{\sum_{c=1}^{k} {\vectory}^{(c)} \ln \dfrac{\frac{1}{m}\sum_{j=1}^{m} {\vectorq}_j^{(c)}} {\left(\prod_{i=1}^{m} {\vectorq}_i^{(c)}\right)^{\frac{1}{m}} }}.
    \end{align*}
    Now, apply the KL bias-variance decomposition to the first term, and the result is proven.
\end{proof}

\bibliography{biblio}

\end{document}